\def\eqref#1{equation~\ref{#1}}
\def\1{\bm{1}}
\DeclareMathAlphabet{\mathsfit}{\encodingdefault}{\sfdefault}{m}{sl}
\SetMathAlphabet{\mathsfit}{bold}{\encodingdefault}{\sfdefault}{bx}{n}
\newcommand{\E}{\mathbb{E}}
\newcommand{\R}{\mathbb{R}}
\newcommand{\Var}{\mathrm{Var}}
\DeclareMathOperator*{\argmax}{arg\,max}
\newcommand{\llama}{LLaMa}
\newcommand{\vit}{Vision Transformers}
\newcommand{\Transformer}{Transformer}
\newcommand{\Transformers}{Transformers}
\newcommand{\NetToNet}{Net2Net}
\newcommand{\BertToBert}{bert2BERT}
\newcommand{\ligo}{LiGO}
\newcommand{\resnet}{ResNet}
\newcommand{\bert}{BERT}
\newcommand{\fc}{FC-layers}
\newcommand{\concat}[1]{\texttt{Concat}\left[#1\right]}
\newcommand{\LN}{\texttt{LN}}
\newcommand{\RMS}{\texttt{RMS}}
\newcommand{\MHA}{\texttt{MHA}}
\newcommand{\MLP}{\texttt{MLP}}
\newcommand{\Softmax}{\texttt{softmax}}
\newcommand{\Module}{\texttt{Module}}
\newcommand{\cmark}{\textcolor{Green}{\ding{51}}}%
\newcommand{\xmark}{\textcolor{red}{\ding{55}}}%
\newcommand{\no}{\textcolor{red}{No}}
\newcommand{\yes}{\textcolor{Green}{Yes}}
\newcommand{\NA}{\textcolor{red}{N/A}}
\newcommand{\Resnet}{ResNet}
\newcommand{\wresnet}{WideResNet}
\newcommand{\deit}{ViT}
\newcommand{\bb}[1]{\mathbb{#1}}
\newcommand{\x}{\mathbf{x}}
\newcommand{\M}{\mathbf{M}}
\newcommand{\W}{\mathbf{W}}
\newcommand{\Q}{\mathbf{Q}}
\newcommand{\K}{\mathbf{K}}
\newcommand{\V}{\mathbf{V}}
\newcommand{\qlow}{\mathbf{q}}
\newcommand{\klow}{\mathbf{k}}
\newcommand{\vlow}{\mathbf{v}}
\newcommand{\X}{\mathbf{X}}
\newcommand{\bias}{\mathbf{b}}
\newcommand{\0}{\mathbf{0}}
\newcommand{\expand}{\mathcal{V}}
\newcommand{\expando}{\mathcal{E}}
\newcommand{\B}{\mathcal{B}}
\newcommand{\id}{\texttt{Id}}
\newcommand{\conv}{\texttt{Conv}}
\newcommand{\BN}{\texttt{BN}}
\newcommand{\relu}{\texttt{ReLU}}
\newcommand{\avg}{\texttt{Avg}}
\newcommand{\T}{\intercal}
\newtheorem{example}{Example}[subsection]
\newtheorem{proposition}{Proposition}
\newtheorem{claim}{Claim}
\newtheorem{corollary}{Corollary}
\newtheorem*{remark}{Remark}
\newtheorem{definition}{Definition}
\title{LEMON: Lossless model expansion}
\author{Yite Wang$^{1,}\thanks{Work done during internship at ByteDance.}$\;, Jiahao Su$^{2,}\thanks{Corresponding author.}$\;, Hanlin Lu$^2$, Cong Xie$^2$, Tianyi Liu$^2$, Jianbo Yuan$^2$, \\
\textbf{Haibin Lin$^2$,  Ruoyu Sun$^{3,4}$, Hongxia Yang$^2$}  \\
$^1$University of Illinois Urbana-Champaign, USA \quad
$^2$ByteDance Inc.\\
$^3$Shenzhen International Center for Industrial and Applied Mathematics,
  \\ $ \;  $ Shenzhen Research Institute of Big Data\\ 
$^4$School of Data Science, The Chinese  University of Hong Kong, Shenzhen, China  \\
\texttt{yitew2@illinois.edu} \\
\texttt{\{jiahao.su, hanlin.lu, cong.xie, tianyi.liu, jianbo.yuan,} \\ \texttt{haibin.lin, hx.yang\}@bytedance.com} \\
\texttt{sunruoyu@cuhk.edu.cn} \\
% \And
% Ji Q. Ren \& Yevgeny LeNet \\
% Department of Computational Neuroscience \\
% University of the Witwatersrand \\
% Joburg, South Africa \\
% \texttt{\{robot,net\}@wits.ac.za} \\
% \AND
% Coauthor \\
% Affiliation \\
% Address \\
% \texttt{email}
}
\begin{document}

\maketitle

\begin{abstract}

Scaling of deep neural networks, especially \Transformers{}, is pivotal for their surging performance and has further led to the emergence of sophisticated reasoning capabilities in foundation models.
Such scaling generally requires training large models from scratch with random initialization, failing to leverage the knowledge acquired by their smaller counterparts, which are already resource-intensive to obtain.
To tackle this inefficiency, we present \textbf{L}ossl\textbf{E}ss \textbf{MO}del Expansio\textbf{N} (LEMON), a recipe 
to initialize scaled models using the weights of their smaller but pre-trained counterparts. This is followed by model training with an optimized learning rate scheduler tailored explicitly for the scaled models, substantially reducing the training time compared to training from scratch.
Notably, LEMON is versatile, ensuring compatibility with various network structures, including models like Vision Transformers and BERT.
Our empirical results demonstrate that LEMON reduces computational costs by 56.7\% for \vit{} and 33.2\% for BERT when compared to training from scratch.

\end{abstract}
\section{Introduction}

% The Transformer architecture, popularized by \citet{Transformer}, has become immensely popular due to its exceptional performance in various domains, including natural language processing and computer vision. One key advantage of Transformer is their remarkable scalability, surpassing that of alternative architectures. However, it is common practice to train larger versions of these models from scratch, discarding the valuable knowledge acquired by their smaller counterparts. This approach is wasteful, especially considering the substantial computational resources required to train large language models such as GPT \citep{GPT-3}, and the resultant huge carbon footprints. Consequently, researchers are actively exploring strategies to extract the prior knowledge from the smaller models, aiming to leverage it effectively. 

\begin{wrapfigure}[15]{r}{0.4\columnwidth}
\vspace{-0.3in}
\begin{center}
\centerline{\includegraphics[width=0.39\columnwidth]{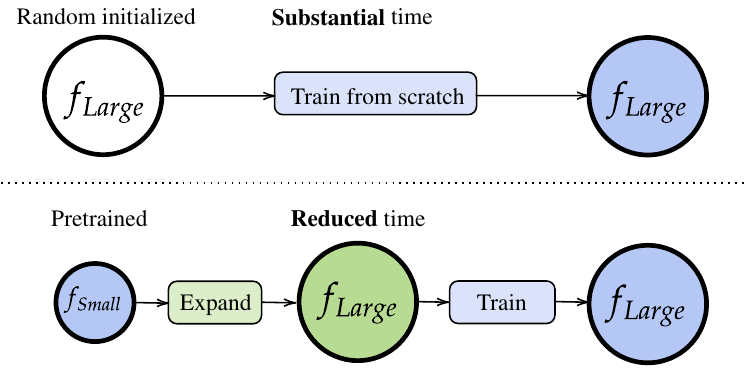}}
\caption{Comparison between training from scratch and model expansion.
Our method expands a smaller pre-trained model into a larger model without any performance drop.
The continual training of the large model requires significantly less training time than training from scratch. 
}
\label{fig:overview-expansion}
\end{center}
\vskip -1cm
\end{wrapfigure}

Deep neural networks (DNNs) have become increasingly popular, showcasing their adaptability across natural language processing to computer vision. Recent advances in architectural design, especially \Transformers{}, have further enhanced the scalability of DNNs. However, it is a common practice to train larger versions of these models from scratch, discarding the valuable knowledge acquired by their smaller counterparts. Such an approach can be highly inefficient, especially given the intensive computational resources required to train large language models such as Generative Pre-trained Transformer (GPT) \citep{GPT-3}, and the resultant huge carbon footprints. For instance, training GPT-3 incurs costs around \$4.6M \citep{gpt-price}.
Given these challenges, researchers are keenly exploring ways to leverage the prior knowledge of smaller models for more efficient scaling.

Knowledge inheritance and model expansion are two primary methodologies to achieve this goal. 
Knowledge inheritance \citep{KI}, the reverse of knowledge distillation \citep{hinton2015distilling}, allows the large model to learn the predictions of a smaller pre-trained model.
However, this method often necessitates additional computational resources and modifications to the training pipeline due to the involvement of a `teacher network.'
In contrast, model expansion directly utilizes the weights from the pre-trained small source network, either without training \citep{chen2015net2net, chen2021bert2bert, yang2020progressively, shen2022staged} or with negligible training \citep{LIGO}. Hence, our work mainly focuses on model expansion due to its minimal impact on the training pipeline and negligible computational overhead.

A compelling requirement for model expansion is to ensure it is `lossless,' meaning no information from the source model is lost. Specifically, the goal is for the larger target model to inherit the exact functional mapping as the smaller source model, thus preserving the performance. \NetToNet{} \citep{chen2015net2net} represents a foundational study of lossless model expansion for convolutional networks (CNNs) and multi-layer perceptrons (MLPs) where it duplicates neurons and averages their fan-out weights.
However, a challenge arises with the `symmetry breaking' issue. This problem occurs when duplicated neurons in expanded layers introduce redundancy, which persists during subsequent training. In this sense, the expanded model will never gain more capacity than the source model. To counteract this problem, previous researchers introduced additional noise into the expansion process, leading to a shift away from a genuine lossless expansion.

\Transformers{}, despite their rising popularity in modern deep learning, introduce additional complexities in achieving lossless expansion that goes beyond traditional issues like symmetry breaking. One key obstacle arises from the intricacy of the LayerNorm, which was evident when \BertToBert{} \citep{chen2021bert2bert} tried extending the \NetToNet{} approach to \Transformers{}, leading to lossy outcomes. Staged training \citep{shen2022staged} demonstrated the feasibility of lossless model expansion, but with a specific constraint: doubling the width during expansion and only for a variant of \Transformers{} known as Pre-Layer Normalization (Pre-LN) \Transformers{}. However, real-world applications often require width increases in the expanded model that are indivisible by the smaller source model's width, highlighting a limitation in existing methodologies. A typical scenario involves expanding the hidden dimension from 512 to 768.

In exploring the possibilities of lossless model expansion, our research focuses on the ability to \textbf{break symmetry}, handle \textbf{indivisible width and depth increments}, and remain compatible with almost \textbf{all \Transformer{} varieties}.
We have discovered affirmative answers, revealing that multiple solutions exist, enabling the selection of an optimal candidate to break the symmetry or find an initialization point with specific properties. Specifically, we break the symmetry of replicated neurons by setting their fan-out weights to be unequal, and we introduce average expansion to deal with LayerNorm for indivisible width increment.

In addition to lossless model expansion techniques, our study also delves into training recipes for the expanded models. It is often overlooked whether applying the original training recipe remains optimal or whether the expanded models necessitate tailored approaches.
Our empirical studies reveal two key insights: expanded models can benefit from utilizing a default maximum learning rate and, intriguingly, a learning rate scheduler that decays more rapidly.

Our contributions are summarized as follows:
\begin{enumerate}[topsep=0em,itemsep=0.1em,leftmargin=2em]
\item We propose LEMON, a suite of algorithms designed for lossless model expansion across a variety of architectures, ensuring compatibility with indivisible width and depth increments.   
\item Drawing inspiration from our empirical results, we propose an optimized learning rate scheduler for the expanded models. This scheduler maintains the maximum learning rate used by training from scratch, but features accelerated decay rates.
\item LEMON reduces the computational costs by up to 56.7\% for \vit{} and 33.2\% for \bert{} compared to training from scratch, thereby setting a new benchmark in performance.
\end{enumerate}
% \cxnote{I agree with JS, item 2 and 3 should be combined into 1 item of empirical results. Furthermore, I suggest to re-organize the contributions as: 1. we propose lemon, the first algorithm that systematically performs lossless...; 2. based on lemon, we propose a training receipt/strategy blablabla...; 3. empirical results with specific numbers...}

% However, when it comes to Transformer architectures, all existing model expansion methods have their own limitations. For example, Net2Net \citep{chen2015net2net} . Nevertheless, Net2Net introduces noise to break the symmetry, leading to another performance drop. Hence, we propose a family of lossless model expansion (LMI) methods, which generalize the idea of Net2Net while avoiding a substantial performance drop and effectively breaking the symmetry.

% Now, the question arises as to how one can select the most appropriate method from the family of lossless model expansion (LMI) methods. To address this question, we propose drawing inspiration from works on signal propagation (SP) \citep{saxe2013exact, pennington2017resurrecting, poole2016exponential} and learnable initialization methods \citep{dauphin2019metainit, zhu2021gradinit, gradcosine}. By leveraging insights from signal propagation and learnable initialization methods, we can effectively navigate the process of choosing an initialization point that exhibits superior optimization properties, such as being placed at a saddle point rather than a local minimum.
\section{Related works}
\label{sect:related}

\begin{table}[t]
    % \begin{center}
    % \vspace{-0.2cm}
    % \resizebox{\linewidth}{!}
    \centering
    % \caption{Overview of model expansion or knowledge inheritance methods.
    % In this table, \cmark{} or \xmark{} indicates whether the expansion process is lossless, and \NA{} means the method is not applicable. 
    % `Depth' denotes whether the expanded model has more layers than the small model, and ‘Width (divisible/indivisible)’ denotes whether the expanded model’s hidden dimension is a multiple of the small model's. `Free parameters' denotes the method's ability to allow flexibility in achieving different outcomes, such as symmetry breaking. `Data-free' specifies whether the algorithm requires training data. LEMON is the most versatile method compared to previous methods.}
    \caption{Overview of model expansion or knowledge inheritance methods.
     In the first three columns, we use symbols \cmark{}, \xmark{}, and \NA{} to denote whether the method is (1) lossless, (2) non-lossless, or (3) not applicable in the given scenarios.
    Here, ‘Depth’ represents the scenario where the large model has more layers than the smaller model, and ‘Width (divisible/indivisible)’ denotes whether the large model’s hidden dimension is a multiple of the smaller model's. In the subsequent columns, `Free parameters' denotes whether the method allows for choosing different target models (e.g., to avoid symmetry breaking issue). `Data-free' specifies whether the algorithm requires training data. LEMON is the most versatile method compared to previous methods.}
    % \jsnote{We haven't explained `free parameters.'}
    \vspace{-0.4cm }
    \resizebox{0.8\textwidth}{!}
    {
    % \begin{small}

    \label{tab:methods-overview}
    % \vspace{-2em}
    \begin{tabular}{ l ccccc}
    \\
    \hline
    \toprule
    % Network & \multicolumn{2}{c}{\textbf{ResNet-18}} &\multicolumn{2}{c}{\textbf{ResNet-50}} \\
    \textbf{Method} & \textbf{Depth} & \textbf{Width (divisible)} & \textbf{Width (indivisible)} & \textbf{Free parameters} & \textbf{Data-free}\\
    \midrule
    % Sparsity percentage & 89.26\% & 95.60\% & 89.26\% & 95.60\% \\ 
    % \midrule
    % (Dense Baseline) &69.98&&76.20& \\
    % \midrule
    % \cmidrule{2-3} \cmidrule{4-5}
    KI \citet{KI} & \xmark & \xmark & \xmark & \no  &  \no\\
    % \midrule
    StackBERT \citep{gong2019efficient} & \xmark & \NA & \NA & \no & \yes  \\
        % \midrule
    MSLT \citep{yang2020progressively} & \xmark & \NA & \NA & \no  & \yes\\
    % \midrule
    \BertToBert{} \citep{chen2021bert2bert} & \xmark & \cmark & \xmark  & \no   & \yes\\
    % \midrule
    Staged Training \citep{shen2022staged} & \cmark & \cmark & \NA & \no  & \yes\\
    % \midrule
    \ligo{} \citep{LIGO} & \xmark & \xmark & \xmark & \yes  & \no\\
    % \midrule
    \textbf{LEMON} (\textbf{Ours})  & \cmark&\cmark &\cmark & \yes & \yes \\
    \bottomrule
    \end{tabular}
    }
    % \end{center}

    \vspace{-0.5cm}

    % \end{center}
\end{table}
% \cxnote{I recommend to put this table at the begining of this paper, in introduction or related work, you could refer to this paper as an example: https://arxiv.org/pdf/2305.18627.pdf I actually dislike peter richtarik, but his paper writing skill is really good}

\textbf{From small models to larger models.}
There are two main approaches to transferring the knowledge of the smaller models to larger models: knowledge inheritance and model expansion. Knowledge inheritance \citep{KI} enables a student network to learn the logits provided by a teacher network. \NetToNet{} \citep{chen2015net2net} was the first work to explore the idea of model expansion. It involves randomly duplicating neurons while preserving the output values through proper normalization and increasing depth by adding identity layers. However, \NetToNet{} resorts to introducing weight perturbations to overcome symmetry, resulting in performance deterioration. Follow-up work \BertToBert{} \citep{chen2021bert2bert} extends \NetToNet{} to \Transformer{} while others study depth growth \citep{gong2019efficient, yang2020progressively, interp_chang2017multi, interp_dong2020towards}. Staged training \citep{shen2022staged} made significant progress by proposing a lossless model expansion method for Pre-LN \Transformer{}, but with the constraint of width doubling. \ligo{} \citep{LIGO} suggests employing multiple training steps to find an appropriate linear combination of weights from the source networks. Despite these advancements, all existing methods still face the challenge of the performance drop or strict restrictions on the model width.
\autoref{tab:methods-overview} compares the related methods.

\textbf{Network initialization.}
Numerous studies aim to seek optimal initialization methods for neural networks, primarily focusing on regulating the norm of network parameters \citep{glorot2010understanding, kaiminginit}. Theoretical works try to study these methods through dynamical isometry \citep{saxe2013exact} or mean field theory \citep{poole2016exponential}. Orthogonal initialization, which supports layer-wise dynamical isometry in fully-connected layers, has been extended to CNNs via Delta orthogonal initialization \citep{xiao2018dynamical}.
However, there has been limited research on initialization methods specifically for \Transformers{}. Most of these works focus on theoretical approaches to train \Transformers{} without skip connections or normalization layers \citep{noci2022signal, he2023deep}. Mimetic initialization \citep{trockman2023mimetic} seeks to initialize attention based on the principles of pre-trained \Transformers{}.

\textbf{Continual pre-training.}
Recent research explores adapting pre-trained networks for new or improved datasets. While some target datasets from different domains \citep{scialom2022fine, ke2022continual, gupta2023continual, qin2022elle}, others focus on datasets that evolve over time \citep{han2020econet, jang2021towards, loureiro2022timelms}. Model expansion is similar to continual pre-training, with the distinction being a change in the model size rather than the data distribution.

\section{Preliminaries}
\label{sect:prelim}

\textbf{Model expansion} aims to initialize a large model with the weights from its smaller pre-trained counterparts.
Concretely, suppose we have pre-trained weights $\theta_S$ in a source network $f_S(\cdot; \theta^{\text{trained}}_S)$,
our goal is to design a mapping $\theta^\text{expanded}_T = \mathcal{M}(\theta^{\text{trained}}_S)$, where the expanded weights initialize the target network as $f_T(\cdot; \theta^{\text{expanded}}_T)$. 
Since these expanded weights contain knowledge acquired by the small pre-trained model, it should accelerate the training of $f_T$ compared to random initialization. Moreover, we call a model expansion algorithm \textbf{lossless} if $f_T(\mathbf{x}; \theta^\text{expanded}_T) = f_S(\mathbf{x}; \theta^\text{trained}_S), \forall \mathbf{x}$.

An example for model expansion is to use a pre-trained \resnet-18 \citep{resnet} or \bert-Small ($f_S$) to facilitate the training of \resnet-50 or \bert-Base ($f_T$), respectively. Instead of training the larger models from scratch, the idea is to initialize them with the weights of their smaller pre-trained counterparts, i.e., \resnet-18 or \bert-Small, respectively.

\textbf{\Transformer{} architecture,} introduced by \citet{Transformer}, consists of multiple \Transformer{} blocks $g(\cdot)$, where each block is a stack of two modules, a multi-head attention (MHA) and a two-layer MLP.
Depending on the location of LayerNorm, \Transformer{} blocks can be categorized as (1) Post-Norm block used by the original \bert{} \citep{devlin2019bert}  where \LN{} is applied after the residual block, i.e., $g(x)=\LN(\Module(x)+x)$,
(2) Pre-Norm used by GPT \citep{GPT-3}, Pre-LN \bert{}, \vit{} \citep{vit}, and SWin \Transformer{} \citep{liu2021swin} where \LN{} is applied inside the residual connection and before all other transformations, i.e., $g(x)=x+\Module(\LN(x))$, and 
(3) Res-Post-Norm used by SWin \Transformer{} V2 \citep{liu2022swinv2} where \LN{} is applied inside the residual connection and after all other transformations, i.e., $g(x)=x+\LN(\Module(x))$. See \autoref{fig:illustration_norms} for an illustration.

\begin{wrapfigure}[12]{r}{0.44\columnwidth}
    \vspace{-0.25in}
    \centering
    \begin{subfigure}{0.14\columnwidth}
        \centering
        \includegraphics[width=0.99\textwidth]{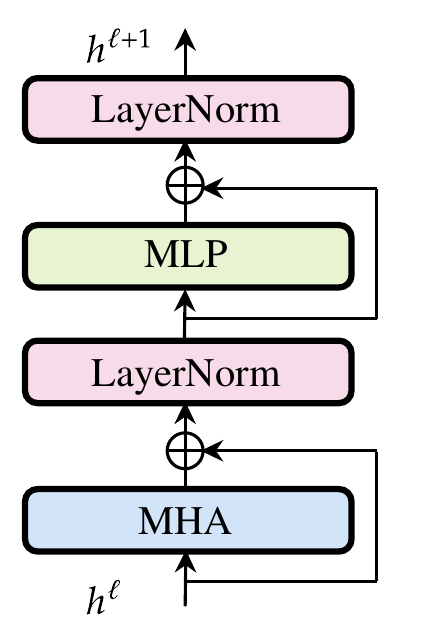}
        \caption{}
    \end{subfigure}
    \begin{subfigure}{0.14\columnwidth}
        \centering
        \includegraphics[width=0.99\textwidth]{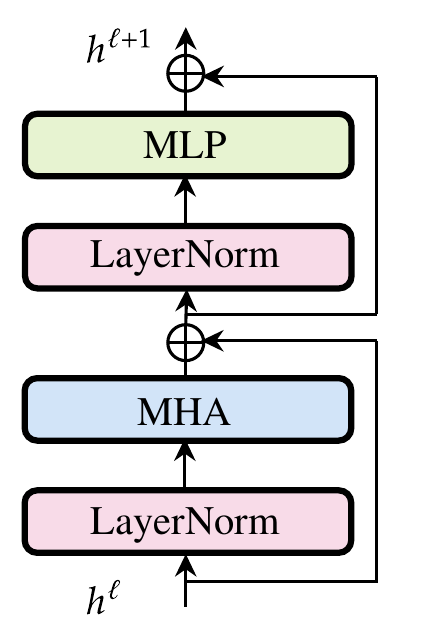}
        \caption{}
    \end{subfigure}
    \begin{subfigure}{0.14\columnwidth}
        \centering
        \includegraphics[width=0.99\textwidth]{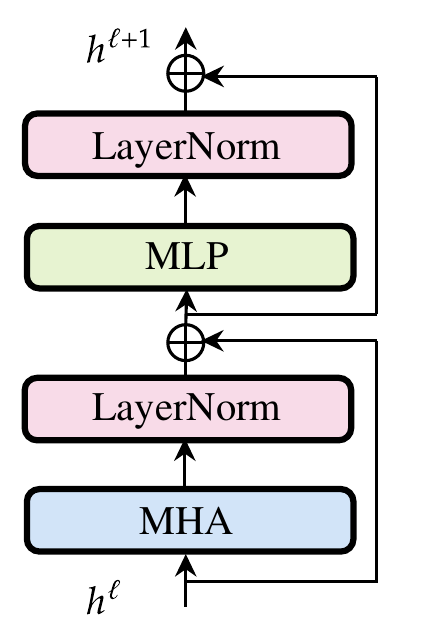}
        \caption{}
    \end{subfigure}
    \caption{Varieties of attention blocks.
    \textbf{(a)} Post-Norm block.
    \textbf{(b)} Pre-Norm block.
    \textbf{(c)} Post-Res-Norm block.}
    \label{fig:illustration_norms}
\end{wrapfigure}

\textbf{Multi-head attention (MHA)} uses multiple self-attention heads to attend to information from different representation subspaces of the input. Given an input sequence 
$\X \in \R^{E\times D}$, where $E$ is the sequence length, and $D$ is the embedding dimension, each head projects the inputs into different subspaces using linear transformations. For the $i$-th head, its query is defined as $\Q_i = \X\W^Q_i$, its key as $ \K_i = \X \W^K_i$, and its values as $\V_i = \X \W^V_i$, where $\W_i^Q, \W_i^K \in \bb{R}^{D\times d_K}$ and $\W_i^V \in \bb{R}^{D\times d_V}$. Here, $d_K$ and $d_V$ represent the dimensions of the key and value, respectively. Each head then computes 
the attention with $\text{Head}_i=\text{Attention}(\Q_i,\K_i,\V_i)=\Softmax\left(\Q_i\K_i^\T/\sqrt{d_K}\right)\V_i$. The outputs from all $H$ heads are concatenated and linearly transformed to yield the final output: 
\begin{align*}
    \text{MHA}(\X) = \concat{\text{head}_1, \cdots, \text{head}_H}\W^O, 
\end{align*}
where $\W^O \in \bb{R}^{Hd_V\times D}$ is the weight matrix. Please refer to \citet{Transformer} for more details.

\textbf{Weight symmetry.} Consider a two-layer MLP with two hidden neurons in the form of $\MLP(\x) = \mathbf{v}^\T\sigma(\W_1 \x) 
= v_1 \sigma(w_{1,1}x_1 + w_{1,2}x_2) + v_2 \sigma(w_{2,1}x_1 + w_{2,2}x_2)$, where $\sigma$ is the nonlinear activation, and $v_1,v_2$ are the weights associated with the hidden neurons. If the weights are initialized such that $v_1=v_2, w_{1,1}=w_{2,1}, w_{1,2}=w_{2,2}$, the two neurons will always compute identical values throughout training. This symmetry results from the fact that, at each iteration, the gradients for the corresponding weights are the same, i.e., $\dot{w}_{1,1}=\dot{w}_{2,1}, \dot{w}_{1,2}=\dot{w}_{2,2}$. Weight symmetry is significant as it implies that the two symmetric neurons do not contribute independently to the model's learning, potentially harming the model's expressive power and learning capability.

% = [v_1, v_2] \sigma\left(
% \begin{bmatrix}
%     w_{1,1} & w_{1,2} \\ 
%     w_{2,1} & w_{2,2}
% \end{bmatrix}  \begin{bmatrix}
%     x_1 \\ 
%     x_2
% \end{bmatrix}\right) 

% The idea of lossless model expansion for \mlp and \convnets was first introduced in Nt \citep{chen2015net2net}. bert2BERT-FPI \citep{chen2021bert2bert} tries to extend Net2Net to Transformer architectures but fails due to the LayerNorm \citep{layernorm} adopted.

% \jsnote{we can combine the two paragraphs above into one.}
% \jsnote{The introduction to model expansion is too complicated/abstract --- when you introduce an abstract concept, try to explain it using an example (in Transformer).}
% \jsnote{We may also want to add the decomposition property here, that is, if $f_1$ and $f_2$ are function preserving, $f = f_1 \circ f_2$ is also function preserving. Otherwise, it can be hard to explain why our method is function preserving.}
\section{Our method: Lossless model expansion}
\label{sect:method}

\begin{figure}[h]
    \centering
    \begin{subfigure}[b]{0.6\columnwidth}
    \centering
    \includegraphics[width=0.99\textwidth]{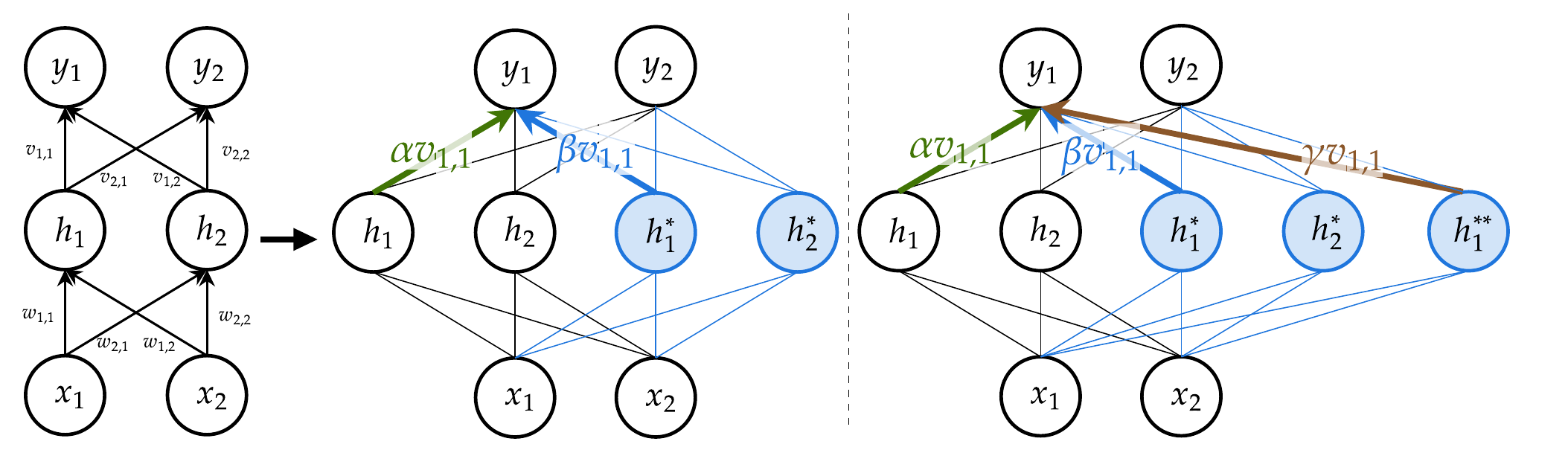}
    \caption{Width expansion of MLP from $2$ to $4$  (\textbf{left}) or $5$ (\textbf{right}).}
    \label{fig:method-divisible}
    % \subcaptionbox{Train loss (LR) \label{fig:lr_study_train}}[0.49 \linewidth] {}
    % \subcaptionbox{Train loss (LR) \label{fig:lr_study_train}}[0.48 \linewidth] {}
    \end{subfigure}
    % \begin{subfigure}[b]{0.3\columnwidth}
    % \centering
    % \includegraphics[width=0.99\textwidth]{}
    % \caption{Expand the hidden dimension of MLP from 2 to 5.}
    % \label{fig:mlp-nondivisible}
    % \end{subfigure}
    \begin{subfigure}[b]{0.33\columnwidth}
    \centering
    \includegraphics[width=0.99\textwidth]{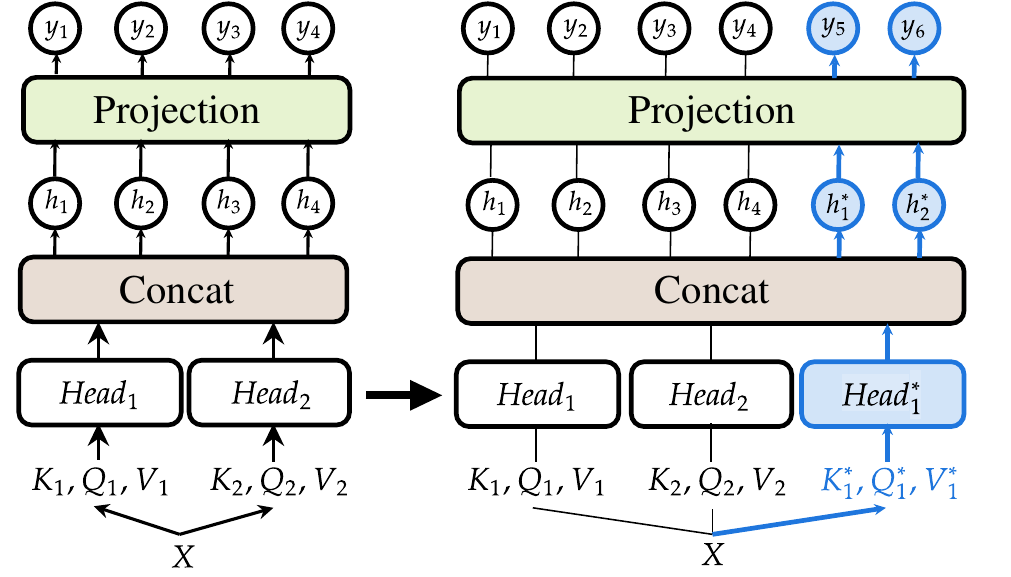}
    \caption{Expand the number of heads in MHA from 2 to 3.}
    \label{fig:method-heads}
    \end{subfigure}
    \caption{Lossless width expansion with symmetry breaking 
    for multi-layer perceptron (MLP) and multi-head attention (MHA).
    \textbf{(a)} \textbf{Left}: MLP expansion with divisible width. We replicate neurons $h_1$\big\slash$h_2$ to $h_1^\ast$\big\slash$h_2^\ast$ and set $\alpha+\beta=1$ with $\alpha \neq \beta$.
    % \textbf{(b)}
    \textbf{Right}: MLP expansion with indivisible width. We further replicate the neuron $h_1$ to $h_1^{\ast\ast}$ and set $\alpha+\beta+\gamma=1$ with $\alpha \neq \beta \neq \gamma$.
    \textbf{(b)} MHA expansion with head dimension unchanged. We duplicate $\text{Head}_1$ to $\text{Head}^\ast_1$ (i.e., duplicate key/query/value projections) and expand the projection layer as in an MLP module.}
\end{figure}

% The core issue of lossless model expansion for \Transformers{} lies in the LayerNorm \citep{chen2021bert2bert}, as partial neuron replication can cause shifts in the input's mean and variance. Hence, we first categorize two expansion scenarios depending on whether $L_T$ is divisible by $L_S$.
% For the divisible case, we present a set of lossless model expansion methods for all introduced \Transformer{} architectures, that can break symmetry; for the indivisible case, we are the first to propose lossless model expansion for Pre-LN and Res-Post-Norm \Transformer{} architectures.
% \jsnote{We can make our claim stronger. For example, our contribution are three-fold: (lossless) symmetric breaking (with free parameters), (lossless) depth expansion (instead of simple stacking), and average inflation.}

We decompose the expansion operator $\mathcal{M}$ to two operators, i.e. the depth expansion operator $\mathcal{D}$ and the width expansion operator $\mathcal{W}$, each applied to individual layers.

Our expansion method mainly consists of three main components, i.e.,
(1) general lossless width expansion with symmetry breaking, (2) average width expansion for LayerNorm, and (3) lossless depth expansion. 
In the expansion process, each layer is independently subjected to these methods, ensuring a layer-level lossless expansion. This entails a systematic, recursive application of duplicating inputs for each layer in a lossless manner, and every layer, in turn, guarantees the lossless duplication of its output.
% During model expansion, we apply these expansion methods to each layer independently, as ensuring lossless expansion at the layer level guarantees lossless model expansion as a whole. Intuitively, inputs of the layer are duplicated in a lossless way, and each layer ensures its output is duplicated in a lossless way. And this process applies recursively. 
 
 % In the following sections, we mainly consider Pre-LN \Transformer{} architecture \citep{preln}, which is used by most \Transformers{} including GPT \citep{GPT-3} and \vit{} \citep{vit}. We defer Post-Norm and Res-Post-Norm scenarios to the Appendix. \jsnote{I suggest removing this paragraph -- we can move material to appendix if needed, but we don't want to emphasize this here.}

% We decompose the expansion operator $\mathcal{M}$ to two operators, i.e. the depth expansion operator $\mathcal{D}$ and the width expansion operator $\mathcal{W}$, each applied to individual layers. \jsnote{I think the operator descriptions are more suitable in appendix.} We address each layer independently, as ensuring lossless expansion at the layer level guarantees lossless model expansion as a whole.\footnote{This is a subtle point.} 
% \jsnote{We still need to give an intuitive (but not necessarily rigorous)  explanation here --- If the input is duplicated, the output is also duplicated, and this applies recursively.}

\subsection{General lossless width expansion with symmetry breaking}
\label{subsect:divisible}

We first show how to apply lossless expansion with symmetry breaking for (1) fully-connected layers (FC-layers) and (2) multi-head attention (MHA).
% , and (3) the embedding layer. 
% and (4) the decoder layer.
% for \Transformers{}. \jsnote{I think we can omit the decoder layer, or make a footnote.} \textbf{Importantly, it's worth highlighting that these operators also work for the indivisible case.} Then, we show that the LayerNorm is not an issue for lossless model expansion, which concludes the introduction of our method for the divisible case.

\textbf{Lossless width expansion for \fc.} \Transformers{} consist of a set of \fc. We first use MLP as an example to show the basic width expansion operator for the \fc. 

For width expansion, we create copies of neurons similar to \NetToNet{} and \BertToBert{}, as this  step is necessary due to the nonlinear activation used in MLP. However, the essential difference is that we do \textbf{NOT} set the fan-out weights of replicated neurons to be equal. Out of simplicity, we use a single-hidden-layer MLP for illustration, and we show it on the left half in \autoref{fig:method-divisible} . We first replicate neurons $h_1, h_2$ to $h^*_1, h^*_2$ in a circular pattern. Consider the same neurons $h_1$ and $h_1^*$ in the plot with the original fan-out weight $v_{1,1}$; we can set the expanded fan-out weights to be $\alpha v_{1,1}$ and $\beta v_{1,1}$ where $\alpha +\beta = 1$ to ensure lossless expansion. 

% \jsnote{We haven't mentioned how to deal with cases where neurons are replicated more than twice.}
 
% In the following example, we perform lossless width expansion $g\rightarrow \mathcal{W}[g]$ with free parameters $\alpha$, $\beta_1$, and $\beta_2$.
% \begin{align} g(x_1) = 
%         v      \sigma\left( wx_1\right),
% \mathcal{W}[g]\left(
% \begin{bmatrix}
%     x_1 \\
%     x_1
% \end{bmatrix}
% \right) =  
%     \begin{bmatrix}
%         \redtxt{\alpha v}&\redtxt{(1-\alpha) v}    
%     \end{bmatrix} 
%     \sigma\left( 
%     \begin{bmatrix}
%         \redtxt{\beta_1 w} & \redtxt{(1-\beta_1)w} \\ 
%         \redtxt{\beta_2w} & \redtxt{(1-\beta_2)w} \\
%     \end{bmatrix}
%     \begin{bmatrix}
%         x_1 \\
%         x_1
%     \end{bmatrix}\right)
% \end{align}

% \begin{wrapfigure}[12]{r}{0.4\columnwidth}
% % \vskip 0.2in
% \begin{center}
% \centerline{\includegraphics[width=0.39\columnwidth]{}}
% \caption{Illustration of MHA expansion.}
% \label{fig:method-heads}
% \end{center}
% \vskip -1cm
% \end{wrapfigure}

The selection of $(\alpha, \beta)$ corresponds to a specific lossless model expansion algorithm, and our method can be considered as a generalization of existing model expansion methods. Specifically, \NetToNet{} and \BertToBert{} perform width expansion by setting $\alpha=\beta=1/2$. However, such a choice causes weight symmetry problems where two neurons learn the exact same representations when it is initialized and for the subsequent training.  We introduce a simple modification to fix the issue, i.e., by setting $\alpha \neq \beta$ is enough to break symmetry for commonly-used nonlinear activation $\sigma$. This concept extends to cases where neurons are replicated more than twice, illustrated on the right half of \autoref{fig:method-divisible}. In such cases, we set coefficients such that $\alpha+\beta+\gamma=1$ and $\alpha \neq \beta \neq \gamma$.

\textbf{MHA expansion.} We make sure that we directly copy the entire head in a circular pattern similar to \fc{} as mentioned in the previous section. We then perform width expansion for the corresponding key, query, and value matrices. Then, it reduces to a case similar to MLP due to the following projection matrix. Symmetry breaking is realized by setting the corresponding fan-out weights in the projection matrix differently. We illustrate the process in \autoref{fig:method-heads}.
% \jsnote{Better to display Q, K, V.}

% \jsnote{The visualization is not accurate (technically correct). (1) K, Q, V are not shared across different heads; (2) There is one other projection layer after concatenation.}
% Note that this operator also works for the indivisible case.

% \textbf{The embedding layer.} For the embedding layer, we also replicate the embeddings in a circular pattern. For example, we increase the dimension of the embedding of each token from $\mathbf{x}=[x_1, \cdots, x_2]$ to $\concat{\mathbf{x}, \cdots, \mathbf{x}}=[x_1, x_2, \cdots, x_1, x_2]$. 
% \jsnote{The explanation here could be confusing --- people usually understand the embedding as a matrix.}

% \footnote{Note that the proposed width expansion also works for Pre-LN Transformer architecture.}

% \textbf{Decoder layers.} For vision models, the decoder layer is simply a fully-connected layer. However, for language models, encoder weights are shared with the word embeddings. To make model expansion lossless, we need to scale the weights (and bias) of the previous layer, which is usually a LayerNorm layer. \jsnote{I think we can move this into appendix to save some space.}

% \textbf{LayerNorm layers.} For the divisible case, LayerNorm is not a problem for Pre-LN transformer architecture if we copy each neuron the same number of times. Specifically, it is easy to show that $\LN(\concat{\mathbf{x}, \cdots, \mathbf{x}})=\concat{\LN(\mathbf{x}), \cdots, \LN(\mathbf{x})}$. \jsnote{Refer to last subsection.}
\subsection{Average width expansion for LayerNorm}
\label{sect:method-average}

% \jsnote{This subsection is too technical. Try to mimic the style of the first subsection --- more figures, less equations.}

When dealing with indivisible width increments, we need to design a specific expansion method for the LayerNorm layer. In this section, we demonstrate that achieving a lossless expansion is feasible provided that \fc{} are positioned before the LayerNorm layer.

\begin{wrapfigure}[13]{r}{0.48\columnwidth}
\vspace{-0.35in} 
\begin{center}
\centerline{\includegraphics[width=0.45\columnwidth]{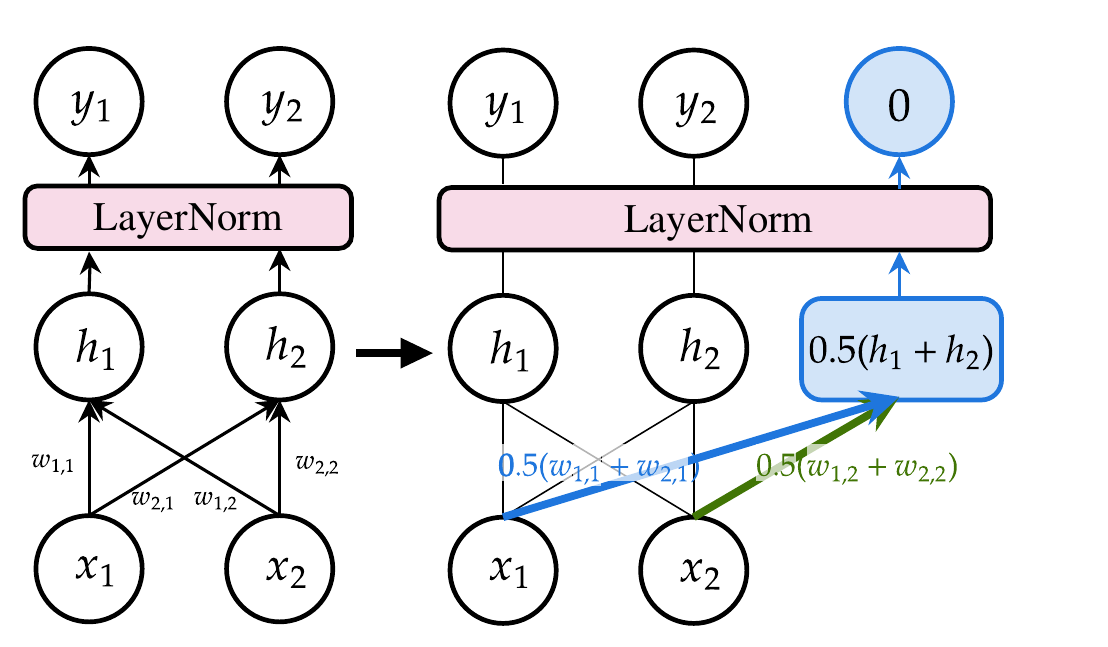}}
\caption{Lossless average expansion. When the fully-connected layer right before LayerNorm is average expanded, the output of LayerNorm is expanded with zeros.}
\label{fig:method-nondivisible}
\end{center}
% \vskip -1cm
\end{wrapfigure}

\textbf{Average width expansion.} We first show that it is easy to perform the average expansion method such that the output of \fc{} is padded with its average. We do so by adding neurons whose weights are the average of existing neurons. Specifically, we pad the original weight $\mathbf{W} \in \mathbb{R}^{D_\text{out} \times D_\text{in}}$ with rows $1/D_\text{out} \sum_i^{D_\text{out}} \mathbf{W}[i]$, and pad bias $\mathbf{b} \in \mathbb{R}^{D_\text{out}} $ with $1/D_\text{out} \sum_i^{D_\text{out}} \mathbf{b}[i]$.\footnote{Input dimension should be expanded as well depending on how inputs are expanded.} See \autoref{fig:method-nondivisible} for an illustration. 
% consider the neurons $h=g(\mathbf{x}) = \begin{bmatrix}
%     h_1 \\
%     h_2
% \end{bmatrix} = 
% \begin{bmatrix}
%         w_{1,1} & w_{1,2} \\ 
%         w_{2,1} & w_{2,2}
%     \end{bmatrix}
%     \begin{bmatrix}
%         x_1 \\
%         x_2
%     \end{bmatrix}$, then the average width expansion works as follows:
% \begin{align} 
%     \mathcal{W}(g) = 
%     \begin{bmatrix}
%         v_1&\redtxt{\alpha v_2}&\redtxt{(1-\alpha) v_2}    
%     \end{bmatrix} \sigma\left( \begin{bmatrix}
%         \redtxt{\beta_1 w_{1,1}} & w_{1,2} & \redtxt{(1-\beta_1)w_{1,2}}\\ 
%         \redtxt{\beta_2 w_{2,1}} & w_{2,2} & \redtxt{(1-\beta_2)w_{2,2}}\\
%         \redtxt{\beta_3 w_{1,1}} & w_{1,2} & \redtxt{(1-\beta_3)w_{1,2}} \\
%     \end{bmatrix}
%     \begin{bmatrix}
%         x_1 \\
%         x_2 \\ 
%         x_1 \\
%     \end{bmatrix}\right)
% \end{align}
% \begin{align} 
%     h' = \mathcal{W}[g](\mathbf{x}) = \begin{bmatrix}
%         h_1 \\
%         h_2 \\
%         \frac{1}{2}(h_1+h_2)
%     \end{bmatrix} = 
% \begin{bmatrix}
%         w_{1,1} & w_{1,2} \\ 
%         w_{2,1} & w_{2,2} \\ 
%         \frac{1}{2}(w_{1,1}+w_{2,1}) & \frac{1}{2}(w_{1,2}+w_{2,2})
%     \end{bmatrix}
%     \begin{bmatrix}
%         x_1 \\
%         x_2 \\ 
%     \end{bmatrix}.
% \end{align}
% This can easily extend to the cases where inputs are either (1) average or (2) circularly expanded. 
% We also provide an illustration on applying width expansion on multi-head attention in \autoref{fig:illustration_mha}.

\textbf{LayerNorm layer.} We now show that if the input of LayerNorm is average expanded, lossless width expansion is possible. Specifically, consider LayerNorm layer with element-wise affine-transformation in the form of $\LN(\cdot; \bm{\mu}, \mathbf{b}) = \bm{\mu} \odot \texttt{Norm}(\cdot) + \mathbf{b}$, where $\bm{\mu}, \mathbf{b} \in \mathbb{R}^{D_S}$ and $D_T\leq 2D_S$. Define average expanded of $\mathbf{x} \in \mathbb{R}^{D_S}$ to be $\mathbf{x}^* \in \mathbb{R}^{D_T}$.
% \begin{align}
%     \begin{bmatrix}
%         \sqrt{2/3}w_1 \\
%         \sqrt{2/3}w_2 \\
%         \zeta \\
%     \end{bmatrix}
%     \odot 
%     \text{Norm}\left(
%     \begin{bmatrix}
%         h_1 \\
%         h_2 \\
%         \frac{1}{2}(h_1+h_2)
%     \end{bmatrix}
%     \right) =
%     \begin{bmatrix}
%         \begin{bmatrix}
%             w_1 \\
%             w_2 \\
%         \end{bmatrix}
%         \odot 
%         % \multirow{2}{\LN()}
%         \text{Norm}\left(\begin{bmatrix}
%             h_1 \\
%             h_2
%         \end{bmatrix}\right) \\
%         0 
%     \end{bmatrix} = 
%     \begin{bmatrix}
%         \LN\left(\begin{bmatrix}
%             h_1 \\
%             h_2
%         \end{bmatrix}\right) \\
%         0 
%     \end{bmatrix},
% \end{align}
 It can be shown that $\LN(\mathbf{x}^*;\bm{\mu}^*,\mathbf{b}^*) = \concat{\LN(\mathbf{x};\bm{\mu},\mathbf{b}), \mathbf{0}}$ if $\bm{\mu}^*=\concat{\eta\bm{\mu}, \bm{\zeta}}$ and $\mathbf{b}^*=\concat{\mathbf{b}, \mathbf{0}}$, where $\mathbf{0} \in \mathbb{R}^{D_T-D_S}$ is a zero vector, $\bm{\zeta} \in \mathbb{R}^{D_T-D_S}$ is an arbitrary vector,  and $\eta=\sqrt{(D_S / D_T)}$ is a scalar. See \autoref{sect:proof-layernorm} for results and proof with a more generalized case where $D_T \geq D_S$.

% \textbf{Embedding layers.}
% For embedding layers, we simply pad with the average of the all embedding values. For example, we increase embedding from $[x_1, x_2]$ to $[x_1, x_2, \frac{1}{2}(x_1+x_2)]$.

\subsection{Lossless depth expansion}
\label{sect:method-depth}
\begin{figure}[ht]
    \centering
    \begin{subfigure}[b]{0.32\columnwidth}
    \centering
    \includegraphics[width=0.99\textwidth]{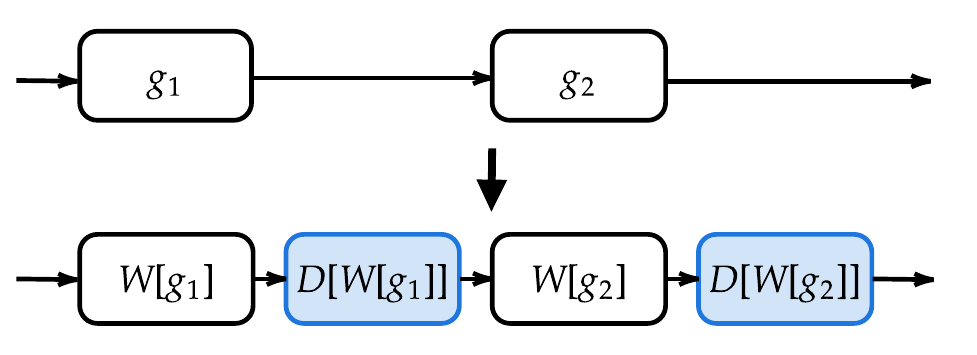}
    \caption{Arrangement of block stacking.}
    \label{fig:method-depth}
    \end{subfigure}
    \begin{subfigure}[b]{0.27\columnwidth}
    \centering
    \includegraphics[width=0.99\textwidth]{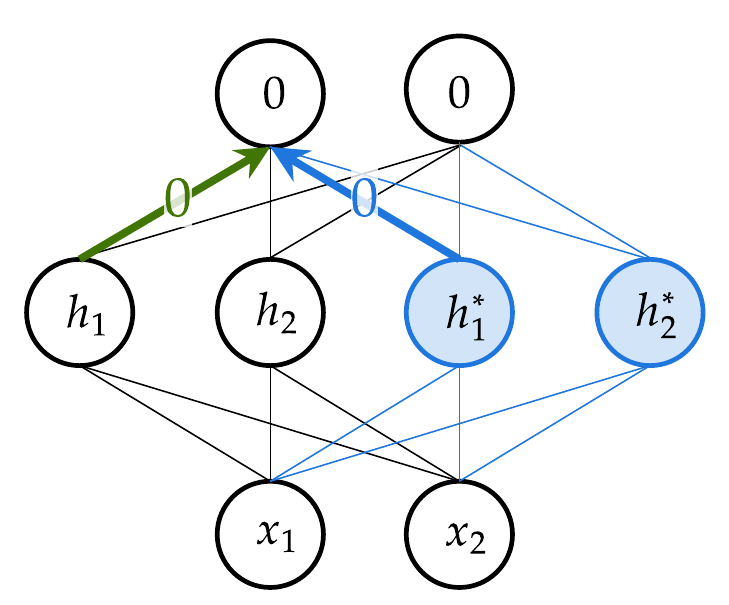}
    \caption{Type-1 depth expansion.}
    \label{fig:depth-zero}
    \end{subfigure}
    \begin{subfigure}[b]{0.27\columnwidth}
    \centering
    \includegraphics[width=0.99\textwidth]{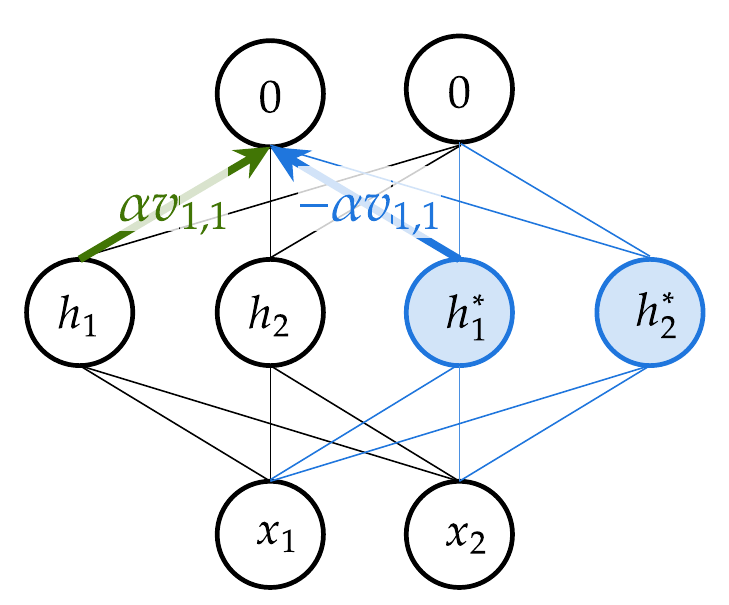}
    \caption{Type-2 depth expansion.}
    \label{fig:depth-cancel}
    \end{subfigure}
    \caption{Lossless depth expansion.
    \textbf{(a)} We place a new block next to the block where it originates.
    \textbf{(b)} For type-1 depth expansion, we set the weights of the last fully-connected layer to zeros.
    \textbf{(c)} For type-2 depth expansion, we specify the weights of the last fully-connected layer so that the contributions from replicated neurons cancel each other. For example, assume $h^\ast_1$ is a duplicate of $h_1$, we set their fan-out weights to be $\alpha v_{1,1}$ and $-\alpha v_{1,1}$ to enforce zero output.}
    \vspace{-0.5cm}
\end{figure}

In this section, we detail our approach for increasing model depth in a lossless manner.

\textbf{Arrangement of added layers.} Similar to how \citet{interp_chang2017multi, interp_dong2020towards} deal with \Resnet{}, we put added layers directly next to the source layer. For example, when expanding two-layer network with blocks $\{g_1, g_2\}$, we perform depth expansion with the resulting model $\left\{\mathcal{W}[g_1], \mathcal{D}[\mathcal{W}[g_1]], \mathcal{W}[g_2], \mathcal{D}[\mathcal{W}[g_2]]\right\}$. See \autoref{fig:method-depth} for an illustration.

% We first recap how previous works implement depth expansion. 
% On one hand, Net2Net proposed lossless depth expansion for MLP by adding identity transformations. It uses the fact that for ReLU function $\sigma$, $\sigma(\mathbf{I}\sigma(\mathbf{x}))=\sigma(\mathbf{x}), \forall \mathbf{x}$. However, such a property does not hold for GELU \citep{gelu} and many other activation functions. On the other hand, bert2BERT simply stacks shallow layers, making its depth expansion non-lossless. \jsnote{We can move this to `Comparison to other methods.'.}
\textbf{Lossless depth expansion.}  We now provide two ways to perform lossless depth expansion.

Firstly, we can simply set the output of each module (MLP or MHA) to be zero, i.e. $\alpha=\beta=0$. Hence, the residual branch does not contribute to the output. This choice gives great flexibility to the rest of the parameters since we can (1) copy weights from other layers or (2) randomly initialize the weights. See \autoref{fig:depth-zero} for an illustration.
% To retain the information learned by blocks, we keep all weights intact and only modify the weight of the last layer, i.e., the last fully-connected layer for MLP and projection layer for attention block, respectively.

Secondly, we can enforce the output to be zero by setting the summation of fan-out weights for replicated neurons to zero. With the example shown in \autoref{fig:method-divisible}, we can set the fan-out weights of replicated neurons  to be $\alpha=-\beta\neq 0$ to ensure all outputs are zeros.\footnote{If neurons are not replicated, then we have to set the fan-out weights to be zero.} See \autoref{fig:depth-cancel} for an illustration.
% \jsnote{We also need to visualize these two methods.}

% In the following example, we show depth expansion 
%  $g \rightarrow \mathcal{D}[\mathcal{W}[g]]$ by setting the fan-out weight of $\mathbf{h}=\mathbf{Wx}=[h_1,h_1]$ with $[\mu,-\mu]$, where $\mu$ is a free parameter. 
% \begin{align}
%     g(x_1) = 
%         v      \sigma\left( wx_1\right),
%     \mathcal{D}[\mathcal{W}[g]]\left(
%     \begin{bmatrix}
%         x_1 \\
%         x_1
%     \end{bmatrix}
%     \right) = 
%     \begin{bmatrix}
%         \redtxt{\mu}&\redtxt{-\mu}    
%     \end{bmatrix} 
%     % [\redtxt{\mu}, \redtxt{-\mu}]
%     \sigma\left( 
%     \begin{bmatrix}
%         \redtxt{\beta_1 w} & \redtxt{(1-\beta_1)w} \\ 
%         \redtxt{\beta_2w} & \redtxt{(1-\beta_2)w} \\
%     \end{bmatrix}
%     \begin{bmatrix}
%         x_1 \\
%         x_1
%     \end{bmatrix}\right) = 0
% \end{align}

% \jsnote{We can move this paragraph to the beginning of the subsection.}

% Consider the input of added block is the output of LayerNorm with weight $\mathbf{w}$ and bias $\mathbf{b}$, i.e. $\mathbf{x}_\text{in} = \mathbf{w}\odot \hat{\mathbf{x}}+\mathbf{b}$, where $\hat{x}$ is the normalized vector. Then we can simply set weight and bias of added block to be $1/\mathbf{w}$ and $-\mathbf{b}/\mathbf{w}$.

\subsection{A summary of implementing model expansion} 
We summarize the procedure of model expansion for Pre-LN \Transformer{} architecture with both depth and width increments. We first average expand the embedding weights. Then, make sure the output of each layer is average expanded. Hence, the input to the decoder layer is the original output padded with zeros after the last LayerNorm. We provide a detailed description of our expansion method in \autoref{sect:detail-expansion-procedure}. Furthermore, we explain how to use our method for Post-LN and Res-Post-Norm architectures in \autoref{sect:other-norms}.

% \begin{figure}
%     \centering
%     \begin{subfigure}{\linewidth}
%         \centering
%         \includegraphics[width=0.99\textwidth]{plots/mha_schematic_orig.pdf}
%         \caption{Small source model.}
%     \end{subfigure}
%     \begin{subfigure}{\linewidth}
%         \centering
%         \includegraphics[width=0.99\textwidth]{plots/mha_schematic.pdf}
%         \caption{Large target model.}
%     \end{subfigure}
%     \caption{Illustration of applying LEMON on MHA.}
%     \label{fig:illustration_mha_main}
% \end{figure}

% \subsection{Comparison with existing expansion methods}
 % In this section, we provide a brief summary of comparison with related expansion methods in \autoref{tab:methods-overview}. LEMON is the only method that can perform lossless model expansion when dealing with depth, width (divisible), and width (non-divisible). Moreover, it provides free parameters to choose.
% \jsnote{We haven't talked about the benefits of free parameters.}

% \input{tables/comparison}

% \textbf{Connection to LIGO.} LIGO is recently proposed as a state-of-the-art model expansion algorithm. It optimize the weights of expanded model by linearly combining the weights of the source model to make the loss as low as possible. However, it is non-lossless. Hence, as long as the validation accuracy of the initialized target model found by LIGO is worse than the source model, our method will be considered a better initialization point. Moreover, our method shows that without any training, we can achieve performance as good as the source model.
\section{How to train the expanded models}
\label{sect:exp1}

In this section, we delve into the influence of different factors in the training recipe, in particular the maximum learning rate and the learning rate scheduler, when training expanded models. 
% We use \vit{} as an example and clarify the standard recipe used for training it from scratch.
% \jsnote{`In this section, we investigate the influence of different factors in the training recipe, in particular the maximum learning rate and the learning rate scheduler.'}
% \jsnote{Delete the sentences above and start with the next sentence.} 

\textbf{Experiment setup.}
Throughout this study, we adopt \deit{} \citep{vit} as our exemplary model and train it on the standard ImageNet-1k dataset. 
In particular, we choose to expand \deit{}$(6,512)$ to \deit{}$(12,768)$, where $6$\big\slash$12$ represent the number of attention blocks and $512$\big\slash$768$ denote the hidden dimensions.
When training these models from scratch, we apply a default maximum learning rate of $1\times 10^{-3}$ and run the training for 300 epochs with a batch size of 1024. We use a cosine learning rate scheduler that decays to a minimum learning rate of $10^{-5}$. 
However, we will modify this training recipe for continual training of the expanded model \deit{}$(12,768)$.
% \jsnote{We also discuss our generic setup here to save space. The langugage needs polishment.} 
% We use \deit{}$(L,D)$ to denote \deit{} with $L$ \Transformer{} blocks and hidden dimension $D$.
% Head dimension is fixed to be 64.
\begin{figure}[t]
    \centering
    \begin{subfigure}[b]{0.23\textwidth}
    % \begin{subfigure}{height=3cm}
         \centering
         \includegraphics[width=\textwidth]{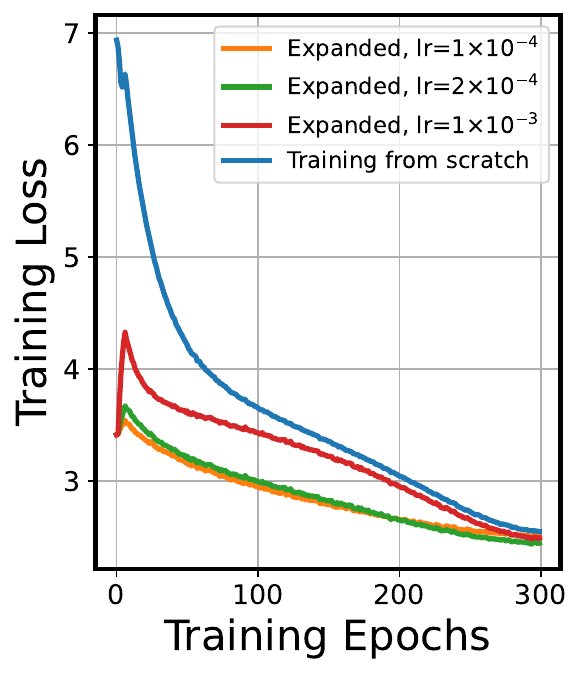}
         \caption{Train loss (LR)}
         \label{fig:lr_study_train}
    \end{subfigure}
    \begin{subfigure}[b]{0.237\textwidth}
         \centering
         \includegraphics[width=\textwidth]{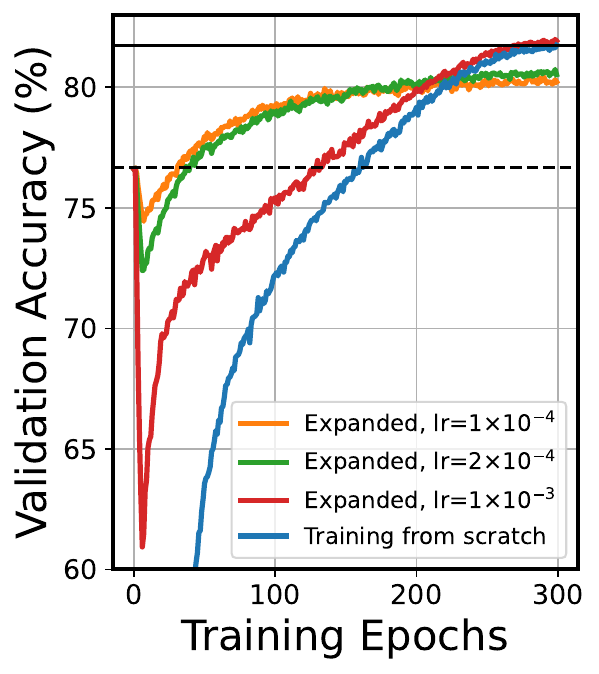}
         \caption{Valid Acc. (LR)}
         \label{fig:lr_study_val}
    \end{subfigure}
    \begin{subfigure}[b]{0.24\textwidth}
         \centering
         \includegraphics[width=\textwidth]{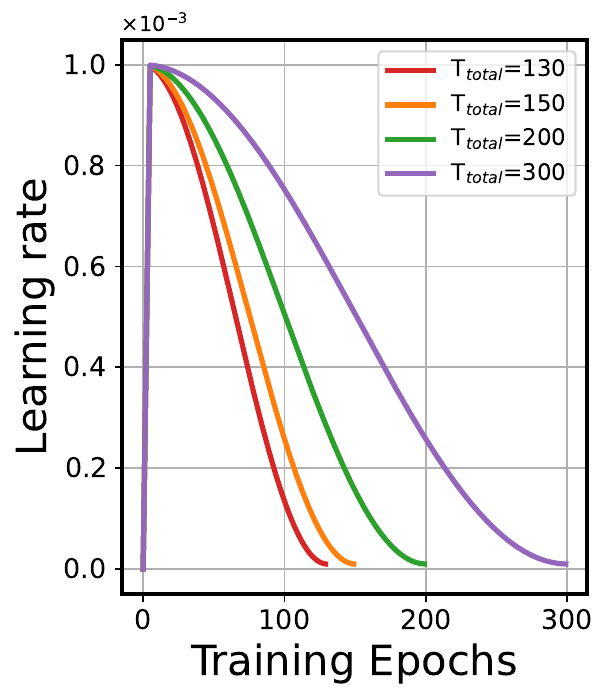}
         \caption{Used LR (Sched)}
         \label{fig:sched_study_sched}
    \end{subfigure}
    \begin{subfigure}[b]{0.237\textwidth}
         \centering
         \includegraphics[width=\textwidth]{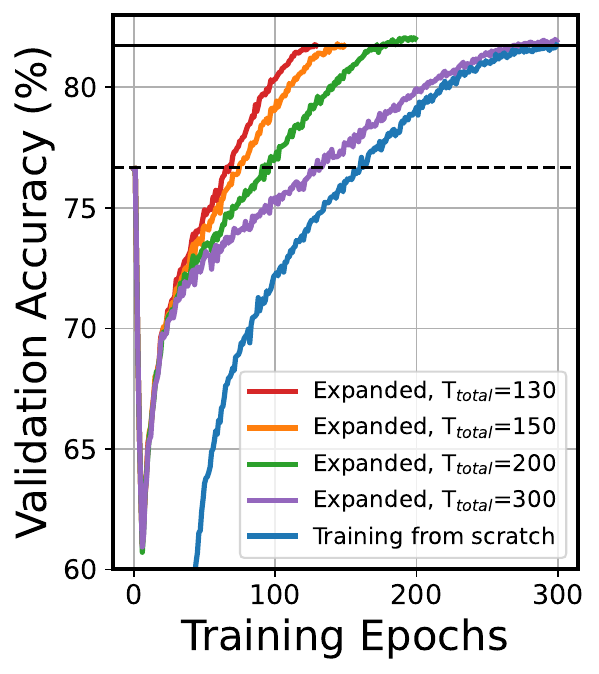}
         \caption{Valid Acc. (Sched)}
         \label{fig:sched_study_acc}
    \end{subfigure}
    % \begin{minipage}{.49\textwidth}
    %   \centering
    %   \includegraphics[width=.99\linewidth]{}
    %   \subcaptionbox{Train loss (LR)}[0.49 \linewidth]{\label{fig:lr_study_train}}
    %     \subcaptionbox{Train loss (LR).\label{fig:1b}}[\linewidth]  {}
    % \end{minipage}
    % \begin{minipage}{.49\textwidth}
    %   \centering
    %   \includegraphics[width=.99\linewidth]{}
    %   \captionof{figure}{A figure}
    %   \label{fig:test2}
    % \end{minipage}
    % \vspace{-10mm}
    % \includegraphics[width=0.99\textwidth]{} 
    % \subcaptionbox{Train loss (LR) \label{fig:lr_study_train}}[0.24 \linewidth] {}
    % \subcaptionbox{Valid Acc. (LR) \label{fig:lr_study_val}}[0.24 \linewidth]{}
    % \subcaptionbox{Used LR (Sched) \label{fig:sched_study_sched}}[0.24 \linewidth]{}
    % \subcaptionbox{Valid Acc. (Sched) \label{fig:sched_study_acc}}[0.24 \linewidth]{}
    % \subcaption{Train loss (LR) } \label{fig:lr_study_train}
    % \subcaption{Valid Acc. (LR) } \label{fig:lr_study_val}
    % \subcaption{Used LR (Sched) } \label{fig:sched_study_sched}
    % \subcaption{Valid Acc. (Sched) } \label{fig:sched_study_acc}
    \caption{Influence of maximum learning rate \textbf{(LR; a,b)} and learning rate scheduler \textbf{(Sched; c,d)} for training expanded \vit{}.
    % In both experiments, we expand \deit{}$(6,512)$ to \deit{}$(12,768)$. 
    Dashed and solid horizontal lines represent the validation accuracy of small and large models, when trained from scratch.
    \textbf{(a)} Train loss when changing maximum LR, 
    \textbf{(b)} validation accuracy when changing maximum LR, 
    \textbf{(c)} different LR scheduler used in experiments, 
    \textbf{(d)} validation accuracy when changing LR scheduler. 
    % \jsnote{Try to give messages in the caption.}
    We find that (1) using a smaller maximum LR results in smaller training loss but yields worse validation accuracy; (2) expanded models require significantly fewer epochs to match the performance of the larger model. 
    }
    \label{fig:train_config}
    \vspace{-0.5cm}
\end{figure}

\subsection{The effects of maximum learning rate}
\label{subsect:exp1_max_lr}

Suppose we have an expanded model, $f_T$, that maintains the same accuracy as a smaller source model, $\mathcal{A}(f_S)$. One might naturally opt for a smaller learning rate, expecting the validation accuracy of the expanded model to continue to decrease. If this were the case, we could smooth the transition between the training processes of the small model and the expanded model. However, our investigations reveal that the relationship is more complex than it initially seems.

We conducted experiments with three different maximum learning rates: $1\times 10^{-3}$ (default), $2\times 10^{-4}$, and $1\times 10^{-4}$, maintaining a consistent minimum learning rate of $1\times 10^{-5}$ across all cases. The results are shown in \autoref{fig:lr_study_val}. We summarize our findings in the following paragraphs.

\textbf{Performance drop early at training.} An interesting observation is the immediate decrease in validation accuracy experienced by all three expanded models early during the learning rate warm-up.\footnote{We tried to change the number of warm-up steps, but the results were not greatly affected.} This performance drop is correlated with the magnitude of the learning rate; the larger it is, the more pronounced the drop. 
This aligns with our anticipation as smaller learning rates are critical for model convergence, especially when the source model is already near local optima. Adopting a larger learning rate can displace the weights from this local minimum, leading to an increase in training loss. 

% Given this, one might consider the use of a smaller maximum learning rate. Contrary to this intuition, our study reveals that a larger learning rate is advantageous.
% This explains the observed correlation between larger learning rates and more substantial drops in performance early at training.
% However, does it mean we should use a relatively small maximum learning rate? Our study gives an opposite answer.

% This is anticipated that a small learning rate is useful for a continual loss decrease.
% We hypothesize that this phenomenon is linked to issues in model convergence.

\textbf{Maximum learning rate and model generalization.} We observe that maintaining the default maximum learning rate is pivotal to recovering the performance of the large model. To investigate whether adopting smaller learning rates hinders model learning, we also examine the training loss of all cases, as illustrated in \autoref{fig:lr_study_train}. The results show that models trained with reduced learning rates incur smaller training losses compared to training from scratch. Hence, we postulate that the deterioration in performance, induced by a smaller maximum learning rate, is detrimental to the generalization capability of the expanded networks rather than the optimization capability. 
This concept is also theoretically examined by \citet{li2020explaining}, illustrating how the learning rate can influence the sequence of learning varied patterns, thereby affecting generalization capacities.
% \jsnote{instead of the optimization capability.} 
% \jsnote{Since you cite the paper, you have to explain it at least briefly.}

% \begin{figure}
%     \centering
%      \centering
%      \includegraphics[width=0.4\textwidth]{plots/lr_study_train.pdf}
%     \caption{Training loss of expanded model with different maximum learning rate.}
%     \label{fig:lr_study_train}
% \end{figure}

\subsection{How fast the learning rate should decay}

After settling the maximum learning rate, the next important parameter to consider is the total number of epochs. Most works use the default learning rate scheduler \citep{LIGO, chen2021bert2bert}, maintaining the same number of epochs as if the model were training from scratch.
We, however, note that the expanded model, having inherited knowledge from the source model, starts with a small training loss — this holds true even when accounting for the significant loss drop during warm-up. This indicates the expanded model is closer to the local optimum, requiring a smaller learning rate for continued loss reduction. 
Thus, we should adopt a learning rate scheduler where the learning rate decays faster.
% \jsnote{I think you mean reduce learning epochs instead of learning rate?}

We examine four different epoch totals $T_\text{total}$: 130, 150, 200, and 300, with the corresponding learning rate schedulers illustrated in \autoref{fig:sched_study_sched}. Experiment results are shown in \autoref{fig:sched_study_acc}.

\textbf{Expanded model necessitates faster learning rate decay.} As depicted in \autoref{fig:sched_study_acc}, a notable observation is that employing a learning rate scheduler with faster decay enables the expanded model to quickly attain the performance of the corresponding large target model. Remarkably, the expanded model requires only 130 epochs of training to match the performance of the target model that was trained from scratch, translating to a computational cost saving of up to $56.67\%$. This corroborates our earlier conjecture that expanded models need a learning rate scheduler that decays faster.

% \jsnote{In summary}
In summary, we recommend employing the same maximum learning rate as is used for training from scratch but with accelerated decay.
\section{Main experiments}
\label{sect:experiments}

\begin{figure}[h]
    \centering
    \captionsetup{justification=centering}
    \begin{subfigure}[b]{0.24\textwidth}
         \centering
         \includegraphics[width=\textwidth]{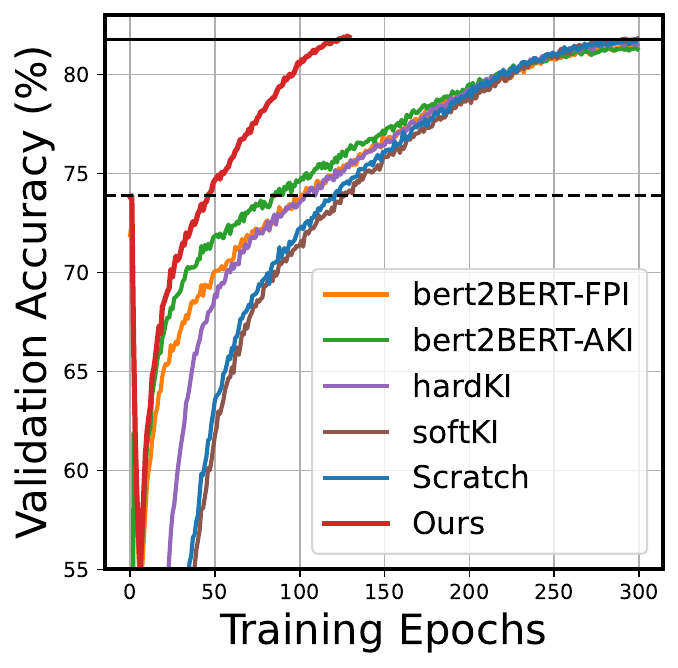}
         \caption{From \deit{}$(6,384)$ to \deit{}$(12,768)$}
         \label{fig:fig_vit_L6H384}
    \end{subfigure}
    \begin{subfigure}[b]{0.24\textwidth}
         \centering
         \includegraphics[width=\textwidth]{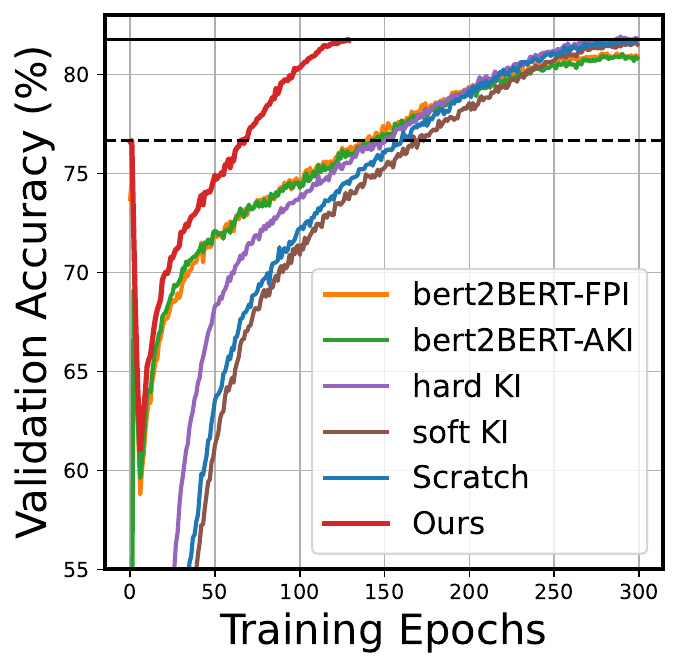}
         \caption{From \deit{}$(6,512)$ to \deit{}$(12,768)$}
         \label{fig:fig_vit_L6H512}
    \end{subfigure}
    \begin{subfigure}[b]{0.24\textwidth}
         \centering
         \includegraphics[width=\textwidth]{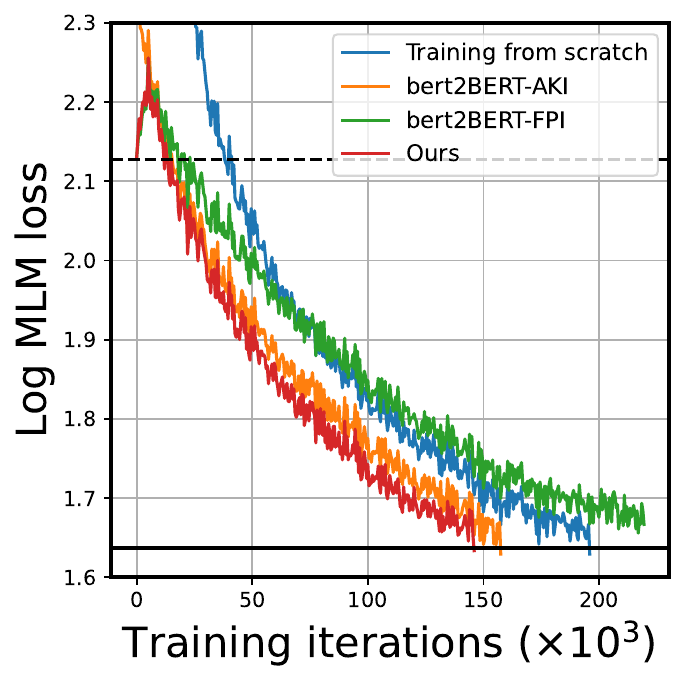}
         \caption{From \bert{}$(6,384)$ to \bert{}$(12,768)$}
         \label{fig:fig_bert_L6H384}
    \end{subfigure}
    \begin{subfigure}[b]{0.24\textwidth}
         \centering
         \includegraphics[width=\textwidth]{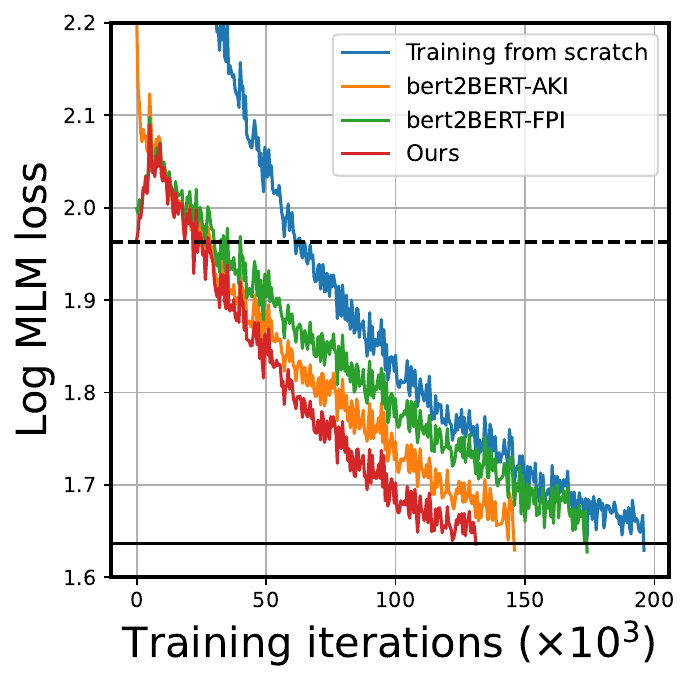}
         \caption{From \bert{}$(6,512)$ to \bert{}$(12,768)$}
         \label{fig:fig_bert_L6H512}
    \end{subfigure}
    \captionsetup{justification=default}
    \caption{Results of \deit{} on ImageNet \textbf{(a,b)} and \bert{} on English Wiki \textbf{(c,d)}. Dashed  and solid horizontal lines represent the validation accuracy/MLM loss of the trained small model and target model, respectively. LEMON outperforms baselines, yielding computational savings of 56.7\%, 56.7\%, 25.5\%, and 33.2\% in panels \textbf{(a)}, \textbf{(b)}, \textbf{(c)}, and \textbf{(d)} compared to training from scratch, respectively.
    % \jsnote{Add some analysis in the caption.. you cannot expect the readers to extract information themselves.}
    }
    \label{fig:benchmark}
    % \vspace{-0.7cm}
\end{figure}

In this section, we compare our method with existing model expansion algorithms on \vit{} and \bert{}.
We name our method \textbf{L}ossl\textbf{E}ss \textbf{MO}del Expansio\textbf{N} (LEMON), which uses the expansion algorithm explained in \autoref{sect:method} with an optimized learning rate scheduler that decays faster, as suggested in \autoref{sect:exp1}. 

\textbf{Baselines.}
We consider several baselines to compare with our proposed method: (1) training the target model from scratch, (2) \BertToBert{}-FPI \citep{chen2015net2net}, a generalization of \NetToNet{}, (3) \BertToBert{}-AKI \citep{chen2021bert2bert}, which uses advanced knowledge initialization (AKI) to break symmetry, (3) soft KI \citep{KI} which learns the output of the source model by minimizing the KL-divergence of the two distributions, and (4) hard KI which learns the predictions of the source model. We do not include StackBERT \citep{gong2019efficient}, \cite{yang2020progressively}, and Staged training \citep{shen2022staged} as they are not compatible with indivisible width expansion. \ligo{} \citep{LIGO} is unavailable for direct comparison due to the absence of open-source code; hence, comparisons are made using reported values on \deit{}(12,512) to \deit{}(12,768) in \autoref{sect:ligo}.

\subsection{\vit{}}
\label{subsect:experiment_vit}

\textbf{Experiment setting.}
% We adopt the code implementation of DeiT \citep{deit} and train it on ImageNet-1k \citep{imagenet}. We use a default maximum learning rate of $1\times 10^{-3}$ and train for 300 epochs with a batch size of 1024.  We use a cosine learning rate scheduler which decays to a minimum learning rate of $10^{-5}$. We consider the following expansion procedure: (1) \deit{}$(6,384)$ to \deit{}$(12,768)$, and (2) \deit{}$(6,512)$ to \deit{}$(12,768)$. 
We adopt the default experimental setup described in \autoref{sect:exp1} unless stated otherwise.
For LEMON, the learning rate is decayed to its minimum value over $T_\text{total} = 130$ epochs in both experiments. Parameters choices of LEMON are discussed in \autoref{sect:appendix-params-choice}.

\textbf{Experiment results.} 
As  demonstrated in \autoref{fig:fig_vit_L6H384} and \autoref{fig:fig_vit_L6H512}, LEMON is able to achieve lossless model expansion. For both experiment settings, LEMON is able to recover the performance of the target model in 130 epochs, outperforming other baselines. 

Several additional observations were made during the study. First, both \BertToBert{}-FPI and \BertToBert{}-AKI exhibited performance inferior to training from scratch. Second, consistent with the observations in \citet{chen2021bert2bert} and \citet{LIGO}, soft KI did not enhance the training speed of the target model, while hard KI did, possibly by functioning akin to curriculum learning and filtering out the challenging training samples for the target model early at training.

\subsection{Language models}
\label{subsect:experiment_bert}

\textbf{Experiment setting.} For our experiments, we train Pre-LN \bert{} \citep{preln} on masked language modeling task. The model is trained on the English Wiki corpus as per the methods in \citet{tan2020vokenization} for 220k iterations with 5k warmup steps and a batch size of 256. We use a maximum learning rate of $2\times 10^{-4}$ and a cosine learning rate scheduler which decreases the learning rate to $2\times 10^{-5}$. Following \citet{liu2019roberta}, we remove the next sentence prediction task and use a fixed sequence length of 128 for model pre-training. 

We consider the following expansion procedure: (1) \bert{}$(6,384)$ to \bert{}$(12,768)$, and (2) \bert{}$(6,512)$ to \bert{}$(12,768)$. We remove KI as our baseline. For LEMON, we decay the learning rate to the minimum values in 165k and 132k iterations for \bert{}$(6,384)$ and \bert{}$(6,512)$, respectively. Parameters choices of LEMON are discussed in \autoref{sect:appendix-params-choice}. We report the number of iterations needed to achieve a log validation MLM loss of 1.64. 

\begin{wrapfigure}[15]{r}{0.55\columnwidth}
\vspace{-5mm}
    \centering
    \captionsetup{justification=centering}
    \begin{subfigure}[b]{0.27\columnwidth}
        \centering
         \includegraphics[width=0.99\textwidth]{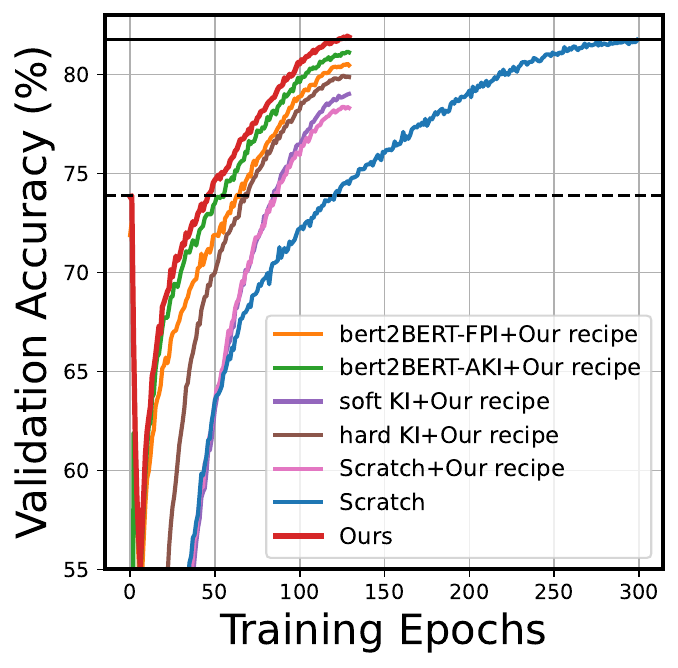}
         % \caption{Deit$(12,384)$$\rightarrow$Deit$(12,768)$}
        \caption{\deit{}$(6,384)$ $\rightarrow$ $(12,768)$. }
        \label{fig:fig_vit_L6H384_AS}
    \end{subfigure}
    \begin{subfigure}[b]{0.27\columnwidth}
        \centering
         \includegraphics[width=0.99\textwidth]{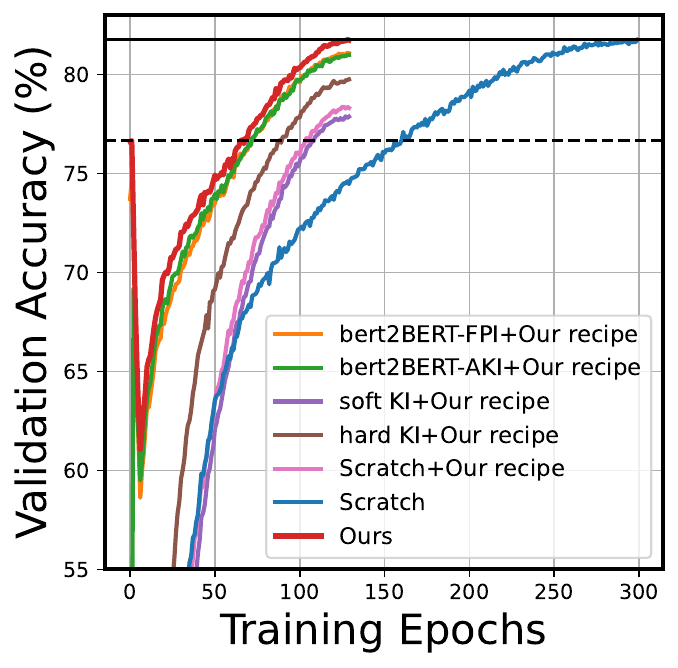}
         % \caption{\deit{}$(12,384)$$\rightarrow$\deit{}$(12,768)$}
        \caption{\deit{}$(6,512)$ $\rightarrow$ $(12,768)$. }
        \label{fig:fig_vit_L6H512_AS}
    \end{subfigure}
    \captionsetup{justification=default}
    \caption{LEMON outperforms other baselines even when they employ the same optimized learning rate schedulers.
    % Dashed and solid horizontal lines represent the validation accuracy of the trained source model and the target model, respectively.
    }
    \label{fig:vit_recipe}
\end{wrapfigure}

\textbf{Experiment results.} As shown in \autoref{fig:fig_bert_L6H384} and \autoref{fig:fig_bert_L6H512}, LEMON successfully expands smaller models without incurring loss. It outperforms baselines and achieve computational cost savings of 25.5\% and 33.2\% for \bert{}$(6,384)$ and \bert{}$(6,512)$, respectively.

\textbf{Downstream task.} We also present downstream performance of \bert{} trained by LEMON on the GLUE \citep{wang2019glue} dataset. We report correlation for the STS-B dataset and Matthews correlation coefficient for the CoLA dataset. Accuracy is reported for the remaining datasets. The results reveal that \bert(12,768) exhibits superior downstream performance when expanded from \bert(6,384) as opposed to being trained from scratch or being expanded from \bert(6,512). This likely stems from its more extensive training duration (165k iterations) compared to the model expanded from \bert(6,512) (132k iterations).
% \jsnote{We haven't talked about the message that our algorithm works better for divisible case than indivisible case, although it is much better than training from scratch.}

\begin{table}[t!]
    % \begin{center}
    % \vspace{-2em}
    % \resizebox{\linewidth}{!}
    \centering
    \caption{
    Downstream performance of \bert{}$(12,768)$ on the GLUE dataset: Large model expanded from \bert{}(6,384) achieves the best downstream performance. A potential reason for this may be its longer training duration (165k) compared to the \bert{}(6,512) (132k).
    % \jsnote{Explain our findings in the caption. Emphasize our performance.}
    }
    \vspace{-1em}
    \resizebox{0.8\textwidth}{!}
    {
    \label{tab:glue-L6H512}
    % \vspace{-2em}
    \begin{tabular}{ l cccccccc}
    \\
    \hline
    \toprule
    \textbf{Dataset} & \textbf{STS-B} & \textbf{MRPC} & \textbf{CoLA} & \textbf{SST-2}   & \textbf{QNLI} & \textbf{MNLI} & \textbf{MNLI-mm}  & \textbf{QQP} \\
    % \midrule
    \textbf{(Metric)} &  \textbf{(Corr.)} & \textbf{(Acc.)} & \textbf{(Mcc.)} & \textbf{(Acc.)}  & \textbf{(Acc.)} & \textbf{(Acc.)} & \textbf{(Acc.)} & \textbf{(Acc.)}\\
    \midrule
    Train from scratch & 0.744 & 83.33 & 0.19 & 88.88 & 87.80 & 80.28 & 81.17 & 89.62  \\ 
    \midrule
    LEMON (Ours), from BERT$(6,512)$ & 0.848 & 83.82 & 0.36 & 90.14 & 88.76 & 80.92 & 81.57 & 89.91 \\
    \midrule
    LEMON (Ours), from BERT$(6,384)$ & \textbf{0.866} & \textbf{85.54} & \textbf{0.38} & \textbf{90.94} & \textbf{89.33} & \textbf{81.81} & \textbf{81.81} & \textbf{90.40} 
    \\
    \bottomrule
    \vspace{-1cm}
    \end{tabular}

    }
\end{table}

% \begin{figure}
%     \centering
%     \begin{subfigure}[b]{0.4\textwidth}
%          \centering
%          \includegraphics[width=\textwidth]{plots/fig_bert_L6H384.pdf}
%          \caption{BERT$(6,384)$$\rightarrow$BERT$(12,768)$}
%          \label{fig:fig_bert_L6H384}
%     \end{subfigure}
%     \begin{subfigure}[b]{0.4\textwidth}
%          \centering
%          \includegraphics[width=\textwidth]{plots/fig_bert_L6H512.pdf}
%          \caption{BERT$(6,512)$$\rightarrow$BERT$(12,768)$}
%          \label{fig:fig_bert_L6H512}
%     \end{subfigure}
%     \caption{Results of \bert{} on English Wiki. Dashed  and solid horizontal line represent the validation MLM loss of trained small model and target model, respectively.}
%     \label{fig:bert_benchmark}
% \end{figure}

\subsection{Ablation studies: the effects of the training recipe}

% \subsubsection{The effects of the training recipe}
To study the effects of our proposed training recipe on baselines, we conduct an ablation study where we apply our training recipe on them. The results are shown in \autoref{fig:fig_vit_L6H384_AS}. It is shown that expanded models indeed require faster learning rate decay. Additionally, LEMON continues to outperform other baselines under the same modified training recipe.

% \subsubsection{Comparison with \ligo}

% \ligo \citep{LIGO} hasn't published their code at the date of our submission. Hence, we can only compare with their reported values. Note that our method is lossless only for Pre-LN transformer architecture while LIGO reports their results mainly on Post-LN BERT and RoBerTa. As a consequence, we compare our results with LIGO on Deit$(12,384)$ (Deit-Small) $\rightarrow$ Deit$(12,768)$ (Deit-Base). The result is shown in \autoref{fig:VIT_LIGO}.

% Our method is able to recover the performance of the target model with 85 epochs, leading to a 71.67\% computational saving. It is higher than the reported value of 55.40\% for LIGO.\footnote{Note that Deit-Base has a final validation accuracy of 81.00\% for LIGO, which is lower than the $\sim$ 81.70\% reported value of the official Deit and our implementation.}
% \begin{wrapfigure}[10]{r}{0.4\columnwidth}
% \vskip 0.2in
% \begin{center}
% \centerline{\includegraphics[width=0.39\columnwidth]{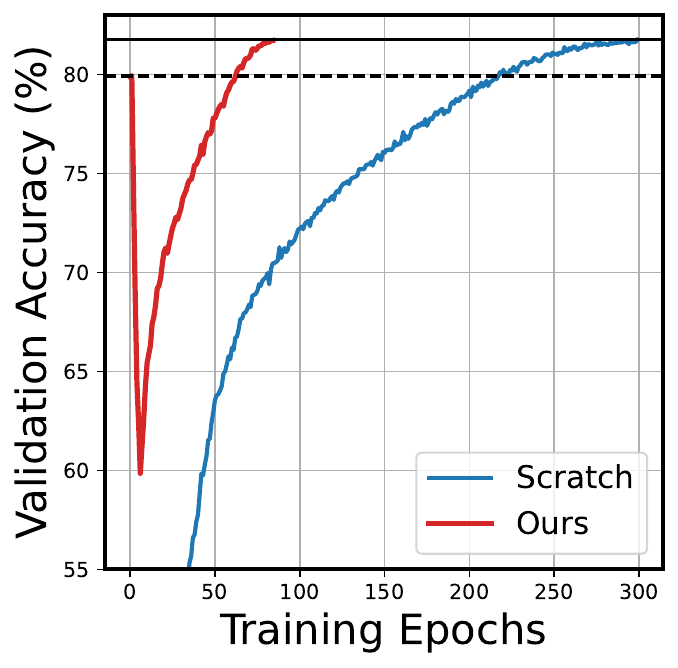}}
% \caption{We expand Deit$(12,384)$ to Deit$(12,768)$. Our expanded model recovers the performance of the target model with 85 epochs (28.3\% compared to training from scratch).}
% \label{fig:VIT_LIGO}
% \end{center}
% \vskip -1cm
% \end{wrapfigure}

% \subsection{Larger model}
% We further evaluate our methods on BERT$(12,768)$ (BERT-Base) $\rightarrow$ BERT$(24,1024)$ (BERT-Large).

\section{Conclusion}
\label{sect:conclusion}

In this paper, we propose LEMON, a method that combines lossless model expansion and optimized learning rate scheduler, showing compatibility and significant performance improvements for a variety of \Transformer{} architectures. However, LEMON does have its limitations, including the need for tuning the total number of training epochs, and our evaluation scale was constrained by available computational resources. 
Looking ahead, we are working on extending the application of LEMON to larger models and on developing methodologies for selecting optimal free parameters when initializing LEMON. 

% \section{Acknowledgement}
% YW is supported in part by Hetao Shenzhen-Hong Kong Science and Technology Innovation Cooperation Zone Project (No.HZQSWS-KCCYB-2022046); University Development Fund UDF01001491 from the Chinese University of Hong Kong, Shenzhen; Guangdong Key Lab on the Mathematical Foundation of Artificial Intelligence, Department of Science and Technology of Guangdong Province.

% \begin{figure}
%     \centering
%      \centering
%      \includegraphics[width=0.49\textwidth]{plots/fig_vit_L6H384_AS.pdf}
%      % \caption{Deit$(12,384)$$\rightarrow$Deit$(12,768)$}
%     \caption{We expand Deit$(6,384)$ to Deit$(12,768)$. We show the effects of using our training recipe for other baselines. Dashed and solid horizontal lines represent the validation accuracy of the trained source model and the target model, respectively.}
%     \label{fig:fig_vit_L6H384_AS}
% \end{figure}

% \newpage
% \section*{Reproducibility Statement}
% We have provided a comprehensive implementation of LEMON on \Transformers{} in \autoref{sect:detail-lemon-on-preln}. Additionally, the choices of parameters are elaborated in \autoref{sect:appendix-params-choice}. Moreover, we will make our code publicly available upon acceptance.

\clearpage
\newpage

\bibliography{iclr2024_conference}
\bibliographystyle{iclr2024_conference}

\newpage
\appendix

% \section{Appendix}
\section*{Overview of the Appendix}
The Appendix is organized as follows:
\begin{itemize}
    \item \autoref{sect:training-details} introduces the general experiment setup.
    \item \autoref{sect:appendix-background} provides backgrounds and notations for model expansion.
    \item \autoref{sect:detail-lemon-on-preln} shows details for applying LEMON on Pre-LN \Transformers{}.
    \item \autoref{sect:other-norms} shows details for applying LEMON on other architectures. 
    \item \autoref{sect:proofs} provides related proofs. 
    \item \autoref{sect:other-ablation} provides additional ablation studies for the experiments. 
    \item \autoref{sect:more-related-works} provides additional related works for efficient deep learning.
\end{itemize}

\section{Experiment setup}
\label{sect:training-details}
We conduct all experiments with NVIDIA-V100 and NVIDIA-A100 GPUs. We use the official code base of DeiT\footnote{\url{https://github.com/facebookresearch/deit/tree/main}} \citep{deit} for training \vit{} and the code base of VLM\footnote{\url{https://github.com/airsplay/vokenization}} \citep{tan2020vokenization} for training \bert{}.

\subsection{Network architecture}
For \vit{}, we use the default network architecture adopted in \citet{deit}. For \bert{}, we implemented Pre-LN \bert{} in Huggingface's \Transformers{} package \citep{wolf2019huggingface} such that:

\begin{itemize}
    \item Within the residual branch of each \Transformer{} block, we  positioned LayerNorm to precede both the multi-head attention (MHA) and multi-layer perception (MLP) modules.
    \item  For the MLM classification head, we use only one fully-connected layer (shared with the embedding).
\end{itemize}

\subsection{Detailed training configurations}

\textbf{\vit{}.} We train \vit{} on the ImageNet-1k \citep{imagenet} dataset. When training \vit{} from scratch, we apply a maximum learning rate of $1\times 10^{-3}$ and run the training for 300 epochs with a batch size of 1024. We use a cosine learning rate scheduler that decays to a minimum learning rate of $10^{-5}$ with 5 warm-up epochs.

\textbf{\bert{} pre-training.} We train Pre-LN \bert{} \citep{preln} on masked language modeling task. The model is trained on the English Wiki corpus as per the methods in \citet{tan2020vokenization} for 220k iterations with 5k warmup steps and a batch size of 256. We use a maximum learning rate of $2\times 10^{-4}$ and a cosine learning rate scheduler which decreases the learning rate to $2\times 10^{-5}$. Following \citet{liu2019roberta}, we remove the next sentence prediction task and use a fixed sequence length of 128 for model pre-training. 

\textbf{\bert{} fine-tuning.} For fine-tuning task of \bert{} on the GLUE \citep{wang2019glue} dataset, we train 3 epochs with a learning rate of $1 \times 10^{-4}$ and a batch size of 32 for all tasks. We report correlation for the STS-B dataset and Matthews correlation coefficient for the CoLA dataset. Accuracy is reported for the remaining datasets. 

\subsection{Details of baselines}

We provide our implementation details of knowledge inheritance (KI) \citep{KI} in this section. Given a training dataset denoted as $\mathcal{D} = (\x_i, \mathbf{y}_i)_{i=1}^n$,  we define the total loss $\mathcal{L}_\text{Total}$ as:
\begin{align*}
    \mathcal{L}_\text{Total}(f_L;f_S, \mathcal{D})  = \sum_{(\x_i, \mathbf{y}_i) \in \mathcal{D}}(1-\alpha) \mathcal{L}_\text{self}(f_L(\x_i), \mathbf{y}_i) + \alpha \mathcal{L}_\text{KI}(f_L, f_S, \x_i)
\end{align*}
where $\alpha$ is a scalar controlling the strength of KI; The functions $f_S$ and $f_L$ respectively represent the small source model and the large target model; The loss function $\mathcal{L}_\text{self}$ computes the standard training loss, such as cross-entropy, between the prediction $f_L(\x_i)$ and the actual label $\mathbf{y}_i$. For soft KI, we set $\mathcal{L}_\text{KI}=\text{KL}(f_L(\x_i)||f_S(\x_i))$. For hard KI, we set 
$\mathcal{L}_\text{KI}=\text{KL}(f_L(\x_i)||\mathbf{e}_{\argmax f_S(\x_i)})$, where $\text{KL}$ stands for Kullback–Leibler divergence, and $\mathbf{e}$ is the standard basis vector.

During the KI process, we start with an initial $\alpha$ value of 0.5 and linearly decrease it to zero.

\section{Notations and backgrounds}
\label{sect:appendix-background}

In this section, we introduce basic notations in \autoref{sect:notations}, the definition of some normalization layers in \autoref{sect:layers}, lossless expansion in vector space in \autoref{sect:expansion-vectors}, lossless expansion for operators (layers) in \autoref{sect:expansion-operators}, and 
the rule of consecutive application of lossless expansion methods for consecutive layers in \autoref{sect:consecutive-apply}.

\subsection{Notations}
\label{sect:notations}
All vectors are assumed to be column vectors. We define $\0_d$ to be a zero vector of dimension $d$. We use bold-faced letters for vectors, matrices, and tensors. For a vector $\mathbf{v}$, let $\mathbf{v}[i]$ be its $i$-th entry and $\mathbf{v}[:i]$ be its first $i$ entries.  For a matrix $\mathbf{M}$, let $\mathbf{M}[i,j]$, $\mathbf{M}[i,:]$, and $\mathbf{M}[:,j]$ be its $(i,j)$-th entry, $i$-th row, and $j$-th column, respectively. Moreover, let $\mathbf{M}[:i,:]$ and $\mathbf{M}[:,:j]$ be its first $i$ rows and first $j$ columns, respectively. We use $\M^\T$ to denote the matrix transpose of $\M$. We use $[n]$ where $n \in \mathbb{Z}_+$ to denote $\{1, \cdots, n\}$. We use \id{} to denote identity mapping. We use $\concat{\cdot}$ to denote horizontal concatenation. 

\subsection{Model layers}
\label{sect:layers}
In this section, we give the formal definition of LayerNorm $\LN(\cdot)$ and RMS Norm $\RMS(\cdot)$. 

% \ywnote{Add remark that applies on the last dimension. BatchNorm.}

\begin{definition}[LayerNorm]
\label{def:layernorm}
    LayerNorm $\LN(\cdot; \bm{\mu}, \bm{\beta}, \epsilon)$ of dimension $D$ is defined as:
    \begin{align*}
        \LN(\x; \bm{\mu}, \mathbf{\beta}, \epsilon) = \frac{\x-\E[\x]}{\sqrt{\Var[\x]+\epsilon}} \odot \bm{\mu} + \bm{\beta},
    \end{align*}
    where $\x, \bm{\mu}, \bm{\beta} \in \R^{D}$. 
\end{definition}

\begin{definition}[RMSNorm]
\label{def:rmsnorm}
    RMS Norm $\RMS(\cdot; \bm{\mu}, \epsilon)$ of dimension $D$ is defined as:
    \begin{align*}
        \RMS(\x; \bm{\mu}, \epsilon) = \frac{\x}{\sqrt{\frac{1}{D}\sum_{i=1}^D (\x[i])^2+\epsilon}} \odot \bm{\mu},
    \end{align*}
    where $\x, \bm{\mu} \in \R^{D}$. 
\end{definition}

\begin{remark}
    In neural networks, inputs of normalization layers are usually high dimension tensors. In this case,  LayerNorm and RMSNorm normally apply to the last dimension separately.
\end{remark}

\subsection{Lossless expansion in vector space}
\label{sect:expansion-vectors}
In this section, we first give the general definition of lossless expansion in vector space.  
% When dealing with \Transformers{}, we can treat each embedding or intermediate output in different batches and sequences separately and they can be regarded as vectors. 
% \begin{definition}[Strict lossless]
%     We say two functions $f$ and $g$ are strict lossless if $f(\x) = g(\x), \forall \x$.
% \end{definition}
% /ywnote{gives a general definition and say applies on a bunch of vectors.}

% \ywnote{May not need.}
% \ywnote{remove 'generalized'}
\begin{definition}[Lossless expansion in vector space]
\label{def:vector-expansion}
% Let $\x \in \R^{D_S}$ and a surjective index mapping $\mathcal{I}: \mathcal{X} \rightarrow [D_S]$ where $\mathcal{X} \subseteq [D_T]$, we call $\x^* \in \R^{D_T}$ where $D_T \geq D_S$ is a generalized lossless expansion of $\x$ (with index mapping $\mathcal{I}$) if for all $j \in [D_S]$, $\x^*[i] = \x[j], \forall i \in \mathcal{I}^{-1}(j)$.   
Given $\mathcal{S}$ and $\mathcal{T}$ are two vector spaces where the dimensions satisfy $\text{dim}(\mathcal{T}) \geq \text{dim}(\mathcal{S})$ , a vector space expansion $\expand: \mathcal{S} \rightarrow \mathcal{T}$ is said to be lossless if it is invertible.
% \ywnote{should not depend on x. can think of ways to formalize.}
% there exists a mapping $\expand^{-1}:\R^{D_T} \rightarrow \R^{D_S} $ such that $\expand^{-1}(\expand(\x)) = \x, \forall \x \in \R^{D_S}$.   
\end{definition}

\begin{remark}
 Note that the identity function $\id$ is lossless with its inverse being itself. 
\end{remark}

Then we give a few examples of lossless vector space expansions. These examples will also be used in LEMON.

% \ywnote{Use examples instead of definitions.}

\begin{example}[Vector average expansion $\mathcal{V}_\text{avg}$]
    Let $\x\in \R^{D_S}$ be a vector of dimension $D_S$ and its average $\avg(\x)=\E[\x]=\frac{1}{D_S}\sum_i^{D_S}\x[i]$. $\x^*_\text{avg}$  is called the average expanded $\x$ of dimension $D_T$ with $D_T\geq D_S$ if  
    \begin{align*}
        \x^*_\text{avg}=\expand_\text{avg}(\x) =  \concat{\underbrace{\x^\T, \cdots, \x^\T}_{\lfloor D_T / D_S\rfloor}, \underbrace{\avg(\x), \cdots, \avg(\x)}_{D_T \bmod D_S}}^\T \in \R^{D_T}.
    \end{align*}
\end{example}

% \ywnote{give examples.}

\begin{example}[Vector zero expansion $\expand_\text{zero}$]
    Let $\x\in \R^{D_S}$ be a vector of dimension $D_S$. $\x^*_\text{zero}$  is called the zero expanded $\x$ of dimension $D_T$ with $D_T\geq D_S$ if 
    \begin{align*}
        \x^*_\text{zero}=\expand_\text{zero}(\x)=\concat{\underbrace{\x^\T, \cdots, \x^\T}_{\lfloor D_T / D_S\rfloor}, \underbrace{0, \cdots, 0}_{D_T \bmod D_S}}^\T \in \R^{D_T}.
    \end{align*}
\end{example}

% \begin{claim}
%     The 
%     \begin{align*}
        
%     \end{align*}
%     while for vector circular expansion it is

% \end{claim}

\begin{example}[Vector circular expansion $\expand_\text{circ}$]
    Let $\x\in \R^{D_S}$ be a vector of dimension $D_S$. $\x^*_\text{circ}$  is called the circular expanded $\x$ of dimension $D_T$ with $D_T\geq D_S$ if  
    \begin{align*}
        \x^*_\text{circ}=\expand_\text{circ}(\x)=\concat{\underbrace{\x^\T, \cdots, \x^\T}_{\lfloor D_T / D_S\rfloor}, \x^\T[:D_T \bmod D_S]}^\T \in \R^{D_T}.
    \end{align*}
\end{example}

\begin{example}[Vector random expansion $\expand_\text{rand}$]
    Let $\x\in \R^{D_S}$ be a vector of dimension $D_S$. $\x^*_\text{rand}$  is called the random expanded $\x$ of dimension $D_T$ with $D_T\geq D_S$ if  
    \begin{align*}
        \x^*_\text{rand}=\expand_\text{rand}(\x; \bm{\zeta})=\concat{\underbrace{\x^\T, \cdots, \x^\T}_{\lfloor D_T / D_S\rfloor}, \bm{\zeta}^\T}^\T \in \R^{D_T},
    \end{align*}
    where $\bm{\zeta} \in \R^{D_T \bmod D_S}$ is an arbitrary vector.
\end{example}

\begin{remark}
    (1) All vector expansion examples above follow the same pattern. Specifically, when expanding from dimension $D_S$ to $D_T$, all vector expansion methods pad first $\lfloor D_T / D_S\rfloor D_S$ entries by repeating $\x$ $\lfloor D_T / D_S\rfloor D_S$ number of times. Each method deals with the remaining $D_T \bmod D_S$ entries differently. (2) The random vector $\bm{\zeta}$ in vector random expansion is arbitrary, so $ \expand_\text{avg}, \expand_\text{zero}, \expand_\text{circ} \subset \expand_\text{rand}$. (3) Here all three examples are expansion methods for vectors. In practice, neural networks like \Transformers{} are dealing high dimensional tensors. These tensors can essentially be thought of as collections of vectors. In such scenarios, we can apply the expansion methods separately to the last dimension of these tensors.
\end{remark}

In the following claim, we show that vectors expanded by these operators are lossless.

\begin{claim}
    Vector average expansion $\mathcal{V}_\text{avg}$, vector zero expansion $\mathcal{V}_\text{zero}$, vector circular expansion $\mathcal{V}_\text{circ}$, and vector random expansion  $\mathcal{V}_\text{rand}$ are all lossless expansion for vectors. 
\end{claim}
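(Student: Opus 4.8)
The plan is to show that each of the four vector expansion operators is invertible, which by \autoref{def:vector-expansion} is exactly what ``lossless'' means. The key observation is that all four operators share the common structure described in the remark: when expanding from dimension $D_S$ to dimension $D_T$, the first $\lfloor D_T/D_S\rfloor D_S$ coordinates of $\expand(\x)$ consist of $\lfloor D_T/D_S\rfloor$ verbatim copies of $\x$. In particular, the first $D_S$ coordinates of $\expand(\x)$ are exactly $\x$ itself, regardless of which of the four padding schemes is used for the trailing $D_T \bmod D_S$ coordinates.

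First I would define the candidate inverse $\expand^{-1}: \mathcal{T} \to \mathcal{S}$ to be the truncation map $\expand^{-1}(\y) = \y[:D_S]$, i.e.\ the projection onto the first $D_S$ coordinates. Then I would verify $\expand^{-1} \circ \expand = \id_{\mathcal{S}}$: for any $\x \in \R^{D_S}$, since the first $D_S$ entries of $\expand(\x)$ equal $\x$ for each of $\expand_\text{avg}, \expand_\text{zero}, \expand_\text{circ}, \expand_\text{rand}$ (this is immediate from the defining formulas in the four examples, using $\lfloor D_T/D_S\rfloor \geq 1$), we get $\expand^{-1}(\expand(\x)) = \expand(\x)[:D_S] = \x$. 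This already shows each operator is injective. Since a vector space expansion is said to be lossless precisely when it is invertible onto its image — or more precisely, since \autoref{def:vector-expansion} only asks for invertibility of the map $\expand: \mathcal{S} \to \mathcal{T}$ as a map onto its range — exhibiting a left inverse defined on all of $\mathcal{T}$ suffices. I would remark that the trailing entries are always a deterministic function of $\x$ (in the random case, once $\bm{\zeta}$ is fixed, it is an arbitrary-but-fixed constant, so $\expand_\text{rand}(\cdot; \bm{\zeta})$ is still a well-defined map with the same left inverse).

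The proof is essentially a one-line argument once the right inverse is guessed, so there is no serious obstacle. The only point requiring a little care is the edge case $D_T < 2D_S$ versus $D_T \geq 2D_S$: when $D_T < 2D_S$ we have $\lfloor D_T/D_S\rfloor = 1$, so there is exactly one verbatim copy of $\x$ followed by $D_T - D_S$ padding entries, and the truncation-to-first-$D_S$-coordinates inverse still works; when $D_T \geq 2D_S$ there are multiple copies, but truncating to the first block still recovers $\x$. A second minor subtlety is that for $\expand_\text{rand}$ one should be explicit that ``lossless'' is asserted for each fixed choice of $\bm{\zeta}$, and that the inverse does not depend on $\bm{\zeta}$; this also makes transparent the inclusion $\expand_\text{avg}, \expand_\text{zero}, \expand_\text{circ} \subset \expand_\text{rand}$ noted in the remark, since all three are recovered by appropriate (data-dependent) choices of the padding. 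I would conclude by noting that the same truncation inverse handles all four cases uniformly, so the claim follows.
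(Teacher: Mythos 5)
Your proof is correct and matches the paper's own argument: the paper likewise exhibits the truncation map $\expand^{-1}(\x) = \x[:D_S]$ as the inverse for all four expansion operators. Your additional remarks on the $\lfloor D_T/D_S\rfloor = 1$ edge case and on fixing $\bm{\zeta}$ for $\expand_\text{rand}$ are sensible elaborations but do not change the approach.
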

\begin{proof}
    The inverse function $\expand^{-1}: \R^{D_T} \rightarrow \R^{D_S}$ of these vector expansion methods is
    \begin{align*}
        \expand^{-1}(\x) = \x[:D_S].
    \end{align*}
\end{proof}

% \ywnote{We want inverse mapping to easily computed.}
\begin{remark}
    In practice, we want inverse mapping of expansion methods to be easily computed just like the example above.
\end{remark}
% \begin{definition}[Generalized lossless expansion for operators $\mathcal{O}$]
% Let the operator  $\op: \R^{D_S} \rightarrow \R^{D'_S}$, and suppose $\x^* \in \R^{D_T}$ is a generalized lossless expansion of $\x \in \R^{D_S}$ with vector expansion mapping $\expand: \R^{D_S} \rightarrow \R^{D_T}$, we say $\expando$ is (generalized) lossless with vector mappings $(\expand_\text{in}, \expand_\text{out})$ if $\expando[\op](\x^*)$ is a generalized lossless expansion of $\op(\x)$ with vector expansion mapping $\expand_\text{out}: \R^{D_T} \rightarrow \R^{D'_T}$, $\forall \x$.   
% \end{definition}

\subsection{Lossless expansion for operators}
\label{sect:expansion-operators}
We then give the definition of lossless expansion for operators. These operators apply on tensors, hence our definition of lossless operator expansion is based on lossless expansion in vector space. These operators can be different layers used in \Transformer{} architectures, including LayerNorm, convolutional layers, and fully-connected layers, etc.

\begin{definition}[Lossless expansion for operators]
Consider vector spaces $ \mathcal{S}^\text{in},  \mathcal{S}^\text{out},  \mathcal{T}^\text{in}$ and $\mathcal{T}^\text{out}$ such that $\text{dim}(\mathcal{S}^\text{in}) \leq \text{dim}(\mathcal{T}^\text{in})$ and $\text{dim}(\mathcal{S}^\text{out}) \leq \text{dim}(\mathcal{T}^\text{out})$. Moreover, suppose the operator is denoted with $g(\cdot): \mathcal{S}^\text{in} \rightarrow \mathcal{S}^\text{out}$. We say the operator expansion $\expando$ is $(\expand_\text{in}, \expand_\text{out})$-lossless for $g(\cdot)$ if there exist lossless input vector space expansion $\expand_\text{in}:\mathcal{S}^\text{in} \rightarrow \mathcal{T}^\text{in}$ and lossless output vector space expansion $\expand_\text{out}:\mathcal{S}^\text{out} \rightarrow \mathcal{T}^\text{out}$ such that  $\expand_\text{out}(g(\x)) = \expando [g] (\expand_\text{in}(\x)), \forall \x \in \mathcal{S^\text{in}}$.   
\end{definition}

\begin{remark}
    (1) Intuitively, a lossless operator expansion can be understood as follows: when using $\expand_\text{in}$ losslessly expanded input, the output of the $\expando$ expanded operator is also a $\expand_\text{out}$ losslessly expanded version of the original output. (2) For conciseness, we use `$\expando[g]$ is $(\expand_\text{in}, \expand_\text{out})$-lossless' and `$\expando$ is $(\expand_\text{in}, \expand_\text{out})$-lossless for $g(\cdot)$' interchangeably. (3) We only require the vector expansions $\expand_\text{in}$ and $\expand_\text{out}$ to be invertible, we do not have restrictions on the operator expansion $\expando$.
\end{remark}

\subsubsection{Lossless expansion for matrix multiplication}
Then we give a few examples of lossless expansion for operators. We give examples for matrix multiplication since fully-connected layers are building blocks for \Transformers{}. We first start by introducing the following three lossless operator expansion methods for matrix multiplication assuming that the input dimension is unchanged so $\expand_\text{in} = \id$.

% \ywnote{change R to row}
% We then define expansion operators for matrices (operators).

\begin{example}[Matrix row-average expansion $\expando_\text{row,avg}$]
\label{example:op-R-avg}
    Let $\M \in \R^{D_S\times P} $ be a matrix of dimension $D_S \times P$ and its row average $\mathbf{m}=\frac{1}{D_S}\sum_i^{D_S}\M[i,:]$. $\M^*_\text{row,avg}$  is called the row-average expanded $\M$ of dimension $D_T \times P$ with $D_T\geq D_S$ if  
    \begin{align*}
        \M^*_\text{row,avg}=\expando_\text{row,avg}(\M)=\concat{\underbrace{\M^\T, \cdots, \M^\T}_{\lfloor D_T / D_S\rfloor}, \underbrace{\mathbf{m}, \cdots, \mathbf{m}}_{D_T \bmod D_S}}^\T \in \R^{D_T \times P}.
    \end{align*} Moreover, $\expando_\text{row,avg}$ is $(\id, \expand_\text{avg})$-lossless for $\M$.
\end{example}

\begin{example}[Matrix row-zero expansion $\expando_\text{row,zero}$]
\label{example:op-R-zero}
    Let $\M \in \R^{D_S\times P}$ be a matrix of dimension $D_S \times P$. $\M^*_\text{row,zero}$  is called the row-zero expanded $\M$ of dimension $D_T \times P$ with $D_T\geq D_S$ if 
    \begin{align*}
        \M^*_\text{row,zero}=\expando_\text{row,zero}(\M)=\concat{\underbrace{\M^\T, \cdots, \M^\T}_{\lfloor D_T / D_S\rfloor}, \underbrace{\0_{P}, \cdots, \0_{P}}_{D_T \bmod D_S}}^\T \in \R^{D_T \times P}.
    \end{align*} Moreover, $\expando_\text{row,zero}$ is $(\id, \expand_\text{zero})$-lossless for $\M$.
\end{example}

\begin{example}[Matrix row-circular expansion $\expando_\text{row,circ}$]
\label{example:op-R-circ}
     Let $\M \in \R^{D_S\times P}$ be a matrix of dimension $D_S \times P$. $\M^*_\text{row,circ}$  is called the row-circular expanded $\M$ of dimension $D_T \times P$ with $D_T\geq D_S$ if 
     \begin{align*}
         \M^*_\text{row,circ}=\expando_\text{row,circ}(\M)=\concat{\underbrace{\M^\T, \cdots, \M^\T}_{\lfloor D_T / D_S\rfloor},\left( \M[:D_T \bmod D_S,:]\right)^\T}^\T \in \R^{D_T\times P}.
     \end{align*}
      Moreover, $\expando_\text{row,avg}$ is $(\id, \expand_\text{circ})$-lossless for $\M$.
\end{example}

\begin{remark}
    Similar to vector expansion examples, these matrix row-expansion methods follow the same pattern. Specifically, when expanding the number of rows from dimension $D_S$ to $D_T$, these expansion methods pad first $\lfloor D_T / D_S\rfloor D_S$ rows by repeating $\M$ $\lfloor D_T / D_S\rfloor D_S$ number of times. Each method deals with the remaining $D_T \bmod D_S$ rows differently. 
\end{remark}

The following two lossless operator expansion methods assume that the output dimension is unchanged so $\expand_\text{out} = \id$.

% \begin{definition}[Matrix column average expansion]
%     Let $\M \in \R^{P \times D_S}$ be a matrix of dimension $D_S \times P$ and its row average $\mathbf{m}=\frac{1}{D_S}\sum_i^{D_S}\M[i,:]$. $\M^*_\text{avg}$  is called average expanded $\M$ of dimension $D_T \times P$ with $D_T\geq D_S$ if  $\M^*_\text{avg}=\concat{\underbrace{\M^\T, \cdots, \M^\T}_{\lfloor D_T / D_S\rfloor}, \underbrace{\mathbf{m}, \cdots, \mathbf{m}}_{D_T \bmod D_S}}^\T \in \R^{D_T \times P}$.
% \end{definition}

\begin{example}[Matrix column-random expansion $\expando_\text{col,rand}$]
\label{example:op-C-rand}
    Let $\M \in \R^{P \times D_S}$ be a matrix of dimension $  P \times D_S$ and $\bm{\zeta} \in \R^{P \times (D_T \bmod D_S)}$ is an arbitrary matrix. $\M^*_\text{col,rand}$  is called the column-random expanded $\M$ of dimension $ P \times D_T$ with $D_T\geq D_S$ if 
    \begin{align*}
        \M^*_\text{col,rand}=\expando_\text{col,rand}(\M; \bm{\zeta})=\concat{\underbrace{\M^1, \cdots, \M^{\lfloor D_T / D_S\rfloor}}_{\lfloor D_T / D_S\rfloor}, \bm{\zeta}} \in \R^{P\times D_T},
    \end{align*}
    where 
    \begin{align*}
        \sum_i^{\lfloor D_T / D_S\rfloor} \M^{i} = \M.
    \end{align*}
    Moreover, $\expando_\text{col,rand}$ is $(\expand_\text{zero}, \id )$-lossless for $\M$.
\end{example}

% \begin{claim}
%     $\expand_\text{col,rand}(\M)$ is a generalized lossless expansion for $\M$ with mappings $(\expand_\text{row}, \cdot)$.
% \end{claim}
% In the claim above, we do not specify $\expand_\text{out}$ since out dimension is not expanded.

\begin{example}[Matrix column-circular expansion $\expando_\text{col,circ}$]
\label{example:op-C-circ}
     Let $\M \in \R^{P \times D_S}$ be a matrix of dimension $P \times D_S$ and $\M^\text{res} = \M[:,:D_T \bmod D_S] \in \R^{P \times (D_T \bmod D_S)}$. $\M^*_\text{col,circ}$  is called the column-circular expanded $\M$ of dimension $P \times D_T$ with $D_T\geq D_S$ if  
     \begin{align*}
         \M^*_\text{col,circ}=\expando_\text{col,circ}(\M)=\concat{\underbrace{\M^1, \cdots, \M^{\lfloor D_T / D_S\rfloor}}_{\lfloor D_T / D_S\rfloor}, \M^\text{res}} \in \R^{P \times D_T},
     \end{align*}
     where 
     \begin{align*}
         \M^\text{res} + \sum_{i=1}^{\lfloor D_T / D_S\rfloor} \M^{i}[:,:D_T \bmod D_S] = \M[:,:D_T \bmod D_S],
     \end{align*} 
     and 
     \begin{align*}
         \sum_{i=1}^{\lfloor D_T / D_S\rfloor} \M^{i}[:,D_T \bmod D_S:] = \M[:,D_T \bmod D_S:].
     \end{align*}
     Moreover, $\expando_\text{col,rand}$ is $(\expand_\text{circ}, \id )$-lossless for $\M$.
\end{example}

Note that lossless matrix row expansion and lossless matrix column expansion can be used together with the following claim. 

\begin{claim}
\label{claim: row-col-lossless}
    Consider matrix column expansion $\expando_\text{col}$ is $(\expand_\text{col}, \id )$-lossless for $\M$, and matrix row expansion $\expando_\text{row}$ is  $(\id, \expand_\text{row})$-lossless for $\M$. $\expando_\text{col} \circ \expando_\text{row}$ and $\expando_\text{row} \circ \expando_\text{col}$ are both $(\expand_\text{col}, \expand_\text{row})$-lossless for $\M$.
\end{claim}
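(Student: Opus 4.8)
The plan is to verify the claim directly from the definition of lossless operator expansion, using the hypotheses on $\expando_\text{col}$ and $\expando_\text{row}$ and the key observation that the two expansions act on ``disjoint'' aspects of the matrix multiplication: $\expando_\text{col}$ only alters how inputs are combined (it leaves the output space alone, so its output expansion is $\id$), while $\expando_\text{row}$ only alters how outputs are produced (it leaves the input space alone, so its input expansion is $\id$). Concretely, I will treat the operator $g(\cdot)$ as left-multiplication by $\M$ (or the appropriate matrix-multiplication convention used in the earlier examples) and chase the defining identity $\expand_\text{out}(g(\x)) = \expando[g](\expand_\text{in}(\x))$.

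First I would fix notation: let $g$ denote the map induced by $\M$, and recall that ``$\expando_\text{col}$ is $(\expand_\text{col}, \id)$-lossless for $\M$'' means $g(\x) = \expando_\text{col}[g](\expand_\text{col}(\x))$ for all $\x$, and ``$\expando_\text{row}$ is $(\id, \expand_\text{row})$-lossless for $\M$'' means $\expand_\text{row}(g(\x)) = \expando_\text{row}[g](\x)$ for all $\x$. Next I would consider the composite operator expansion $\expando_\text{row} \circ \expando_\text{col}$ applied to $\M$; since the row-expansion and column-expansion modify disjoint sets of entries of the weight matrix (rows versus columns), their composition is well-defined regardless of order, and one checks that $(\expando_\text{row} \circ \expando_\text{col})[g]$ is exactly the operator whose weight matrix is $\M$ with both the row-padding and the column-padding applied. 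Then for any $\x$ in the source input space, I would compute
\begin{align*}
(\expando_\text{row} \circ \expando_\text{col})[g]\big(\expand_\text{col}(\x)\big)
= \expando_\text{row}\big[\,\expando_\text{col}[g]\,\big]\big(\expand_\text{col}(\x)\big)
= \expand_\text{row}\Big(\expando_\text{col}[g]\big(\expand_\text{col}(\x)\big)\Big)
= \expand_\text{row}\big(g(\x)\big),
\end{align*}
where the middle equality uses that row-expanding any operator is $(\id,\expand_\text{row})$-lossless (the row-padding structure is independent of what the operator's columns look like), and the last equality uses the column hypothesis. This is precisely the statement that $\expando_\text{row} \circ \expando_\text{col}$ is $(\expand_\text{col}, \expand_\text{row})$-lossless for $\M$. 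The argument for $\expando_\text{col} \circ \expando_\text{row}$ is symmetric, peeling off $\expando_\text{col}$ first via the column hypothesis and then applying $\expando_\text{row}$.

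The main obstacle — really the only subtle point — is justifying the ``commutativity/independence'' step: that applying a row-padding expansion to an already column-expanded operator still yields something that is $(\id, \expand_\text{row})$-lossless, i.e., that the row-expansion's losslessness is insensitive to a prior modification of the input dimension and the column structure. I would handle this by appealing to the explicit forms in Examples~\ref{example:op-R-avg}--\ref{example:op-C-circ}: row padding appends new rows formed from (averages/zeros/copies of) existing rows, an operation that commutes with appending new columns because it acts blockwise on the first $D_S$ rows in a way that only references those rows' already-present column entries. If a fully general (non-constructive) argument is wanted, I would instead observe that $\expand_\text{row}$ and $\expand_\text{col}$ are invertible by assumption and that the composite expanded operator agrees with $\expand_\text{row}\circ g \circ \expand_\text{col}^{-1}$ on the image of $\expand_\text{col}$, which suffices. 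Everything else is routine substitution.
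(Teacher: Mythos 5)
Your proposal is correct and follows essentially the same route as the paper, whose entire proof is the one-line observation that ``rows and columns are expanded independently.'' You have simply made that independence precise — in particular, you correctly identify and discharge the only real subtlety, namely that the row-expansion remains $(\id, \expand_\text{row})$-lossless when applied to the already column-expanded matrix (and vice versa), which holds because the explicit row/column padding constructions in the paper's examples are lossless for arbitrary matrices, not just for the original $\M$.
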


The claim is easy to prove since rows and columns are expanded independently.

\subsubsection{Lossless expansion for bias}

Note that the fully-connected layer consists of a matrix multiplication followed by a bias operator. We now give examples for the bias operator $\mathcal{B}(\x; \bias) = \x+\bias$.

\begin{example}[Bias average expansion $\expando_\text{bias,avg}$]
    Consider the bias operator  $\mathcal{B}(\x; \bias) = \x+\bias$ where $\bias \in \R^{D_S}$. $\mathcal{B}^*_\text{bias,avg}(\cdot; \bias^*_\text{bias,avg})=\expando_\text{bias,avg}[\B(\cdot; \bias)]$  is called the average expanded $\mathcal{B}$ of dimension $D_T$ with $D_T\geq D_S$ if  $\bias^*_\text{bias,avg}=\expand_\text{avg}(\bias)$. Moreover, $\expando_\text{bias,avg}$ is $(\expand_\text{avg}, \expand_\text{avg})$-lossless for $\mathcal{B}$.
\end{example}

\begin{remark}
    Note that we can easily extend $\expando_\text{bias,avg}$ to $\expando_\text{bias,circ}$ and $\expando_\text{bias,zero}$ by expanding $\bias$ to $\expand_\text{circ}(\bias)$ and $\expand_\text{zero}(\bias)$, respectively. Moreover, $\expando_\text{bias,circ}$ and $\expando_\text{bias,zero}$ are $(\expand_\text{circ}, \expand_\text{circ})$-lossless and $(\expand_\text{zero}, \expand_\text{zero})$-lossless for $\mathcal{B}$, respectively.
\end{remark}

\subsubsection{Consecutive application of lossless expansion for operators}
\label{sect:consecutive-apply}
In previous sections we give examples of lossless expansion methods for single operators. Now, to ensure lossless when applying expansion methods to consecutive layers/operators, we introduce the following claim:

\begin{claim}[Lossless of consecutive application]
\label{claim:consecutive-apply}
    If $\expando_1$ is $(\expand_a, \expand_b)$-lossless for $g_1$ and  $\expando_2$ is $(\expand_b, \expand_c)$-lossless for $g_2$. Then $\expando_2[g_2]\circ\expando_1[g_1]$ is $(\expand_a, \expand_c)$-lossless for $g_2 \circ g_1$.
\end{claim}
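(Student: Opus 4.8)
The plan is to unfold the definition of lossless operator expansion twice and chain the two identities. Specifically, since $\expando_1$ is $(\expand_a, \expand_b)$-lossless for $g_1$, by definition we have $\expand_b(g_1(\x)) = \expando_1[g_1](\expand_a(\x))$ for all $\x$ in the relevant input space. Since $\expando_2$ is $(\expand_b, \expand_c)$-lossless for $g_2$, we similarly have $\expand_c(g_2(\vy)) = \expando_2[g_2](\expand_b(\vy))$ for all $\vy$ in $g_1$'s output space, which is the input space of $g_2$.

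The key step is then to substitute $\vy = g_1(\x)$ into the second identity and use the first identity to rewrite $\expand_b(g_1(\x))$. This gives
\begin{align*}
    \expand_c\big(g_2(g_1(\x))\big) = \expando_2[g_2]\big(\expand_b(g_1(\x))\big) = \expando_2[g_2]\big(\expando_1[g_1](\expand_a(\x))\big),
\end{align*}
which is exactly the statement that $\expando_2[g_2] \circ \expando_1[g_1]$ is $(\expand_a, \expand_c)$-lossless for $g_2 \circ g_1$, once one observes that $\expand_a$ and $\expand_c$ are themselves lossless (invertible) vector space expansions — which they are, by the hypothesis that $\expando_1$ and $\expando_2$ are lossless operator expansions.

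I would also note one small bookkeeping point: one must check the dimension compatibility conditions in \autoref{def:vector-expansion} and the operator-expansion definition are satisfied for the composite, i.e.\ $\text{dim}$ of the source input space is at most that of the target input space and similarly for outputs; these follow immediately by transitivity of $\leq$ from the corresponding inequalities for $g_1$ and $g_2$, together with the fact that the intermediate space $\mathcal{S}^{\text{out}}_{g_1} = \mathcal{S}^{\text{in}}_{g_2}$ is shared.

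There is essentially no obstacle here — the claim is a direct consequence of composing two equalities, and the only thing to be careful about is matching up the intermediate expansion $\expand_b$ correctly (it must be the \emph{same} vector space expansion that appears as the output expansion of $g_1$ and the input expansion of $g_2$, which is precisely what the hypothesis provides). The proof is two lines of substitution plus a remark that invertibility of $\expand_a$ and $\expand_c$ is inherited from the hypotheses.
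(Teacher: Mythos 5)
Your proof is correct and follows essentially the same route as the paper's: unfold the definition of $(\expand_\text{in}, \expand_\text{out})$-losslessness for each operator and chain the two identities through the shared intermediate expansion $\expand_b$. The extra bookkeeping remark on dimension compatibility is fine but not needed beyond what the paper already records.
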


\begin{proof}
    This is easily obtained if input $\x$ is $\expand_\text{a}$ losslessly expanded, then the output of $\expando_1[g_1](\cdot)$, $\x_\text{mid} = \expando_1[g_1](\expand_\text{a}(\x))$, is $\expand_\text{b}$ lossless by definition. Using the fact that  $\expando_2[g_2](\cdot)$ is  $(\expand_b, \expand_c)$-lossless and the input $\x_\text{mid}$ is $\expand_\text{b}$ losslessly expanded, we conclude the proof.
\end{proof}

\begin{remark}
By leveraging \autoref{claim:consecutive-apply}, we can separately apply lossless expansion methods to various layers/operators in a larger network. The only requirement is that the output vector space expansion of one expansion method matches the input vector space expansion of the subsequent expansion method.
\end{remark}

% \begin{claim}
%     $\expando_\text{col,circ}(\M)$ is a generalized lossless expansion for $\M$ with vector expansion mappings $(\expand_\text{circ}, \cdot)$.
% \end{claim}
% Note that in the claim above, we do not specify $\expand_\text{out}$ since out dimension is not expanded.

% \begin{definition}[Generalized lossless expansion for vectors]
% Let $\x \in \R^{D_S}$, we call $\x^* \in \R^{D_T}$ where $D_T \geq D_S$ if there exist $i$ such that $\x^*[i] = \x[j], \forall j \in [D_S]$.   
% \end{definition}

\section{Details of LEMON for Pre-LN \Transformers{}}
\label{sect:detail-lemon-on-preln}

In this section, we provide detailed explanation of applying LEMON on Pre-LN \Transformer{} architecture.  By \autoref{claim:consecutive-apply}, we can deal with different modules separately. In the following sections, we delve into the details of applying expansion methods to these modules.

\begin{figure}[t!]
    \centering
    \begin{subfigure}{\linewidth}
        \centering
        \includegraphics[width=0.99\textwidth]{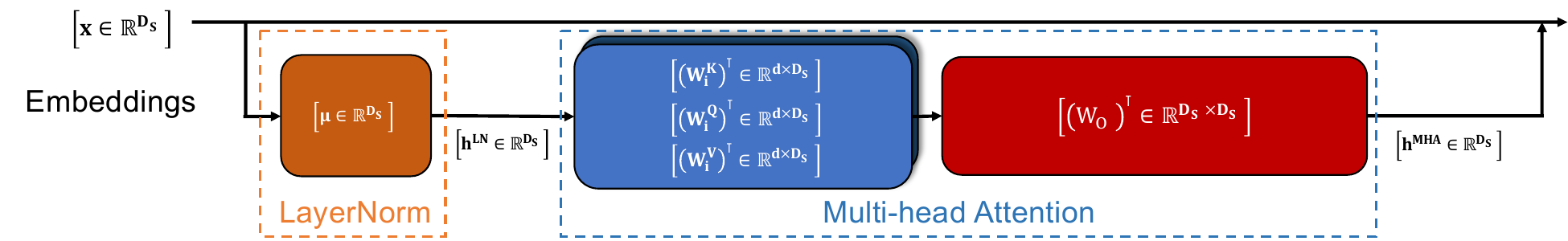}
        \caption{Small source model.}
    \end{subfigure}
    \begin{subfigure}{\linewidth}
        \centering
        \includegraphics[width=0.99\textwidth]{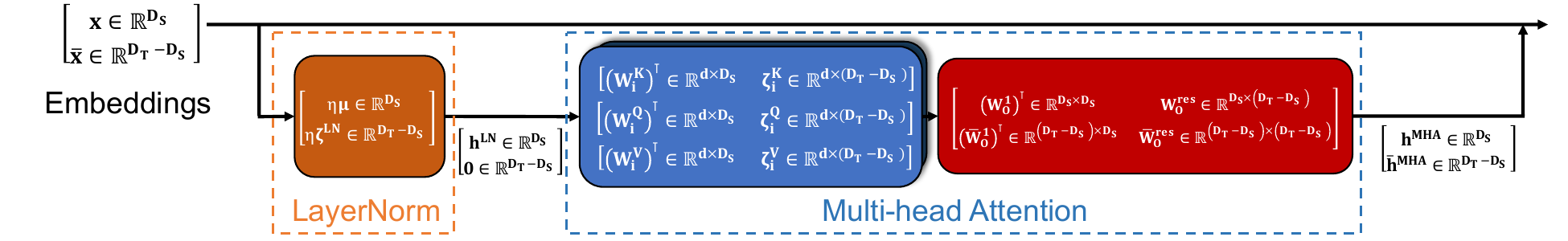}
        \caption{Large target model.}
    \end{subfigure}
    \caption{Illustration of LayerNorm expansion $\expando_\text{LN}$ and MHA expansion $\expando_\text{MHA}$. We assume $d=d_K=d_V$. We transpose weight matrices so that they can be considered left multiplied with vectors. The vectors in black font color indicate the intermediate values of inputs while the matrices in white color indicate parameters of the module. Biases are ignored for better illustration.}
    \label{fig:illustration_mha}
\end{figure}

\subsection{Width expansion for Pre-LN \Transformer{} blocks}
\label{sect:detail-expansion-procedure}

We first recap the Pre-LN \Transformer{} architecture. It usually consists of (1) the embedding layer, (2) several Pre-LN \Transformer{} blocks, (3) the final LayerNorm layer, and (4) a decoder layer.

Suppose that the hidden dimension $D$ of the transformer is increased from $D_S$ to $D_T$. The head dimension $d$ is unchanged during expansion. Hence, the number of heads is increased from $D_S/d$ to $D_T/d$. 
% Furthermore, we study the case where $D_T \leq 2D_S$ for better illustration.\footnote{However, this can be easily extend to the scenario where $D_T > 2D_S$.}
We use $\W_i^K, \W_i^Q, \W_i^V$ to denote the key, query, and value weight matrix for $i$-th head $\text{Head}_i$ in the \MHA{} module. We use $W_O$ to denote the projection matrix.

We use $\expando_\text{block}$ to denote the width expansion of Pre-LN \Transformer{} blocks. $\expando_\text{block}$ can be decomposed into (1) LayerNorm expansion $\expando_\text{LN}$, (2) \MHA{} module expansion $\expando_\text{MHA}$, and  (3) \MLP{} module expansion $\expando_\text{MLP}$. We introduce these expansion methods in the following paragraphs. We provide an illustration of $\expando_\text{LN}$ and $\expando_\text{MHA}$ in \autoref{fig:illustration_mha}. 

% Note that it can be easily extended to \MLP{} module. 

\textbf{(1) LayerNorm expansion with $\expando_\text{LN}$.} We define the expansion procedure for \LN{} as follows. We use $\LN(\cdot;\bm{\mu}_\text{rand}^*,\bm{\beta}_\text{zero}^*, \epsilon^*)$ where $\bm{\mu}_\text{rand}^*=\eta\expand_\text{rand}(\bm{\mu}) \in \R^{D_T}$, $\bm{\beta}_\text{zero}^* = \expand_\text{zero}(\bm{\beta}) \in \R^{D_T}$, and $\epsilon^*=\eta^2\epsilon$ with $\eta=\lfloor D_T / D_S\rfloor * (D_S / D_T)$ to expand the original LayerNorm layer $\LN(\cdot;\bm{\mu},\bm{\beta}, \epsilon)$. The expansion is lossless and  the proof is given in \autoref{prop:layernorm}. Moreover, $\expando_\text{LN}$ is $(\expand_\text{avg}, \expand_\text{zero})$-lossless for $\LN(\cdot)$.
In \autoref{fig:illustration_mha}, we omit $\epsilon$ and $\bm{\beta}$ for better illustration.

\textbf{(2) \MHA{}  expansion with $\expando_\text{MHA}$.}
We explain how to expand \MHA{} as follow:
\begin{itemize}
    \item \textbf{$\W_i^K, \W_i^Q, \W_i^V$ in self attention.} We consider the affine transformations applied to a single token $\x \in \R^{D_S}$ \footnote{In the formulation of \MHA{} in \autoref{sect:prelim}, $\W_i^K, \W_i^Q, \W_i^V$ are right matrix multiplied with the input sequence matrix $\X \in \R^{E \times D_S}$. Here we use the form of $\W_i \x$ for better illustration.} in a sequence in self attention in the form of $\klow_i(\x;\W_i^K, \bias_i^K)=(\W_i^K)^\T\x+\bias_i^K$, $\qlow_i(\x;\W_i^Q, \bias_i^Q)=(\W_i^Q)^\T\x+\bias_i^Q$, and $\vlow_i(\x;\W_i^V, \bias_i^V)=(\W_i^V)^\T\x+\bias_i^V$ where $(\W_i^K)^\T, (\W_i^Q)^\T, (\W_i^V)^\T \in \R^{d_K \times D_S}$ and $\bias_i^K, \bias_i^Q, \bias_i^V \in \R^{d_K}$. 
    
    During expansion, we increase the dimension of $(\W_i^K)^\T, (\W_i^Q)^\T, (\W_i^V)^\T$ from $\R^{d_K \times D_S}$ to $\R^{d_K \times D_T}$, and $\bias_i^K, \bias_i^Q, \bias_i^V$ unchanged. Since the number of rows for  $(\W_i^K)^\T, (\W_i^Q)^\T, (\W_i^V)^\T$ is unchanged, we only increase the number of columns by applying column-random expansion $\expando_\text{col,rand}$ defined in \autoref{example:op-C-rand} to its transpose for each head, i.e.,
    we use $\left\{\expando_\text{col,rand}\left[(\W_i^K)^\T; \bm{\zeta}_i^K\right]\right\}^\T$, $\left\{\expando_\text{col,rand}\left[(\W_i^Q)^\T; \bm{\zeta}_i^Q\right]\right\}^\T$, and $\left\{\expando_\text{col,rand}\left[(\W_i^V)^\T; \bm{\zeta}_i^V\right]\right\}^\T$ for the expanded weights of $\W_i^K, \W_i^Q$ and $\W_i^V$, where $\bm{\zeta}_i^K, \bm{\zeta}_i^Q, \bm{\zeta}_i^V \in \R^{d_k \times (D_T \bmod D_S)}$ are random matrices. Biases are unchanged.
    \item \textbf{Heads in self attention.} We increase the number of heads in a circular pattern. See \autoref{fig:method-heads} for an illustration. Note that (1) When $\lfloor D_T/D_S\rfloor>1$, we can set $\W^1, \cdots, \W^{\lfloor D_T/D_S\rfloor}$ differently for replicated heads to break symmetry; (2) Additionally, when $D_T \bmod D_S \neq 0$, random matrices $\bm{\zeta}_i^K, \bm{\zeta}_i^Q, \bm{\zeta}_i^V$ can be chosen differently for replicated heads to break symmetry. Please see \autoref{example:op-C-rand} for definitions of $\W^1, \cdots, \W^{\lfloor D_T/D_S\rfloor}$ and $\bm{\zeta}_i^K, \bm{\zeta}_i^Q, \bm{\zeta}_i^V$.
    \item \textbf{Projection matrix in self attention.} For the projection transformation in the form of $\W_O^\T \x + \bias_O$ where $\W_O^\T \in \R^{D_S \times D_S} $ and $\bias_O \in \R^{D_S} $, we use $\expando_\text{col,circ}$ and $\expando_\text{row,avg}$ defined in \autoref{example:op-C-circ} and \autoref{example:op-R-avg} to expand the weights and biases. Specifically, we use $\left\{\expando_\text{col,circ}\left[\expando_\text{row,avg}(\W_O^\T)\right]\right\}^\T \in \R^{D_T \times D_T}$ for the expanded weight of $\W_O$. We then use $\expand_\text{avg}(\bias_O) \in \R^{D_T}$ for the expanded bias of $\bias_O$.
    
    % Specifically, $(\W_O^1)^\T[:,:D_T-D_S] + \W_O^\text{res} = (\W_O)^\T[:,D_T-D_S]$  and $(\W_O^1)^\T[:,D_T-D_S:] =  (\W_O)^\T[:,D_T-D_S:]$. Moreover, $(\bar{\W}_O^1)^\T[i,:D_T-D_S] + \bar{\W}_O^\text{res} = (\bar{\W}_O)^\T[i,D_T-D_S]$  and $(\bar{\W}_O^1)^\T[i,D_T-D_S:] =  (\bar{\W}_O)^\T[i,D_T-D_S:]$ for all $i \in [D_T-D_S]$. 
    
\end{itemize}

Moreover, $\expando_\text{MHA}$ is $(\expand_\text{zero}, \expand_\text{avg})$-lossless for $\MHA(\cdot)$.

\textbf{(3) \MLP{}  expansion with $\expando_\text{MLP}$.}
Consider the \MLP{} in the form of $\MLP(\x) = \W_\text{fc2}\sigma(\W_\text{fc1} \x + \bias_\text{fc1})+ \bias_\text{fc2}$ where $\sigma$ is the non-linear activation. We explain how to expand \MLP{} as follow:

\begin{itemize}
    \item For the first fully-connected layer, we increase the columns by random expansion and increase the rows by circular expansion. Specifically, we use $\expando_\text{col,rand}\left[\expando_\text{row,circ}\left(\W_\text{fc1}\right)\right]$ and $\expand_\text{circ}(\bias_\text{fc1})$ for the expanded weight and bias.
    \item For the second fully-connected layer, we increase the columns by circular expansion and increase the rows by average expansion. Specifically, we use $\expando_\text{col,circ}\left[\expando_\text{row,avg}\left(\W_\text{fc2}\right)\right]$ and $\expand_\text{avg}(\bias_\text{fc2})$ for the expanded weight and bias.
\end{itemize}

Moreover, $\expando_\text{MLP}$ is $(\expand_\text{zero}, \expand_\text{avg})$-lossless for $\MLP(\cdot)$.

\subsection{Width expansion of other layers}

In this section, we explain how to expand the rest of the layers, i.e., embedding layers and decoder layers.

\textbf{Embeddings expansion with $\expand_\text{avg}$.} We first average expand the embedding for each token $\mathbf{x}$ by adding its average, i.e., with $\expand_\text{avg}$. For \vit, we do so by adding averaged channels for patch embeddings. 

\textbf{Decoder layer expansion with $\expando_\text{dec}$.} For \vit, the decoder layer is a fully-connected layer with the form $\texttt{Dec}(\x) = \W_\text{dec} \x + \bias$. We increase the rows of the matrix by applying column-random expansion to its transpose, i.e., we use $\expando_\text{col,rand}(\W_\text{dec})$ for the expanded weights. The bias is unchanged. 

For language models, the decoder layer is shared with the embedding layer. So we have to instead scale the weight and bias of the LayerNorm before the decoder layer by $1/\lfloor D_T / D_S\rfloor$. Moreover, $\expando_\text{dec}$ is $(\expand_\text{zero}, \id)$-lossless for $\texttt{Dec}$.

\subsection{Depth expansion}

Depth expansion is explained in the \autoref{sect:method}.

% \textbf{Projection matrix in self attention.} For the projection matrix, we can generate two random parameter matrix $\Tilde{W}^\text{Circ}_O \in \mathbb{R}^{D_S \times (D_T-D_S)}$ and $\bar{W}^\text{Circ}_O \in \mathbb{R}^{(D_T-D_S) \times (D_T-D_S)}$. Since heads are added in a circular pattern, we need to ensure the fan-out weights of replicated heads (and hence the corresponding neurons) to sum up to the original weights. Specifically, for the input dimension, the modified weights should satisfy the constraint that $\Tilde{W}_O[:,:D_T-D_S] + \Tilde{W}^\text{Circ}_O = W_O[:,:D_T-D_S]$ and $\Tilde{W}_O[:,D_T-D_S:] = W_O[:,D_T-D_S:]$. 

% Moreover, for the output dimension, we should perform average expansion such that the output of the block is average-expanded. Specifically, we denote the row average of projection matrix as $\bar{w}_O=\frac{1}{D_S}\sum_i W_O[i, :]$. Then we ensure that for each row $i$, $\bar{W}_O[i,:D_T-D_S] + \bar{W}^\text{Circ}_O[i,:] = \bar{w}_O[i,:D_T-D_S]$ and $\bar{W}_O[i,D_T-D_S:] = \bar{w}_O[D_T-D_S:]$.

\subsection{Parameter choices}
\label{sect:appendix-params-choice}
We consider the case $D_T \leq 2D_S$ for better illustration.\footnote{In fact we only need to deal with such cases in our experiments.} There are mainly the following parameters to choose for LEMON. For the non-divisible case, we set the random parameter $\bm{\zeta}$ in the LayerNorm such that $\bm{\zeta}\sim \text{Unif}(-1,1)$. When using matrix column-random expansion $\expando_\text{C, rand}$ for the indivisible case, we use $\bm{\zeta}_{i,j} \overset{\mathrm{iid}}{\sim} \mathcal{N}(0,0.02^2)$.

\textbf{Vision transformers.} For the width expansion parameters of the \vit{}, we set $\W^\text{res}$ for indivisible case and $\W^2$ for divisible case to be $\frac{1}{2} \W_O^\T + \Phi$, where $\Phi \in \mathbb{R}^{D_S \times (D_T-D_S)}$ is randomly initialized and $\Phi_{i,j} \overset{\mathrm{iid}}{\sim} \mathcal{N}(0,0.02^2)$.

For the depth expansion parameters, we set the free parameters that are used to cancel out replicated neurons following the distribution $\mathcal{N}(0,0.02^2)$.

\textbf{Language models.} For the width expansion parameters of BERT, we set $\W^\text{res}$ for indivisible case and $\W^2$ for divisible case to $\Phi$, where $\Phi \in \mathbb{R}^{D_S \times (D_T-D_S)}$ is randomly initialized and $\Phi_{i,j} \overset{\mathrm{iid}}{\sim} \mathcal{N}(0,0.002^2)$.

For the depth expansion parameters, we set the projection matrix of the MHA block and the second fully-connected layer of the MLP block to be zero matrices. Moreover, inspired by advanced knowledge initialization (AKI) \citep{chen2021bert2bert}, we append heads/neurons from the next adjacent layer.\footnote{This is still lossless since the last layer is a left-multiplied with a zero matrix followed by adding a zero vector.}

% \section{Downstream task for BERT}

\section{LEMON for other architectures}
\label{sect:other-norms}
Though we haven't included experiments for Post-Res-Norm and Post-Norm blocks in our main experiments, we show that LEMON is able to perform lossless model expansion for these scenarios. We then briefly discuss how to handle RMS norm \citep{rmsnorm}, which is used in \llama{} \citep{touvron2023llama}. We also discuss how to apply LEMON on convolutional neural networks.

\subsection{Post-Res-Norm \Transformers{}}
We consider the \Transformer{} with the following architecture: (1) an embedding layer, (2) several Post-Res-Norm blocks, and (3) the final decoder layer.\footnote{We assume there is no final LayerNorm before the final decoder layer.}

\subsubsection{Width expansion}

The only difference between the expansion methods of Post-Res-Norm \Transformers{} and Pre-LN \Transformers{} is that we zero expand embedding vector for each token with $\expand_\text{zero}$. 

For the MHA and MLP modules, we use the exact same expansion introduced in \autoref{sect:detail-expansion-procedure}, where it is $(\expand_\text{zero}, \expand_\text{avg})$-lossless for $\MHA$ and $\MLP$. Consequently, our expansion is $(\expand_\text{zero}, \expand_\text{zero})$-lossless for Post-Res-Norm \Transformer{} blocks. Since the last decoder expansion is $(\expand_\text{zero}, \id)$-lossless for $\texttt{Dec}$, our expansion method is strict lossless.

\subsubsection{Depth expansion}
For increasing depth, we only need to set the weights and bias of the LayerNorm for each added layer to be all zeros.

\subsection{Post-LN \Transformers{}}

For Post-LN \Transformers{}, we can only deal with divisible cases, i.e., $D_T \bmod D_S = 0$. Suppose $D_T/D_S = n$, in this case, all the embedding and outputs of modules (MLP and MHA) are duplicated $n$ times and hence lossless. The only difficulty is to deal with depth expansion.

\textbf{Depth expansion.} Suppose we are given a pre-trained Post-LN \Transformer{} block $g_1(\x) = \LN_1 (\texttt{Module}_1(\x)+\x) = \bm{\mu}_1 \odot \texttt{Norm}(\texttt{Module}_1(\x)+\x)+\bias_1$. First we expand $\texttt{Module}_1$ to $\texttt{Module}_1^{0,*}$ so that it outputs zeros. Then we can create two expanded layers $g^*_1,g^*_2$ where $g_1^*(\x^*) = \1 \odot \texttt{Norm}(\texttt{Module}^{0,*}_1(\x^*)+\x^*)+\0=\texttt{Norm}(\x^*)$ and $g_2^*(\x^*) = \bm{\mu}_1^* \odot \texttt{Norm}(\texttt{Module}^*_1(\x^*)+\x^*)+\bias_1^*$. It is easy to show that $g_2^* \circ g_1^*$ is lossless where we use the fact that $\texttt{Norm}(\texttt{Norm}(\x)) = \texttt{Norm}(\x)$.

\subsection{\Transformers{} with RMS Norm}
\label{sect: expand-rms}
RMS Norm \citep{rmsnorm} is used by foundation models like \llama{} \citep{touvron2023llama} and Baichuan \citep{yang2023baichuan}. See \autoref{def:rmsnorm} for the definition of RMS Norm. Suppose we want to expand the RMS Norm from dimension $D_S$ to $D_T$, we use the following expansion.

\textbf{RMS Norm expansion with $\expando_\text{RMS}$.} We use $\RMS(\cdot;\bm{\mu}_\text{rand}^*, \epsilon^*)$ where $\bm{\mu}_\text{rand}^*=\eta\expand_\text{rand}(\bm{\mu}) \in \R^{D_T}$,  and $\epsilon^*=\eta^2\epsilon$ with $\eta=\lfloor D_T / D_S\rfloor * (D_S / D_T)$ to expand the original RMS Norm layer $\LN(\cdot;\bm{\mu},\bm{\beta}, \epsilon)$. The expansion is $(\expand_\text{zero}, \expand_\text{zero})$-lossless for $\RMS(\cdot)$. The proof is provided in \autoref{prop:rmsnorm-lossless}.

\subsection{Convolutional neural networks}
\label{sect: expand-cnn}
We use $\conv(k\times k,C_\text{in},C_\text{out})$ to denote convolutional layer with $C_\text{in}$ in-channels, $C_\text{out}$ out-channels, and kernel size $k\times k$ . We assume the kernel weight is $\W \in \R^{C_\text{out}\times C_\text{in}\times k \times k}$ and bias $\bias \in \R^{C_\text{out}}$.  We use \BN{} and \relu{} to denote BatchNorm and ReLU, respectively.  \resnet{} and \wresnet{} with more than 50 layers consist of multiple Bottleneck blocks, where there are 3 sub-blocks: (1) $\conv(D, D_S,1\times1)$-\BN-\relu, (2) $\conv(D_S, D_S,3\times3)$-\BN-\relu, and (3) $\conv(D_S, D,1\times1)$-\BN{} in the residual branch. 

We consider expanding \resnet{} to \wresnet{} with the same depth.\footnote{Depth increase can be also applied.} During expansion, we increase the number of channels from $D_S$ to $D_T$. To apply expansion, we do the following:

(1) For the first sub-block, increase the number of out-channels of the first convolutional layer from $D_S$ to $D_T$. Specifically, the expanded weight satisfies $\W^*[i,:,:,:] = \W[i \bmod D_S,:,:,:], \forall i \in [D_T]$, and $\bias^*[i] = \bias[i \bmod D_S], \forall i \in [D_T]$. The output of the convolutional layer will be also in a circular pattern in the channel dimension. This also holds true after the application of BatchNorm and ReLU since the statistics of BatchNorm are computed within channels.

(2) For the second sub-block, increase the number of out-channels and in-channels of the first convolutional layer from $D_S$ to $D_T$. We apply the same operation to the out-channels dimension similar to (1). For the in-channel dimension, we need to make sure that the weights of replicated channels sum up to the original weight. Specifically, suppose that the replicated channels indices are denoted $\mathcal{C}_z=\{i|i \bmod D_S=z\}$. Then we need to set $\sum_{i\in \mathcal{C}_k}\W^*[i,:,:,:] = \W[k,:,:,:]$ for lossless expansion. Moreover, we need to make sure $\W^*[i,a,b,c] \neq \W^*[j,a,b,c], \forall i,j \in \mathcal{C}_z, a\in \left[C_\text{in}\right], b\in\left[k\right], c\in\left[k\right], z\in [C_\text{out}]$ for symmetry breaking.

(3) For the last sub-block, increase the number of in-channels of the first convolutional layer from $D_S$ to $D_T$ similar to (2).

\section{Proofs}
\label{sect:proofs}
\subsection{Proofs for \Transformers{} with LayerNorm}

% \subsubsection{LayerNorms}
\label{sect:proof-layernorm}

In this section, we first show that three main components $\expando_\text{LN}$, $\expando_\text{MHA}$, and $\expando_\text{MLP}$ are lossless. Then, we prove that LEMON defined in \autoref{sect:detail-lemon-on-preln} is lossless.

We first start by showing that our LayerNorm expansion $\expando_\text{LN}$ defined in \autoref{sect:detail-expansion-procedure} is lossless.

\begin{proposition}[Lossless expansion for LayerNorm $\expando_\text{LN}$]
\label{prop:layernorm}
    % Consider LayerNorm layer with element-wise affine-transformation  in the form of $\LN(\cdot; \bm{\mu}, \bm{\beta}) = \bm{\mu} \odot \text{Norm}(\cdot) + \bm{\beta}$, where $\bm{\mu}, \bm{\beta} \in \mathbb{R}^{D_S}$. 
    Consider $\LN(\cdot; \bm{\mu}, \bm{\beta}, \epsilon)$ of dimension $D_S$ where $\bm{\mu}, \bm{\beta} \in \R^{D_S}$. Define average expanded of $\mathbf{x} \in \mathbb{R}^{D_S}$ of dimension $D_T$ to be $\mathbf{x}^*_\text{avg} = \expand_\text{avg}(\x) \in \mathbb{R}^{D_T}$, where $D_T \geq D_S$.
 If $\bm{\mu}_\text{rand}^*=\eta\expand_\text{rand}(\bm{\mu}) \in \R^{D_T}$, $\bm{\beta}_\text{zero}^* = \expand_\text{zero}(\bm{\beta}) \in \R^{D_T}$, and $\epsilon^*=\eta^2\epsilon$, where $\eta=\sqrt{\lfloor D_T / D_S\rfloor * (D_S / D_T)}$, then
 \begin{align*}
     \LN(\mathbf{x}^*_\text{avg};\bm{\mu}_\text{rand}^*,\bm{\beta}_\text{zero}^*, \epsilon^*) 
     % \concat{\LN(\mathbf{x};\bm{\mu},\bm{\beta}, \epsilon), \0_{D_T\bmod D_S}}. 
     =\expand_\text{zero}(\LN(\mathbf{x};\bm{\mu},\bm{\beta}, \epsilon)).
     % =\LN(\mathbf{x};\bm{\mu},\bm{\beta}, \epsilon)_\text{zero}^*.
 \end{align*}
\end{proposition}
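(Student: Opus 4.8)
The plan is to compute both sides of the claimed identity directly from the definition of LayerNorm (Definition~\ref{def:layernorm}) and the definition of the average expansion $\expand_\text{avg}$, and check that they agree coordinate-by-coordinate. The key observation is that average expansion is mean-preserving and changes the (population) variance by a predictable factor; both facts follow from elementary bookkeeping with sums.

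First I would establish the two scalar facts about $\x^*_\text{avg} \in \R^{D_T}$ in terms of $\x \in \R^{D_S}$. Write $q = \lfloor D_T/D_S \rfloor$ and $r = D_T \bmod D_S$, so $D_T = qD_S + r$. Since $\x^*_\text{avg}$ consists of $q$ exact copies of $\x$ together with $r$ entries each equal to $\avg(\x) = \E[\x]$, we get $\sum_i \x^*_\text{avg}[i] = q\sum_i \x[i] + r\,\E[\x] = (qD_S + r)\E[\x] = D_T\,\E[\x]$, hence $\E[\x^*_\text{avg}] = \E[\x]$. For the variance, $\sum_i (\x^*_\text{avg}[i] - \E[\x])^2 = q\sum_i(\x[i]-\E[\x])^2 + r\cdot 0 = qD_S\,\Var[\x]$, so $\Var[\x^*_\text{avg}] = \frac{qD_S}{D_T}\Var[\x] = \eta^2\,\Var[\x]$, recalling $\eta^2 = qD_S/D_T = \lfloor D_T/D_S\rfloor(D_S/D_T)$. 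Consequently the normalization denominator transforms as $\sqrt{\Var[\x^*_\text{avg}] + \epsilon^*} = \sqrt{\eta^2\Var[\x] + \eta^2\epsilon} = \eta\sqrt{\Var[\x]+\epsilon}$, using $\epsilon^* = \eta^2\epsilon$ and $\eta > 0$.

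Next I would plug into the LayerNorm formula. For any index $i$, the normalized value is
\begin{align*}
\frac{\x^*_\text{avg}[i] - \E[\x^*_\text{avg}]}{\sqrt{\Var[\x^*_\text{avg}]+\epsilon^*}} = \frac{\x^*_\text{avg}[i] - \E[\x]}{\eta\sqrt{\Var[\x]+\epsilon}}.
\end{align*}
For $i$ in one of the $q$ copied blocks, $\x^*_\text{avg}[i] = \x[j]$ for the corresponding $j$, so after multiplying by $(\bm{\mu}^*_\text{rand})[i] = \eta\,\bm{\mu}[j]$ the factor $\eta$ cancels and we recover $\frac{\x[j]-\E[\x]}{\sqrt{\Var[\x]+\epsilon}}\bm{\mu}[j]$, and then adding $(\bm{\beta}^*_\text{zero})[i] = \bm{\beta}[j]$ gives exactly $\LN(\x;\bm{\mu},\bm{\beta},\epsilon)[j]$. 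For $i$ in the trailing $r$ entries, $\x^*_\text{avg}[i] - \E[\x] = 0$, so the whole product vanishes and $(\bm{\beta}^*_\text{zero})[i] = 0$ gives output $0$. Matching these against the definition of $\expand_\text{zero}$ — which repeats $\LN(\x;\bm{\mu},\bm{\beta},\epsilon)$ a total of $q$ times and then pads with $r$ zeros — yields the claimed equality. Note that the arbitrariness of $\bm{\zeta}$ in $\expand_\text{rand}$ is harmless precisely because it only enters the trailing coordinates, which are multiplied by $0$.

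I do not anticipate a genuine obstacle here; the statement is a routine verification. The one place to be slightly careful is the precise value of $\eta$ (the exponent-$\tfrac12$ convention in this proposition versus the $\eta = \sqrt{D_S/D_T}$ used in the simplified $D_T \le 2D_S$ discussion of Section~\ref{sect:method-average}, where $q=1$): one must keep $\eta^2 = qD_S/D_T$ throughout so that the variance rescaling and the $\epsilon$ rescaling match, and confirm $\eta \in (0,1]$ so that $\sqrt{\eta^2(\cdot)} = \eta\sqrt{\cdot}$ with no sign issues. Everything else is cancellation of the common factor $\eta$ between $\bm{\mu}^*_\text{rand}$ and the inflated denominator.
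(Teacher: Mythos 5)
Your proposal is correct and follows essentially the same route as the paper's proof: establish $\E[\x^*_\text{avg}]=\E[\x]$ and $\Var[\x^*_\text{avg}]=\eta^2\Var[\x]$, observe that the denominator becomes $\eta\sqrt{\Var[\x]+\epsilon}$ so the $\eta$ in $\bm{\mu}^*_\text{rand}$ cancels, and then verify the identity coordinate-wise on the $q$ copied blocks and the $r$ trailing entries (where the numerator vanishes and $\bm{\zeta}$ is irrelevant). No gaps; the extra care you take about $\eta\in(0,1]$ and the sign of the square root is a minor point the paper leaves implicit.
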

\begin{proof}
 Since $\E[\x^*_\text{avg}] = \frac{1}{D_T} \sum_i \x_\text{avg}^*[i] = \frac{1}{D_T}\left( {\lfloor D_T / D_S\rfloor}\sum_i^{D_S}\x[i] + (D_T \bmod D_S) \E[x]\right) = \E[x]$ and $\Var[\x^*_\text{avg}] =\frac{1}{D_T}\left( {\lfloor D_T / D_S\rfloor}D_S\Var[\x] + (D_T \bmod D_S) * 0 \right)=\eta^2\Var[\x]$,
\begin{itemize}
    \item For $1\leq i \leq {\lfloor D_T / D_S\rfloor}D_S$:
    \begin{align*}
     \LN(\mathbf{x}^*_\text{avg};\bm{\mu}_\text{rand}^*,\bm{\beta}_\text{zero}^*, \epsilon^*)[i] 
     &=  \frac{\x_\text{avg}^*[i]-\E[\x_\text{avg}^*]}{\sqrt{\Var[\x_\text{avg}^*]+\epsilon^*}} \odot  \bm{\mu}_\text{rand}^*[i] + \bm{\beta}_\text{zero}^*[i] \\
     &=  \frac{\x[i \bmod D_S]-\E[\x]}{\eta\sqrt{\Var[\x]+\epsilon}} \odot \eta \bm{\mu}[i \bmod D_S] + \bm{\beta}[i \bmod D_S] \\
     % &=\LN(\mathbf{x};\bm{\mu},\bm{\beta}, \epsilon)_\text{zero}^*[i].
     &= \expand_\text{zero}(\LN(\mathbf{x};\bm{\mu},\bm{\beta}, \epsilon))[i]
 \end{align*}

     \item For ${\lfloor D_T / D_S\rfloor}D_S\leq i \leq D_T$:
    \begin{align*}
     \LN(\mathbf{x}^*_\text{avg};\bm{\mu}_\text{rand}^*,\bm{\beta}_\text{zero}^*, \epsilon^*)[i] 
     &=  \frac{\x_\text{avg}^*[i]-\E[\x_\text{avg}^*]}{\sqrt{\Var[\x_\text{avg}^*]+\epsilon^*}} \odot  \bm{\mu}_\text{rand}^*[i] + \bm{\beta}_\text{zero}^*[i] \\
     &=  \frac{\E[\x]-\E[\x]}{\eta\sqrt{\Var[\x]+\epsilon}} \odot  \eta\bm{\zeta}[i \bmod D_S] + 0 \\
     &=0 \\ 
     % &=\LN(\mathbf{x};\bm{\mu},\bm{\beta}, \epsilon)_\text{zero}^*[i].
     &=\expand_\text{zero}(\LN(\mathbf{x};\bm{\mu},\bm{\beta}, \epsilon))[i].
 \end{align*}
\end{itemize}
Hence, $\LN(\mathbf{x}^*_\text{avg};\bm{\mu}_\text{rand}^*,\bm{\beta}_\text{zero}^*, \epsilon^*) = 
     % \concat{\LN(\mathbf{x};\bm{\mu},\bm{\beta}, \epsilon), \0_{D_T\bmod D_S}}.
     % \LN(\mathbf{x};\bm{\mu},\bm{\beta}, \epsilon)_\text{zero}^*$.
     \expand_\text{zero}(\LN(\mathbf{x};\bm{\mu},\bm{\beta}, \epsilon))$.
\end{proof}

\begin{remark}
    When $D_T$ is divisible by $D_S$, then $\eta=1$. Hence, it explains why simply circularly expanding LayerNorm is lossless in such a scenario.
\end{remark}

\autoref{prop:layernorm} naturally leads to the following corollary.

\begin{corollary}
     $\expando_\text{LN}$ introduced in \autoref{def:layernorm}  is $(\expand_\text{avg}, \expand_\text{zero})$-lossless for $\LN(\cdot)$.
     % \concat{\LN(\mathbf{x};\bm{\mu},\bm{\beta}, \epsilon), \0_{D_T\bmod D_S}}.
     % \LN(\mathbf{x};\bm{\mu},\bm{\beta}, \epsilon)_\text{zero}^*.
\end{corollary}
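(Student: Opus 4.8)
The corollary is an immediate consequence of \autoref{prop:layernorm} together with the definition of lossless expansion for operators. Recall that, by that definition, establishing that $\expando_\text{LN}$ is $(\expand_\text{avg}, \expand_\text{zero})$-lossless for $\LN(\cdot)$ amounts to two things: first, checking that $\expand_\text{avg}$ and $\expand_\text{zero}$ are genuine lossless vector space expansions on the input and output spaces (here $\R^{D_S} \to \R^{D_T}$ with $D_T \geq D_S$, so the required dimension inequalities hold); and second, verifying the intertwining identity $\expand_\text{zero}\big(\LN(\x; \bm{\mu}, \bm{\beta}, \epsilon)\big) = \expando_\text{LN}[\LN](\expand_\text{avg}(\x))$ for every $\x \in \R^{D_S}$.

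For the first point I would simply invoke the earlier claim that vector average expansion $\expand_\text{avg}$ and vector zero expansion $\expand_\text{zero}$ are lossless, their common inverse being truncation to the first $D_S$ coordinates; thus both maps are admissible as the $\expand_\text{in}$ and $\expand_\text{out}$ demanded by the definition. For the second point I would unfold the construction of $\expando_\text{LN}$ from \autoref{sect:detail-expansion-procedure}: by definition $\expando_\text{LN}[\LN(\cdot; \bm{\mu}, \bm{\beta}, \epsilon)] = \LN(\cdot; \bm{\mu}_\text{rand}^*, \bm{\beta}_\text{zero}^*, \epsilon^*)$ with $\bm{\mu}_\text{rand}^* = \eta\,\expand_\text{rand}(\bm{\mu})$, $\bm{\beta}_\text{zero}^* = \expand_\text{zero}(\bm{\beta})$, and $\epsilon^* = \eta^2\epsilon$. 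Substituting $\x_\text{avg}^* = \expand_\text{avg}(\x)$ into this expanded operator, the identity to be checked becomes exactly $\LN(\x_\text{avg}^*; \bm{\mu}_\text{rand}^*, \bm{\beta}_\text{zero}^*, \epsilon^*) = \expand_\text{zero}\big(\LN(\x; \bm{\mu}, \bm{\beta}, \epsilon)\big)$, which is precisely the conclusion of \autoref{prop:layernorm}. Hence $(\expand_\text{avg}, \expand_\text{zero})$ witnesses the losslessness of $\expando_\text{LN}$ and the corollary follows.

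There is essentially no hard step here; the only thing that needs a moment of attention is bookkeeping rather than mathematics. One should make sure the scalar $\eta = \sqrt{\lfloor D_T/D_S\rfloor \cdot (D_S/D_T)}$ used in \autoref{prop:layernorm} is the same one baked into the definition of $\expando_\text{LN}$, and one should note that the statement is insensitive to the arbitrary padding vector $\bm{\zeta}$ appearing inside $\expand_\text{rand}(\bm{\mu})$: as the proof of \autoref{prop:layernorm} shows, the trailing $D_T \bmod D_S$ coordinates of $\x_\text{avg}^*$ all equal $\E[\x]$, so after centering they vanish identically and the value of $\bm{\zeta}$ plays no role. This is exactly why any member of the $\expand_\text{rand}$ family (including $\expand_\text{avg}$, $\expand_\text{zero}$, $\expand_\text{circ}$) may be chosen for the scale parameter $\bm{\mu}$ while the same $(\expand_\text{avg}, \expand_\text{zero})$-losslessness persists.
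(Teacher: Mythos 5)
Your proposal is correct and takes essentially the same route as the paper, which simply states that the corollary follows immediately from \autoref{prop:layernorm}; your extra bookkeeping (checking that $\expand_\text{avg}$ and $\expand_\text{zero}$ are invertible via truncation, and that the $\eta$ in the proposition matches the one in the definition of $\expando_\text{LN}$) just makes explicit what the paper leaves implicit. The observation that the arbitrary padding $\bm{\zeta}$ is irrelevant because the trailing coordinates vanish after centering is also consistent with the proposition's proof.
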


Using \autoref{claim:consecutive-apply}, we are ready to prove that $\expando_\text{MHA}$ and $\expando_\text{MLP}$ are lossless. We first show that $\expando_\text{MHA}$ is lossless in \autoref{prop:MHA-lossless}.

\begin{proposition}[Lossless of $\expando_\text{MHA}$]
    \label{prop:MHA-lossless}
    $\expando_\text{MHA}$ defined in \autoref{sect:detail-expansion-procedure} is $(\expand_\text{zero}, \expand_\text{avg})$-lossless for \MHA{}.
\end{proposition}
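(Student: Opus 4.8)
The plan is to write $\MHA$ as a composition of four stages and push a vector-space expansion through each stage with \autoref{claim:consecutive-apply}: (i) the per-head affine maps $\klow_i,\qlow_i,\vlow_i$ producing $\K_i,\Q_i,\V_i$; (ii) the parameter-free attention operation $\text{Head}_i=\text{Attention}(\Q_i,\K_i,\V_i)$ applied head-by-head; (iii) concatenation of the $H$ heads; and (iv) the output affine map $\x\mapsto\W_O^\T\x+\bias_O$. I would show that the expansion is $(\expand_\text{zero},\id)$-lossless through (i), that (ii)--(iii) together carry a zero-expanded token to an $\expand_\text{circ}$-expanded concatenation of heads, and that (iv) is $(\expand_\text{circ},\expand_\text{avg})$-lossless; chaining then yields $(\expand_\text{zero},\expand_\text{avg})$-lossless for $\MHA$. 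Throughout I work token-by-token, since the sequence dimension $E$ is untouched by the expansion.

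For stage (i): each of $(\W_i^K)^\T,(\W_i^Q)^\T,(\W_i^V)^\T$ is expanded only along its columns via $\expando_\text{col,rand}$ (the number of rows, i.e.\ the head dimension $d=d_K=d_V$, is fixed), which is $(\expand_\text{zero},\id)$-lossless by \autoref{example:op-C-rand}; the biases are left unchanged. The crucial observation is that the padded columns multiply exactly the zero coordinates of a zero-expanded token, so $\K_i^*=\K_i$, $\Q_i^*=\Q_i$, $\V_i^*=\V_i$ \emph{exactly} --- and this holds for every head, including the circularly replicated ones, regardless of which column decomposition or random padding was used to break symmetry. Since $\text{Attention}(\cdot)$ has no parameters, the softmax weights and hence $\text{Head}_i^*$ are unchanged (not merely ``expanded''), so head $i$ of the target equals head $((i-1)\bmod H_S)+1$ of the source for all $i\in[H_T]$, where $H_S=D_S/d$ and $H_T=D_T/d$. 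Using $d\mid D_S$ and $d\mid D_T$ --- so that $D_T\bmod D_S=(H_T\bmod H_S)\,d$ --- this block-circular replication of heads is precisely the vector circular expansion $\expand_\text{circ}$ of $\concat{\text{Head}_1,\dots,\text{Head}_{H_S}}$; thus stages (i)--(iii) compose to an $(\expand_\text{zero},\expand_\text{circ})$-lossless map.

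For stage (iv): $\W_O^\T$ is column-expanded by $\expando_\text{col,circ}$, which is $(\expand_\text{circ},\id)$-lossless by \autoref{example:op-C-circ} and exactly matches the $\expand_\text{circ}$-expanded input leaving stage (iii), and row-expanded by $\expando_\text{row,avg}$, which is $(\id,\expand_\text{avg})$-lossless by \autoref{example:op-R-avg}; by \autoref{claim: row-col-lossless} the combined matrix expansion is $(\expand_\text{circ},\expand_\text{avg})$-lossless, and expanding $\bias_O$ by $\expand_\text{avg}$ makes the bias addition $(\expand_\text{avg},\expand_\text{avg})$-lossless, so \autoref{claim:consecutive-apply} gives that the projection layer is $(\expand_\text{circ},\expand_\text{avg})$-lossless. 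One last application of \autoref{claim:consecutive-apply} to (i)--(iv) finishes the argument. The step I expect to be the main obstacle is (iii): carefully verifying that circular head replication, compounded with per-head column-random expansions that are deliberately \emph{not} identical across replicas, collapses exactly onto $\expand_\text{circ}$ at the level of the concatenated head outputs, together with confirming that the attention weights are genuinely preserved so that no error leaks into the projection stage.
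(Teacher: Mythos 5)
Your proof is correct and follows essentially the same route as the paper's: zero-expansion passes through the per-head key/query/value projections as an identity, attention preserves each head exactly, circular head replication yields a circ-expanded concatenation of heads, and the column-circular/row-average expansion of $\W_O^\T$ converts this to an avg-expanded output. You are in fact slightly more careful than the paper on two points --- verifying that head-level circular replication coincides with the vector-level $\expand_\text{circ}$ via $D_T \bmod D_S = (H_T \bmod H_S)\,d$, and treating $\bias_O$ as average-expanded (consistent with the construction in \autoref{sect:detail-expansion-procedure}), whereas the paper's proof text states the bias is unchanged.
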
 
\begin{proof}
    Consider a sequence input $\X \in \R^{E \times D_S}$ is expanded losslessly by $\expand_\text{zero}$ to $\X^*_\text{zero} \in \R^{E \times D_T}$. We expand the source small \MHA{} such that the target large model is $\MHA^*= \expando_\text{MHA}(\MHA)$. 
    
    We first check the key, query, and value of each head $\text{Head}^*_i$ such that $i\leq H=D_s/d$ for the large model $\MHA^*$. We denote them as $\K^*_i, \Q^*_i, \V^*_i \in \R^{E \times d_K}$. Note that biases $\bias_i^K, \bias_i^Q, \bias_i^V \in \R^{d_K}$ are not expanded. Hence, these outputs are identical to the output of the small source model $\K_i, \Q_i, \V_i \in \R^{E \times d_K}$ since $(\W_i^K)^\T, (\W_i^Q)^\T, (\W_i^V)^\T$ are expanded by $\expando_\text{C, rand}$, which is $(\expand_\text{zero}, \id)$-lossless. Consequently, $\text{Head}^*_i=\text{Attention}(\Q^*_i,\K^*_i,\V^*_i)=\Softmax\left(\Q^*_i(\K^*_i)^\T/\sqrt{d_K}\right)\V^*_i$ is identical to the output of $i$-th head of the  \MHA{} in the source small model, which is $\text{Head}_i$.

    Since heads are circularly expanded, the output of $\MHA^*$ is also $\expand_\text{circ}$ lossless.

    Finally, since $\W_O^\T$ is expanded by $\expando_\text{col,circ}$ and $\expando_\text{row,avg}$, which is $(\expand_\text{circ}, \expand_\text{avg})$-lossless. With the fact that bias $\bias_O$ is not expanded (unchanged), we obtain the result that $\expando_\text{MHA}$ is $(\expand_\text{zero}, \expand_\text{avg})$-lossless for \MHA{}.
\end{proof}

We then show that $\expando_\text{MLP}$ is lossless in \autoref{prop:MLP-lossless}.

\begin{proposition}[Lossless of $\expando_\text{MLP}$]
    \label{prop:MLP-lossless}
    This is easily obtained since the first fully-connected layer is $(\expand_\text{zero}, \expand_\text{circ})$-lossless. Hence, the output is $\expand_\text{circ}$ losslessly expanded. After applying element-wise nonlinear activation, the output is still $\expand_\text{circ}$ losslessly expanded. Since the second fully-connected layer is $(\expand_\text{zero}, \expand_\text{circ})$-lossless, we conclude the proof that $\expando_\text{MLP}$ is $(\expand_\text{zero}, \expand_\text{avg})$-lossless for \MLP{}.
\end{proposition}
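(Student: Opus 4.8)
The plan is to write $\MLP(\cdot) = g_3 \circ g_2 \circ g_1$, where $g_1(\x) = \W_\text{fc1}\x + \bias_\text{fc1}$ is the first affine map, $g_2 = \sigma$ is the element-wise nonlinearity, and $g_3(\x) = \W_\text{fc2}\x + \bias_\text{fc2}$ is the second affine map, establish losslessness for each of the three pieces separately, and then chain them with \autoref{claim:consecutive-apply}. The whole argument is bookkeeping on top of the operator-expansion examples already proved, plus one small commutation fact for $g_2$.

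First I would handle $g_1$. Writing it as a matrix multiplication followed by a bias add: the expanded weight is $\expando_\text{col,rand}[\expando_\text{row,circ}(\W_\text{fc1})]$, where by \autoref{example:op-R-circ} the row-circular part is $(\id, \expand_\text{circ})$-lossless and by \autoref{example:op-C-rand} the column-random part is $(\expand_\text{zero}, \id)$-lossless, so \autoref{claim: row-col-lossless} gives that the expanded matrix multiplication is $(\expand_\text{zero}, \expand_\text{circ})$-lossless. The bias is expanded by $\expand_\text{circ}(\bias_\text{fc1})$, so the bias-add operator is $(\expand_\text{circ}, \expand_\text{circ})$-lossless, and \autoref{claim:consecutive-apply} yields that the expanded $g_1$ is $(\expand_\text{zero}, \expand_\text{circ})$-lossless.

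For $g_2$ the key intermediate fact is that an element-wise nonlinearity commutes with circular vector expansion, i.e. $\sigma(\expand_\text{circ}(\x)) = \expand_\text{circ}(\sigma(\x))$ for every $\x$. This holds because $\expand_\text{circ}$ merely copies and reorders coordinates — it repeats $\x$ exactly $\lfloor D_T/D_S\rfloor$ times and then appends the prefix $\x[:D_T \bmod D_S]$ — while $\sigma$ acts identically on each coordinate; hence applying $\sigma$ before or after the expansion gives the same vector. Taking the operator expansion of $\sigma$ to be the identity on the output as well, this says the expanded $g_2$ is $(\expand_\text{circ}, \expand_\text{circ})$-lossless. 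Then $g_3$ is handled symmetrically to $g_1$: the expanded weight $\expando_\text{col,circ}[\expando_\text{row,avg}(\W_\text{fc2})]$ is $(\expand_\text{circ}, \expand_\text{avg})$-lossless by \autoref{example:op-R-avg}, \autoref{example:op-C-circ}, and \autoref{claim: row-col-lossless}, and the bias-add with $\expand_\text{avg}(\bias_\text{fc2})$ is $(\expand_\text{avg}, \expand_\text{avg})$-lossless, so the expanded $g_3$ is $(\expand_\text{circ}, \expand_\text{avg})$-lossless.

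Finally I would chain $g_1 \to g_2 \to g_3$ using \autoref{claim:consecutive-apply} twice: composing $(\expand_\text{zero}, \expand_\text{circ})$ with $(\expand_\text{circ}, \expand_\text{circ})$ gives $(\expand_\text{zero}, \expand_\text{circ})$, and composing that with $(\expand_\text{circ}, \expand_\text{avg})$ gives that $\expando_\text{MLP}$ is $(\expand_\text{zero}, \expand_\text{avg})$-lossless for $\MLP(\cdot)$. I do not anticipate a genuine obstacle: the only step requiring any care is the commutation of $\sigma$ with $\expand_\text{circ}$, and the crux of the construction — the rescaling trick for normalization layers and the averaging that feeds into it — was already discharged in the earlier definitions and in \autoref{prop:layernorm} and \autoref{prop:MHA-lossless}, so here everything reduces to tracking which vector-space expansion sits at each interface.
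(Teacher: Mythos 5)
Your proof is correct and follows essentially the same route as the paper's: express the expanded MLP as the composition of the $(\expand_\text{zero}, \expand_\text{circ})$-lossless first layer, the element-wise activation (which commutes with circular expansion since $\expand_\text{circ}$ only copies coordinates), and the second layer, then chain via \autoref{claim:consecutive-apply}. You are in fact more careful than the paper's text, which misstates the second fully-connected layer as $(\expand_\text{zero}, \expand_\text{circ})$-lossless, whereas your $(\expand_\text{circ}, \expand_\text{avg})$ is the type actually required for the composition to yield $(\expand_\text{zero}, \expand_\text{avg})$.
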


Hence, using \autoref{prop:MHA-lossless} and \autoref{prop:MLP-lossless} along with \autoref{claim:consecutive-apply}, we obtain the following \autoref{coro:preln-mha} and \autoref{coro:preln-mlp}.

\begin{corollary}
\label{coro:preln-mha}
    The expanded Pre-LN \MHA{} module $\expando_\text{MHA}(\MHA) \circ  \expando_\text{LN}(\LN)$ is $(\expand_\text{avg}, \expand_\text{avg})$-lossless for  $\MHA \circ \LN$.
\end{corollary}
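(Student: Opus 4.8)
The plan is to obtain this corollary as an immediate instance of the consecutive-composition principle, so there is essentially no new computation to perform. First I would recall the two ingredients established just above: by the corollary following \autoref{prop:layernorm}, the LayerNorm expansion $\expando_\text{LN}$ is $(\expand_\text{avg}, \expand_\text{zero})$-lossless for $\LN(\cdot)$; and by \autoref{prop:MHA-lossless}, the attention expansion $\expando_\text{MHA}$ is $(\expand_\text{zero}, \expand_\text{avg})$-lossless for $\MHA(\cdot)$. The key observation is that the \emph{output} vector-space expansion produced by $\expando_\text{LN}$, namely $\expand_\text{zero}$, is exactly the \emph{input} vector-space expansion required by $\expando_\text{MHA}$; this matching is what makes the two expansions chainable.

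Next I would invoke \autoref{claim:consecutive-apply} with the identifications $g_1 = \LN$, $g_2 = \MHA$, $\expand_a = \expand_\text{avg}$, $\expand_b = \expand_\text{zero}$, and $\expand_c = \expand_\text{avg}$, together with $\expando_1 = \expando_\text{LN}$ and $\expando_2 = \expando_\text{MHA}$. The claim then yields directly that $\expando_\text{MHA}[\MHA] \circ \expando_\text{LN}[\LN]$ is $(\expand_\text{avg}, \expand_\text{avg})$-lossless for $\MHA \circ \LN$, which is the assertion. Concretely, unwinding the definitions: if $\x$ is average-expanded to $\x^*_\text{avg}$, then $\expando_\text{LN}[\LN](\x^*_\text{avg})$ equals the zero-expansion of $\LN(\x)$ (\autoref{prop:layernorm}), and feeding this zero-expanded vector into $\expando_\text{MHA}[\MHA]$ returns the average-expansion of $\MHA(\LN(\x))$ by \autoref{prop:MHA-lossless}.

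I do not anticipate a substantive obstacle here; the only point requiring care is the bookkeeping of which expansion type appears on the interface between the two modules, and that has already been arranged by the definitions in \autoref{sect:detail-expansion-procedure} (LayerNorm's $\bm{\beta}$ is zero-expanded and its $\bm{\mu}$ scaled by $\eta$ precisely so the output is zero-padded, matching the $\expand_\text{zero}$-input assumption used in the proof of \autoref{prop:MHA-lossless}). If anything, the one thing worth double-checking is that no final LayerNorm or residual term has been silently dropped in the module $\MHA \circ \LN$ as written — i.e., that the corollary is indeed about the bare composition of the pre-block normalization with the attention map, not the full residual block — but that is a matter of reading the statement, not of proof difficulty.
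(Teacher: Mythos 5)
Your proposal is correct and matches the paper's own proof exactly: both cite that $\expando_\text{LN}$ is $(\expand_\text{avg}, \expand_\text{zero})$-lossless and $\expando_\text{MHA}$ is $(\expand_\text{zero}, \expand_\text{avg})$-lossless, then chain them via \autoref{claim:consecutive-apply} using the matching $\expand_\text{zero}$ interface. No gaps.
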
 
\begin{proof}
    Since $\expando_\text{LN}$ is $(\expand_\text{avg}, \expand_\text{zero})$-lossless for $\LN$, and $\expando_\text{MHA}$ is $(\expand_\text{zero}, \expand_\text{avg})$-lossless for $\MHA$. The result is obtained by \autoref{claim:consecutive-apply}.
\end{proof}

% Using \autoref{claim:consecutive-apply}, we can obtain strict lossless of our expansion method.

\begin{corollary}
\label{coro:preln-mlp}
    The expanded Pre-LN MLP module $\expando_\text{MLP}(\MLP) \circ  \expando_\text{LN}(\LN)$ is $(\expand_\text{avg}, \expand_\text{avg})$-lossless for $\MLP \circ \LN$.
\end{corollary}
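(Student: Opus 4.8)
The plan is to derive this corollary as an immediate consequence of the composition rule for lossless operator expansions, exactly mirroring the proof of \autoref{coro:preln-mha}. First I would recall the two ingredients already in hand: by \autoref{prop:layernorm} (and the corollary following it), $\expando_\text{LN}$ is $(\expand_\text{avg}, \expand_\text{zero})$-lossless for $\LN$; and by \autoref{prop:MLP-lossless}, $\expando_\text{MLP}$ is $(\expand_\text{zero}, \expand_\text{avg})$-lossless for $\MLP$. The substantive content of both facts has already been established — the variance-rescaling computation that makes $\expando_\text{LN}$ lossless, and the tracking of the circular expansion through the elementwise nonlinearity that makes $\expando_\text{MLP}$ lossless — so nothing new needs to be computed here.

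The one point to check is that the intermediate vector-space expansions match: the output expansion produced by $\expando_\text{LN}$ is $\expand_\text{zero}$, which is precisely the input expansion required by $\expando_\text{MLP}$. This is exactly the hypothesis needed to chain the two, so I would then invoke \autoref{claim:consecutive-apply} with $g_1 = \LN$, $g_2 = \MLP$, $\expand_a = \expand_\text{avg}$, $\expand_b = \expand_\text{zero}$, and $\expand_c = \expand_\text{avg}$. This yields directly that $\expando_\text{MLP}(\MLP) \circ \expando_\text{LN}(\LN)$ is $(\expand_\text{avg}, \expand_\text{avg})$-lossless for $\MLP \circ \LN$, which is the claim.

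There is essentially no obstacle beyond this bookkeeping; the only thing one must be careful not to slip on is the direction of the intermediate expansion ($\expand_\text{zero}$ on both sides), and the fact that the biases of the MLP's fully-connected layers are expanded compatibly (by $\expand_\text{circ}$ then $\expand_\text{avg}$) rather than left unchanged — but that is already encoded in \autoref{prop:MLP-lossless}, so the argument is complete.
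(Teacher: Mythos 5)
Your proposal is correct and matches the paper's own argument exactly: the paper proves this corollary the same way as \autoref{coro:preln-mha}, by noting that $\expando_\text{LN}$ is $(\expand_\text{avg}, \expand_\text{zero})$-lossless for $\LN$, that $\expando_\text{MLP}$ is $(\expand_\text{zero}, \expand_\text{avg})$-lossless for $\MLP$ by \autoref{prop:MLP-lossless}, and then chaining the two via \autoref{claim:consecutive-apply}. Your attention to the matching intermediate expansion $\expand_\text{zero}$ is precisely the only hypothesis that needs checking.
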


By incorporating the residual connection, we obtain the following corollary.

\begin{corollary}
    The expanded Pre-LN modules (Pre-LN \MHA/\MLP) with residual connections are $(\expand_\text{avg}, \expand_\text{avg})$-lossless for the original Pre-LN modules with residual connections.
\end{corollary}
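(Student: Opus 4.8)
The plan is to obtain this as a direct consequence of \autoref{coro:preln-mha} and \autoref{coro:preln-mlp}, using one structural property of $\expand_\text{avg}$: it is a \emph{linear} map. Write a Pre-LN module with residual connection as $g(\x) = \x + m(\x)$, where $m = \Module \circ \LN$ and $\Module \in \{\MHA, \MLP\}$. Its expansion is $g^*(\x^*) = \x^* + m^*(\x^*)$, where $m^* = \expando_\text{Module}(\Module) \circ \expando_\text{LN}(\LN)$ is the composed expansion treated in \autoref{coro:preln-mha}/\autoref{coro:preln-mlp}, and the skip branch is left unchanged — i.e. expanded by the identity operator expansion, which is trivially $(\expand_\text{avg},\expand_\text{avg})$-lossless since the identity commutes with any vector-space expansion.

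First I would record that $\expand_\text{avg}\colon \R^{D_S} \to \R^{D_T}$ is linear: by the definition of vector average expansion it sends $\x$ to the concatenation of $\lfloor D_T/D_S\rfloor$ copies of $\x$ followed by $D_T \bmod D_S$ copies of the coordinate average $\tfrac{1}{D_S}\sum_i \x[i]$, and both duplication and averaging are linear in $\x$; hence $\expand_\text{avg}$ is represented by a fixed matrix and in particular $\expand_\text{avg}(\mathbf{a}+\mathbf{b}) = \expand_\text{avg}(\mathbf{a}) + \expand_\text{avg}(\mathbf{b})$. The main computation is then one line: fixing $\x \in \R^{D_S}$ and writing $\x^* = \expand_\text{avg}(\x)$,
\begin{align*}
g^*\big(\expand_\text{avg}(\x)\big)
&= \expand_\text{avg}(\x) + m^*\big(\expand_\text{avg}(\x)\big) \\
&= \expand_\text{avg}(\x) + \expand_\text{avg}\big(m(\x)\big) \\
&= \expand_\text{avg}\big(\x + m(\x)\big) = \expand_\text{avg}\big(g(\x)\big),
\end{align*}
where the second equality is exactly the $(\expand_\text{avg},\expand_\text{avg})$-losslessness of $m^*$ from \autoref{coro:preln-mha}/\autoref{coro:preln-mlp} and the third is additivity of $\expand_\text{avg}$. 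Since $\expand_\text{avg}$ is invertible, hence lossless, this is precisely the claim.

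I do not expect a genuine obstacle here: the statement is a corollary in the strict sense. The only point worth a sentence of care is why the two branches of the residual sum can be combined at all — the skip branch carries $\expand_\text{avg}(\x)$ and the module branch outputs $\expand_\text{avg}(m(\x))$, so both are $\expand_\text{avg}$-images of the corresponding small-model quantities, and their sum is the $\expand_\text{avg}$-image of the small-model sum precisely because $\expand_\text{avg}$ is additive. This is also the reason the earlier design forced the MHA/MLP blocks to be $(\expand_\text{zero},\expand_\text{avg})$-lossless and the preceding LayerNorm to be $(\expand_\text{avg},\expand_\text{zero})$-lossless: it makes the pre-norm sub-block $(\expand_\text{avg},\expand_\text{avg})$-lossless, so its input and output expansions coincide with the skip branch's and the residual addition closes up consistently.
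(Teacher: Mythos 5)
Your proof is correct and follows exactly the route the paper intends: the paper states this corollary without an explicit argument (``by incorporating the residual connection''), and the content you supply --- that the skip branch carries $\expand_\text{avg}(\x)$, the module branch outputs $\expand_\text{avg}(m(\x))$ by the preceding corollaries, and the two combine because $\expand_\text{avg}$ is additive --- is precisely the missing step. Your closing remark correctly identifies why the $(\expand_\text{avg},\expand_\text{zero})$ and $(\expand_\text{zero},\expand_\text{avg})$ conventions were chosen for $\LN$ and the modules, so nothing further is needed.
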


Once again using \autoref{claim:consecutive-apply}, we naturally obtain the following corollary.

\begin{corollary}
    The width-expanded Pre-LN \Transformer{} layer $\expando_\text{block}$ is $(\expand_\text{avg}, \expand_\text{avg})$-lossless for $g$.
\end{corollary}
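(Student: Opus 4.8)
The plan is to reduce the statement to a single application of Claim~\ref{claim:consecutive-apply}, using the residual-sub-block results already in hand. Recall that a Pre-LN \Transformer{} block is the composition of two residual sub-blocks,
\begin{align*}
g = g_\text{MLP} \circ g_\text{MHA}, \qquad g_\text{MHA}(\x) = \x + \MHA(\LN_1(\x)), \qquad g_\text{MLP}(\x) = \x + \MLP(\LN_2(\x)),
\end{align*}
and the width-expansion operator decomposes accordingly as $\expando_\text{block} = \expando_\text{MLP}^\text{res} \circ \expando_\text{MHA}^\text{res}$, where each $\expando_\bullet^\text{res}$ is the expansion of the corresponding residual sub-block, built from $\expando_\text{LN}$ together with $\expando_\text{MHA}$ (resp.\ $\expando_\text{MLP}$) on the residual branch and the identity map on the skip branch. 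The first bookkeeping step is simply to confirm that this is an exact description of the block, with no stray final LayerNorm folded in (that layer and the decoder are handled separately in \autoref{sect:detail-expansion-procedure}).

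Next I would invoke the corollary immediately preceding the target statement, namely that the expanded Pre-LN modules with residual connections are $(\expand_\text{avg}, \expand_\text{avg})$-lossless for the original modules with residual connections. Concretely this gives that $\expando_\text{MHA}^\text{res}$ is $(\expand_\text{avg}, \expand_\text{avg})$-lossless for $g_\text{MHA}$ and $\expando_\text{MLP}^\text{res}$ is $(\expand_\text{avg}, \expand_\text{avg})$-lossless for $g_\text{MLP}$. The one point worth spelling out is why the skip branch is harmless: since $\expand_\text{avg}$ is a linear map (coordinate repetition and averaging are both linear), $\expand_\text{avg}(\x + h(\x)) = \expand_\text{avg}(\x) + \expand_\text{avg}(h(\x))$, so feeding $\expand_\text{avg}(\x)$ through the expanded MHA sub-block yields $\expand_\text{avg}(\x) + \expand_\text{avg}(\MHA(\LN_1(\x)))$ by \autoref{coro:preln-mha}, which equals $\expand_\text{avg}(g_\text{MHA}(\x))$; the identical argument via \autoref{coro:preln-mlp} handles the MLP sub-block.

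Finally I would apply Claim~\ref{claim:consecutive-apply} with $\expand_a = \expand_b = \expand_c = \expand_\text{avg}$, taking $\expando_1 = \expando_\text{MHA}^\text{res}$, $g_1 = g_\text{MHA}$, $\expando_2 = \expando_\text{MLP}^\text{res}$, $g_2 = g_\text{MLP}$. Because the output vector-space expansion of the first sub-block ($\expand_\text{avg}$) matches the input vector-space expansion of the second ($\expand_\text{avg}$), the hypotheses of the claim are met verbatim, and the composite $\expando_\text{MLP}^\text{res}[g_\text{MLP}] \circ \expando_\text{MHA}^\text{res}[g_\text{MHA}] = \expando_\text{block}[g]$ is $(\expand_\text{avg}, \expand_\text{avg})$-lossless for $g$. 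There is essentially no hard step here — the argument is pure composition of earlier results — so the only thing to be careful about is the bookkeeping: verifying the two-fold decomposition of the block and checking that the intermediate expansion is consistently $\expand_\text{avg}$ on both sides of the cut so that Claim~\ref{claim:consecutive-apply} chains the sub-blocks exactly.
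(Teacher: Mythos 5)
Your proof is correct and follows essentially the same route as the paper, which likewise obtains this corollary by decomposing the block into the two residual sub-modules, invoking the preceding corollary that each expanded Pre-LN module with residual connection is $(\expand_\text{avg}, \expand_\text{avg})$-lossless, and chaining them with Claim~\ref{claim:consecutive-apply}. Your explicit remark that the skip branch is handled by the linearity of $\expand_\text{avg}$ is a detail the paper leaves implicit, but it is the same argument.
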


Finally, by considering the embedding layers and encoder layers, we show that LEMON is lossless.

\begin{corollary}
    LEMON introduced in \autoref{sect:detail-expansion-procedure} is $(\id, \id)$-lossless for Pre-LN \Transformers{}, i.e., strict lossless or identical.
\end{corollary}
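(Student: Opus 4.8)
The plan is to unfold the Pre-LN \Transformer{} $f_S$ into a composition of elementary operators and then propagate a single ``expansion type'' through this composition using \autoref{claim:consecutive-apply}, reducing everything to the per-module losslessness statements already proved. Concretely, I would write $f_S = \texttt{Dec} \circ \LN_\text{final} \circ g_L \circ \cdots \circ g_1 \circ \texttt{Emb}$ and, to accommodate depth expansion, insert an identity operator on $\R^{D_S}$ immediately after each block $g_i$, which leaves the function $f_S$ unchanged. It then suffices to show that LEMON is a lossless operator expansion of this composition with $\expand_\text{in} = \expand_\text{out} = \id$, because $(\id, \id)$-losslessness unwinds directly to $f_T(\x) = f_S(\x)$ for all $\x$, which is exactly strict losslessness.

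The core of the argument is a left-to-right bookkeeping of expansion types at each interface. Embedding expansion with $\expand_\text{avg}$ is $(\id, \expand_\text{avg})$-lossless, so the representation entering the first block is $\expand_\text{avg}$-expanded. The corollary that $\expando_\text{block}$ is $(\expand_\text{avg}, \expand_\text{avg})$-lossless for $g$ shows each width-expanded block preserves this type; and each inserted depth-expansion block $\mathcal{D}[\mathcal{W}[g_i]]$ --- whose residual module output is forced to $\0$ by the Type-1 or Type-2 construction of \autoref{sect:method-depth} --- computes the identity on its (already $\expand_\text{avg}$-expanded) input, hence is $(\expand_\text{avg}, \expand_\text{avg})$-lossless for $\id_{\R^{D_S}}$ (here the identity is lossless with itself as inverse, so no symmetry-breaking or norm argument is needed). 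Iterating \autoref{claim:consecutive-apply} across all $2L$ blocks gives a composition that is $(\expand_\text{avg}, \expand_\text{avg})$-lossless. Next, $\expando_\text{LN}$ applied to the final LayerNorm is $(\expand_\text{avg}, \expand_\text{zero})$-lossless, converting the type to $\expand_\text{zero}$, and $\expando_\text{dec}$ is $(\expand_\text{zero}, \id)$-lossless, closing the chain back to $\id$; a last application of \autoref{claim:consecutive-apply} yields $(\id, \id)$-losslessness of the whole expanded network. For the shared embedding/decoder variant of Pre-LN \Transformers{}, one instead rescales the pre-decoder LayerNorm by $1/\lfloor D_T/D_S\rfloor$, which realizes the same $(\expand_\text{zero}, \id)$ handoff, so the conclusion is identical.

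I expect the main obstacle to be verifying that the output expansion type produced by each stage exactly matches the input type demanded by the next --- in particular that the block-internal LayerNorm hands an $\expand_\text{avg}$-to-$\expand_\text{zero}$ conversion to the MHA/MLP submodules, that the residual addition is consistent with $\expand_\text{avg}$ on both the skip branch and the module branch, and that the final-LayerNorm-to-decoder boundary really is $\expand_\text{zero}$. None of these checks is hard individually, but the chain of type matchings is where an error would hide; apart from that, the only genuinely new element beyond the cited lemmas is treating the inserted depth-expansion blocks as identities on the widened space, which follows immediately from the zeroed-module construction.
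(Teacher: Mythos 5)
Your proposal is correct and follows essentially the same route as the paper's proof: both track the expansion type through the composition (embedding gives $\expand_\text{avg}$, each block is $(\expand_\text{avg},\expand_\text{avg})$-lossless, the final LayerNorm converts to $\expand_\text{zero}$, and the decoder closes the chain with $(\expand_\text{zero},\id)$-losslessness) via repeated application of the consecutive-composition claim. Your explicit treatment of the inserted depth-expansion blocks as $(\expand_\text{avg},\expand_\text{avg})$-lossless expansions of the identity is a slightly more careful bookkeeping step that the paper leaves implicit, but it is not a different argument.
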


\begin{proof}
    Since embeddings are average expanded, the output of Pre-LN \Transformer{} blocks are average expanded. Hence, outputs of the final $\LN$ before the encoder is zero expanded. Since the decoder layer expansion is $(\expand_\text{zero}, \id)$-lossless for $\texttt{Dec}(\cdot)$, we obtain the result that LEMON is $(\id, \id)$-lossless.
\end{proof}

% TO CONTINUE

\subsection{Proofs for \Transformers{} with RMS Norm}

% \subsubsection{LayerNorms}
\label{sect:proof-rmsnorm}

In this section, we show that $\expando_\text{RMS}$ defined in \autoref{sect: expand-rms} is lossless.

\begin{proposition}[Lossless expansion for RMS Norm $\expando_\text{RMS}$]
\label{prop:rmsnorm-lossless}
    Consider $\RMS(\cdot; \bm{\mu}, \epsilon)$ of dimension $D_S$ where $\bm{\mu} \in \R^{D_S}$. Define zero expanded of $\mathbf{x} \in \mathbb{R}^{D_S}$ of dimension $D_T$ to be $\mathbf{x}^*_\text{zero} = \expand_\text{zero}(\x) \in \mathbb{R}^{D_T}$, where $D_T \geq D_S$.
 If $\bm{\mu}_\text{rand}^*=\eta\expand_\text{rand}(\bm{\mu}) \in \R^{D_T}$, and $\epsilon^*=\eta^2\epsilon$, where $\eta=\sqrt{\lfloor D_T / D_S\rfloor * (D_S / D_T)}$, then
 \begin{align*}
    \RMS(\mathbf{x}^*_\text{zero};\bm{\mu}_\text{rand}^*, \epsilon^*) 
     =\expand_\text{zero}(\RMS(\mathbf{x};\bm{\mu},\epsilon)).
     % =\LN(\mathbf{x};\bm{\mu},\bm{\beta}, \epsilon)_\text{zero}^*.
 \end{align*}
\end{proposition}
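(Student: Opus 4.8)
The plan is to reproduce, almost verbatim, the argument used for \autoref{prop:layernorm}; the computation is short and the only structural change is that the input is zero-expanded rather than average-expanded, which if anything makes the bookkeeping easier since the padding coordinates are literally $0$. Recall from \autoref{def:rmsnorm} that $\RMS$ divides by $\sqrt{\tfrac1D\sum_i(\x[i])^2+\epsilon}$ and then rescales by $\bm{\mu}$; there is no mean-subtraction, which is exactly why $\expand_\text{zero}$ (not $\expand_\text{avg}$) is the correct input expansion here.

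\textbf{Step 1 (the normalization statistic rescales by $\eta$).} The first $\lfloor D_T/D_S\rfloor D_S$ coordinates of $\x^*_\text{zero}$ are $\lfloor D_T/D_S\rfloor$ exact copies of $\x$ and the remaining $D_T\bmod D_S$ coordinates vanish, so the mean square of $\x^*_\text{zero}$ equals that of $\x$ multiplied by $\eta^2=\lfloor D_T/D_S\rfloor D_S/D_T$:
\[
\frac{1}{D_T}\sum_{i=1}^{D_T}\big(\x^*_\text{zero}[i]\big)^2
=\frac{\lfloor D_T/D_S\rfloor D_S}{D_T}\cdot\frac{1}{D_S}\sum_{i=1}^{D_S}\big(\x[i]\big)^2
=\eta^2\,\frac{1}{D_S}\sum_{i=1}^{D_S}\big(\x[i]\big)^2 .
\]
Hence the denominator inside $\RMS(\x^*_\text{zero};\bm{\mu}_\text{rand}^*,\epsilon^*)$ is $\sqrt{\eta^2\big(\tfrac1{D_S}\sum_i(\x[i])^2\big)+\eta^2\epsilon}=\eta\sqrt{\tfrac1{D_S}\sum_i(\x[i])^2+\epsilon}$, i.e.\ exactly $\eta$ times the denominator of the original $\RMS(\x;\bm{\mu},\epsilon)$. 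This pins down why we take $\epsilon^*=\eta^2\epsilon$: it lets $\epsilon$ be pulled through the square root together with the second moment.

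\textbf{Step 2 (coordinate-wise check).} Split $[D_T]$ into the copy block $1\le i\le\lfloor D_T/D_S\rfloor D_S$ and the padding block $\lfloor D_T/D_S\rfloor D_S<i\le D_T$. On the copy block $\x^*_\text{zero}[i]=\x[i\bmod D_S]$ and $\bm{\mu}_\text{rand}^*[i]=\eta\,\bm{\mu}[i\bmod D_S]$, so the factor $\eta$ in $\bm{\mu}_\text{rand}^*$ cancels the factor $\eta$ from Step 1 and the $i$-th output entry equals $\RMS(\x;\bm{\mu},\epsilon)[i\bmod D_S]$. On the padding block $\x^*_\text{zero}[i]=0$, so the $i$-th output entry is $0$ irrespective of the (arbitrary) value of $\bm{\mu}_\text{rand}^*[i]$, which matches $\expand_\text{zero}(\RMS(\x;\bm{\mu},\epsilon))[i]=0$. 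Concatenating the two blocks yields $\RMS(\x^*_\text{zero};\bm{\mu}_\text{rand}^*,\epsilon^*)=\expand_\text{zero}(\RMS(\x;\bm{\mu},\epsilon))$, and the $(\expand_\text{zero},\expand_\text{zero})$-losslessness claimed in \autoref{sect: expand-rms} follows immediately because $\expand_\text{zero}$ is invertible (by the earlier claim on vector expansions).

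\textbf{Main obstacle.} There is essentially none: this is a routine verification. The single point requiring care is the indexing shorthand ``$i\bmod D_S$'', which must be read as $D_S$ rather than $0$ at multiples of $D_S$, exactly as in the proof of \autoref{prop:layernorm}. The only conceptual remark worth recording is the parallel with LayerNorm noted above: zero-padding preserves the quantity RMSNorm normalizes by (the raw second moment, up to the factor $\lfloor D_T/D_S\rfloor D_S/D_T=\eta^2$), so the normalizer rescales by a single clean $\eta$, which is absorbed by the $\eta$ baked into the expanded scale vector $\bm{\mu}_\text{rand}^*$.
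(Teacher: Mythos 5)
Your proof is correct and follows essentially the same route as the paper's: compute that zero-padding scales the mean-square statistic by $\eta^2$ so the RMS denominator picks up a factor of $\eta$, which is cancelled by the $\eta$ in $\bm{\mu}_\text{rand}^*$ on the copied coordinates, while the padded coordinates output $0$ because the numerator vanishes. The only difference is organizational — you factor out the denominator computation as a separate step rather than inlining it in each case — which does not change the argument.
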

\begin{proof}
 % Since $\E[\x^*_\text{avg}] = \frac{1}{D_T} \sum_i \x_\text{avg}^*[i] = \frac{1}{D_T}\left( {\lfloor D_T / D_S\rfloor}\sum_i^{D_S}\x[i] + (D_T \bmod D_S) \E[x]\right) = \E[x]$ and $\Var[\x^*_\text{avg}] =\frac{1}{D_T}\left( {\lfloor D_T / D_S\rfloor}D_S\Var[\x] + (D_T \bmod D_S) * 0 \right)=\eta^2\Var[\x]$,
\begin{itemize}
    \item For $1\leq i \leq {\lfloor D_T / D_S\rfloor}D_S$:
    \begin{align*}
     \RMS(\mathbf{x}^*_\text{zero};\bm{\mu}_\text{rand}^*, \epsilon^*)[i] 
     &=  \frac{\x_\text{zero}^*[i]}{\sqrt{\frac{1}{D_T}\sum_{i=1}^{D_T}(\mathbf{x}^*_\text{zero})^2+\epsilon^*}} \odot  \bm{\mu}_\text{rand}^*[i]  \\
     &=  \frac{\x[i \bmod D_S]}{\sqrt{\frac{D_S \lfloor D_T/D_S \rfloor}{D_T} \frac{1}{D_S}\sum_{i=1}^{D_S} (\x[i])^2+\eta^2\epsilon}} \odot \eta \bm{\mu}[i \bmod D_S]  \\
     &=  \frac{\x[i \bmod D_S]}{\eta\sqrt{\frac{1}{D_S}\sum_{i=1}^{D_S} (\x[i])^2+\epsilon}} \odot \eta \bm{\mu}[i \bmod D_S]  \\
     % &=\LN(\mathbf{x};\bm{\mu},\bm{\beta}, \epsilon)_\text{zero}^*[i].
     &= \expand_\text{zero}(\RMS(\mathbf{x};\bm{\mu},\epsilon))[i].
 \end{align*}

     \item For ${\lfloor D_T / D_S\rfloor}D_S\leq i \leq D_T$:
    \begin{align*}
     \RMS(\mathbf{x}^*_\text{zero};\bm{\mu}_\text{rand}^*, \epsilon^*)[i]
     &=  \frac{\x_\text{zero}^*[i]}{\sqrt{\frac{1}{D_T}\sum_{i=1}^{D_T}(\mathbf{x}^*_\text{zero})^2+\epsilon^*}} \odot  \bm{\mu}_\text{rand}^*[i]  \\
     &=  \frac{0}{\sqrt{\frac{1}{D_T}\sum_{i=1}^{D_T}(\mathbf{x}^*_\text{zero})^2+\epsilon^*}} \odot  \bm{\mu}_\text{rand}^*[i]  \\
     &=0 \\ 
     % &=\LN(\mathbf{x};\bm{\mu},\bm{\beta}, \epsilon)_\text{zero}^*[i].
     &=\expand_\text{zero}(\RMS(\mathbf{x};\bm{\mu},\epsilon))[i].
 \end{align*}
\end{itemize}
Hence, $\RMS(\mathbf{x}^*_\text{zero};\bm{\mu}_\text{rand}^*, \epsilon^*) 
     =\expand_\text{zero}(\RMS(\mathbf{x};\bm{\mu},\epsilon)).$
\end{proof}

\autoref{prop:rmsnorm-lossless} naturally leads to the following corollary.

\begin{corollary}
     $\expando_\text{RMS}$ introduced in \autoref{sect: expand-rms}  is $(\expand_\text{zero}, \expand_\text{zero})$-lossless for $\RMS(\cdot)$.
     % \concat{\LN(\mathbf{x};\bm{\mu},\bm{\beta}, \epsilon), \0_{D_T\bmod D_S}}.
     % \LN(\mathbf{x};\bm{\mu},\bm{\beta}, \epsilon)_\text{zero}^*.
\end{corollary}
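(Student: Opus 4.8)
The plan is to obtain the corollary as an immediate consequence of Proposition~\ref{prop:rmsnorm-lossless} together with the definition of a lossless operator expansion. Recall that $\expando_\text{RMS}$ being $(\expand_\text{in},\expand_\text{out})$-lossless for $\RMS(\cdot)$ means precisely that there exist lossless vector-space expansions $\expand_\text{in}\colon \R^{D_S}\to\R^{D_T}$ and $\expand_\text{out}\colon \R^{D_S}\to\R^{D_T}$ with $\expand_\text{out}(\RMS(\x)) = \expando_\text{RMS}[\RMS](\expand_\text{in}(\x))$ for every $\x$. So the task reduces to exhibiting such a pair and verifying the identity.

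First I would take both $\expand_\text{in}$ and $\expand_\text{out}$ to be the vector zero expansion $\expand_\text{zero}$; by the earlier claim that vector zero expansion is lossless (its inverse being $\x\mapsto\x[:D_S]$), this is a legitimate choice. Then I would invoke Proposition~\ref{prop:rmsnorm-lossless}: with the parameter choices specified in the definition of $\expando_\text{RMS}$, namely $\bm{\mu}^*_\text{rand}=\eta\expand_\text{rand}(\bm{\mu})$, $\epsilon^*=\eta^2\epsilon$, and $\eta=\sqrt{\lfloor D_T/D_S\rfloor\cdot(D_S/D_T)}$, the proposition yields
\[
\RMS\bigl(\expand_\text{zero}(\x);\bm{\mu}^*_\text{rand},\epsilon^*\bigr) = \expand_\text{zero}\bigl(\RMS(\x;\bm{\mu},\epsilon)\bigr).
\]
Since $\RMS(\cdot;\bm{\mu}^*_\text{rand},\epsilon^*)$ is by construction $\expando_\text{RMS}[\RMS]$, the displayed equation is exactly the defining identity of $(\expand_\text{zero},\expand_\text{zero})$-losslessness, and the corollary follows.

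I do not expect any real obstacle, since the mathematical content has already been discharged in Proposition~\ref{prop:rmsnorm-lossless}; the corollary is merely a translation of that computation into the operator-expansion vocabulary. The one point requiring a little care is bookkeeping of the scalar $\eta$: the body text's description of $\expando_\text{RMS}$ writes $\eta=\lfloor D_T/D_S\rfloor\cdot(D_S/D_T)$ whereas the proposition uses $\eta=\sqrt{\lfloor D_T/D_S\rfloor\cdot(D_S/D_T)}$, so I would reconcile these conventions (the square-root version is the correct one, since $\eta$ must rescale the standard deviation in the denominator), but this is notational rather than substantive.
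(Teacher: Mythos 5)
Your proposal is correct and matches the paper's route exactly: the paper derives this corollary as an immediate restatement of Proposition~\ref{prop:rmsnorm-lossless} in the operator-expansion vocabulary, offering no further argument. Your side remark about the missing square root in the body text's definition of $\eta$ is also right — the $\eta=\sqrt{\lfloor D_T/D_S\rfloor\cdot(D_S/D_T)}$ convention used in the proposition is the one that makes the computation go through.
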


% \textbf{MLP expansion.} We use the exact same expansion introduced in \autoref{sect:detail-expansion-procedure}, where it is $(\expand_\text{zero}, \expand_\text{avg})$-lossless.

% \textbf{Decoder layers.} For vision models, the decoder layer is simply a fully-connected layer. However, for language models, encoder weights are shared with the word embeddings. To make model expansion lossless, we need to scale the weights (and bias) of the previous layer, which is usually a LayerNorm layer.

% \textbf{LayerNorm layers.} For the divisible case, LayerNorm is not a problem for Pre-LN transformer architecture if we copy each neuron the same number of times. Specifically, it is easy to show that $\LN(\concat{\mathbf{x}, \cdots, \mathbf{x}})=\concat{\LN(\mathbf{x}), \cdots, \LN(\mathbf{x})}$.

\section{More ablation studies}
\label{sect:other-ablation}
\subsection{Comparison with \ligo}
\label{sect:ligo}
\ligo{} \citep{LIGO} is unavailable for direct comparison due to the absence of open-source code. Hence, we compare them with their reported values. Note that our method is lossless only for Pre-LN \Transformer{} architecture while \ligo{} reports their results for language models mainly on Post-LN \bert{} and RoBerTa. As a consequence, we compare our results with \ligo{} on \deit{}$(12,384)$ (\deit{}-Small) $\rightarrow$ \deit{}$(12,768)$ (\deit{}-Base).\footnote{Note that DeiT without distillation is exactly ViT.} The result is shown in \autoref{fig:VIT_LIGO}.

Our method is able to recover the performance of the target model with 85 epochs, leading to a 71.67\% computational saving. It is higher than the reported value of 55.40\% for \ligo{}.\footnote{Note that DeiT-Base (ViT-Base) has a final validation accuracy of 81.00\% for \ligo{}, which is lower than the $\sim$ 81.70\% reported value of the official DeiT and our implementation.}
\begin{figure}
\vskip 0.2in
\begin{center}
\centerline{\includegraphics[width=0.39\columnwidth]{}}
\caption{We expand \deit{}$(12,384)$ to \deit{}$(12,768)$. Our expanded model recovers the performance of the target model with 85 epochs (28.3\% compared to training from scratch).}
\label{fig:VIT_LIGO}
\end{center}
\vskip -1cm
\end{figure}

\section{More related works}
\label{sect:more-related-works}
Efficiency in deep learning can be achieved in multiple ways. In this section we provide a brief overview of efficient deep learning regarding model training and inference, distinguishing it from methods addressing data efficiency \citep{gong2021keepaugment, wu2023hallucination, wu2023genco}.

\textbf{Efficient deep learning.} In the realm of deep learning, the drive for efficiency has led researchers to develop a multitude of methods aimed at optimizing model efficiency. Techniques such as neural architecture search (NAS) \citep{zoph2016neural, liu2018darts} have been employed to automate the discovery of optimal network architecture. Quantization \citep{ rastegari2016xnor, hubara2017quantized} refines the numeric precision of model parameters to boost computational speed. Knowledge distillation \citep{hinton2015distilling} and knowledge inheritance \citep{, KI} allow target models to inherit the knowledge of their source counterparts. Neural network pruning \citep{OBD} involves removing unnecessary connections to accelerate model training or inference. Finally, model growth methods \citep{chen2015net2net} directly use the weights of source models to initialize the large target models.

\textbf{Neural architecture search (NAS)} has emerged as a promising solution for automating the process of neural architecture design, eliminating the need for labor-intensive manual designs across various deep learning tasks. Initial methodologies leveraged reinforcement learning \citep{zoph2016neural, baker2016designing} and evolutionary algorithms \citep{real2019aging} to identify high-performing architectures. Despite their success, a significant drawback was their computational demands. Addressing this, DARTS \citep{liu2018darts} introduced a continuous relaxation of architectural representation, allowing for search via gradient descent. However, DARTS can be challenging to optimize, and its weight-sharing approach has been criticized for potential performance degradation \citep{yu2019evaluating, wang2020rethinking}. Seeking further efficiency, Mellor et al. \citep{mellor2021neural} introduced a training-free NAS, which evaluates randomly initialized architectures, thus fully eliminating neural network training during the search phase. Subsequent training-free methods explored searches using Neural Tangent Kernel (NTK) \citep{KNAS, TE-NAS, wang2022global}, linear regions \citep{TE-NAS}, and criteria related to pruning \citep{zeroproxy}.

When considered alongside model expansion, NAS holds potential for determining the optimal number of layers and hidden dimension of the large target model. 

% While the fusion of NAS with few-shot learning through meta-learning algorithms has been explored to some extent – with works like \cite{auto-meta} applying progressive neural architecture search and others like \cite{TNAS, metaNAS} adopting DARTS-based variations – these endeavors have been computationally taxing. For instance, studies like \cite{auto-meta,TNAS} required over 100 GPU days, and \cite{metaNAS} consumed more than a GPU week. The challenge of integrating NAS into few-shot learning efficiently remains an open avenue for exploration.

\textbf{Neural network pruning.} Pruning techniques can be broadly classified based on their timing into three categories: post-hoc pruning, pruning-at-initialization methods, and pruning-during-training methods.  (1) Post-hoc pruning method removes certain weights of a fully-trained neural network. Post-hoc pruning was initially proposed to accelerate model inference \citep{OBD, OBS,han2015learning}, while lottery ticket works \citep{LTH, LTR} shifted towards uncovering trainable sub-networks. (2) SNIP \citep{snip} is one of the pioneering works of pruning-at-initialization methods that aim to find trainable sub-networks without any training. Subsequent research \citep{grasp, synflow, force, lee2019signal, wang2022ntk}  introduced varying metrics for pruning at the network initialization stage. (3) Finally, pruning-during-training methods prune or adjust DNNs throughout training. Early works incorporate explicit $\ell_0$ \citep{louizos2017learning} or $\ell_1$ \citep{wen2016learning} regularization terms to encourage sparsity, hence mitigating performance degradation commonly associated with post-hoc pruning. More recent techniques like DST methods \citep{bellec2017deep, SET, evci2020rigging, ITOP, wang2023double} allow for adaptive mask modifications during training while adhering to specified parameter constraints.

Neural network pruning has potential synergies with model expansion, akin to the dynamics of DST. A combined approach could involve iterative increases and decreases in hidden dimensions or layers during training, potentially accelerating training speed.

\end{document}